\documentclass{article}
\usepackage{microtype}
\usepackage{graphicx}
\usepackage{subcaption}
\usepackage{booktabs}
\usepackage{hyperref}
\usepackage[table, xcdraw]{xcolor}
\usepackage{wrapfig}
\usepackage{xurl}
\usepackage{algorithm}
\usepackage{algorithmic}

\usepackage{amsmath,amsthm,amssymb,bm}
\usepackage{amsfonts}
\usepackage{thmtools} 
\usepackage{bbm}
\usepackage{lipsum}
\usepackage{nicefrac}

\theoremstyle{plain}  
\newtheorem{theorem}{Theorem}[section]  
\newtheorem{lemma}[theorem]{Lemma}
\newtheorem{proposition}[theorem]{Proposition}
\newtheorem{corollary}[theorem]{Corollary}
\newtheorem{definition}[theorem]{Definition}

\newtheoremstyle{remarkstyle}
  {} 
  {} 
  {} 
  {} 
  {\bfseries} 
  {.} 
  {.5em} 
  {} 
  
\theoremstyle{remarkstyle} \newtheorem*{remark}{Remark}

\definecolor{mygray}{gray}{0.95}
\definecolor{mygray}{gray}{0.95}
\newcommand{\graybox}[1]{%
\begingroup
\setlength{\fboxsep}{0pt}%
\colorbox{mygray} {
\begin{minipage}{\linewidth}
\vspace{-0.5em}%
{#1}%
\end{minipage}%
}
\endgroup
}
\usepackage{color,soul}
\colorlet{mygreen}{green!55!black}
\definecolor{myyellow1}{HTML}{D89A3C}
\colorlet{myyellow}{myyellow1!90!black}
\colorlet{metablue}{blue!60!green}
\definecolor{nicerblue}{HTML}{417481} 

\usepackage{empheq}

    {%
        \end{gathered}\end{equation}
    }

\def\1{\bm{1}}

\DeclareMathAlphabet{\mathsfit}{\encodingdefault}{\sfdefault}{m}{sl}
\SetMathAlphabet{\mathsfit}{bold}{\encodingdefault}{\sfdefault}{bx}{n}

\usepackage{xspace}

\usepackage[normalem]{ulem}
\usepackage{multirow}
\usepackage{colortbl}      %

\definecolor{RoyalBlue}{rgb}{0.25, 0.41, 0.88}

\usepackage{pifont}
\usepackage{ifsym}
\colorlet{myred}{red!85!black}

\usepackage{enumitem} %
\usepackage[capitalise]{cleveref}

\usepackage[preprint]{icml2026}

\usepackage{amsmath}
\usepackage{amssymb}
\usepackage{mathtools}
\usepackage{amsthm}
\usepackage{empheq}
\usepackage{tcolorbox}

\crefname{equation}{}{}
\crefname{figure}{Fig.}{Figs.}
\crefname{section}{Sec.}{Secs.}
\crefname{appendix}{App.}{Apps.}
\crefname{table}{Tab.}{Tabs.}
\crefname{theorem}{Thm.}{Thms.}
\crefname{lemma}{Lem.}{Lems.}
\crefname{assumption}{Assump.}{Assumps.}
\crefname{proposition}{Prop.}{Props.}
\crefname{corollary}{Cor.}{Cors.}
\crefname{definition}{Def.}{Defs.}
\crefname{remark}{Rmk.}{Rmks.}
\crefname{algocf}{Alg.}{Algs.}
\crefname{algorithm}{Alg.}{Algs.}

\newenvironment{sketchofproof}{%
  \renewcommand{\qedsymbol}{}
  \begin{proof}[Sketch of proof]%
}{%
  \end{proof}%
}

\usepackage{color,soul}

\newcommand{\tp}{^\mathrm{T}}

\newcommand{\cE}{\mathcal{E}}

\newcommand{\cN}{\mathcal{N}}
\newcommand{\cX}{\mathcal{X}}
\renewcommand{\d}{\mathrm{d}}
\newcommand{\de}[2]{\frac{\mathrm{d}#1}{\mathrm{d}#2}} %

\newcommand{\mask}{\mathbf{M}}

\newcommand{\punif}{p_\mathrm{unif}}

\newcommand{\um}{\mathrm{UM}} %
\newcommand{\e}{\mathrm{e}}
\newcommand{\Z}{\mathbb{Z}}
\newcommand{\R}{\mathbb{R}}
\newcommand{\E}{\mathbb{E}}
\newcommand{\KL}{\mathrm{KL}}

\DeclareMathOperator*{\argmin}{argmin}

\newcommand{\unif}{\operatorname{Unif}}

\newcommand{\xb}{\overline{x}}

\newcommand{\gammab}{\overline{\gamma}}
\newcommand{\varphih}{\widehat{\varphi}}
\newcommand{\varPhih}{\widehat{\varPhi}}

\newcommand{\sigmat}{\widetilde{\sigma}}
\newcommand{\const}{\mathrm{const}}
\newcommand{\ro}[1]{\left(#1\right)}
\newcommand{\sq}[1]{\left[#1\right]}

\newcommand{\be}{\bm{e}}
\newcommand{\bR}{\bm{R}}
\newcommand{\bp}{\bm{p}}
\newcommand{\bI}{\bm{I}}
\newcommand{\ur}{u^\to}
\renewcommand{\ul}{u^\gets}
\newcommand{\one}{\mathbf{1}}
\renewcommand{\dh}{d_\mathrm{H}} %
\newcommand{\sg}{\operatorname{sg}} %

\newcommand{\tint}{{\textstyle\int\nolimits}}

\newcommand{\betah}{\beta_\mathrm{high}}
\newcommand{\betac}{\beta_\mathrm{critical}}
\newcommand{\betal}{\beta_\mathrm{low}}

\definecolor{colorcyc}{HTML}{cc0c3c}
\definecolor{colordm}{HTML}{01847F}
\definecolor{colortm}{HTML}{920872}

\newcommand{\colorcyc}[1]{{\color{colorcyc}#1}}
\newcommand{\colordm}[1]{{\color{colordm}#1}}
\newcommand{\colortm}[1]{{\color{colortm}#1}}

\icmltitlerunning{Discrete Adjoint Schrödinger Bridge Sampler}

\begin{document}
\onecolumn

  \icmltitle{Discrete Adjoint Schrödinger Bridge Sampler}

  \icmlsetsymbol{equal}{*}
  \icmlsetsymbol{equalsecond}{$\ddagger$}
  \icmlsetsymbol{equaladvising}{$\dagger$}

  \begin{icmlauthorlist}
    \icmlauthor{Wei Guo}{gt,equal}
    \icmlauthor{Yuchen Zhu}{gt,equalsecond}
    \icmlauthor{Xiaochen Du}{mit,equalsecond}
    \icmlauthor{Juno Nam}{mit,equalsecond}
    \icmlauthor{Yongxin Chen}{gt,equaladvising}
    \icmlauthor{Rafael Gómez-Bombarelli}{mit,equaladvising}
    \icmlauthor{Guan-Horng Liu}{fair,equaladvising}
    \icmlauthor{Molei Tao}{gt,equaladvising}
    \icmlauthor{Jaemoo Choi}{gt,equal}
  \end{icmlauthorlist}

  \icmlaffiliation{gt}{Georgia Institute of Technology}
  \icmlaffiliation{mit}{Massachusetts Institute of Technology}
  \icmlaffiliation{fair}{FAIR at Meta}

  \icmlcorrespondingauthor{Wei Guo}{wei.guo@gatech.edu}
  \icmlcorrespondingauthor{Jaemoo Choi}{jchoi843@gatech.edu}

  \icmlkeywords{Machine Learning, ICML}

  \vskip 0.3in

\printAffiliationsAndNotice{
\icmlEqualContribution
\icmlEqualContributionSecond
\icmlEqualAdvising}

\begin{abstract}
    Learning discrete neural samplers is challenging due to the lack of gradients and combinatorial complexity. While stochastic optimal control (SOC) and Schrödinger bridge (SB) provide principled solutions, efficient SOC solvers like adjoint matching (AM), which excel in continuous domains, remain unexplored for discrete spaces. We bridge this gap by revealing that the core mechanism of AM is \textit{state-space agnostic}, and introduce \textbf{discrete ASBS}, a unified framework that extends AM and adjoint Schrödinger bridge sampler (ASBS) to discrete spaces. Theoretically, we analyze the optimality conditions of the discrete SB problem and its connection to SOC, identifying a necessary cyclic group structure on the state space to enable this extension. Empirically, discrete ASBS achieves competitive sample quality with significant advantages in training efficiency and scalability.

\end{abstract}

\section{Introduction}
\label{sec:intro}

Sampling from unnormalized distributions is a fundamental problem across computational statistics \citep{liu2008monte,brooks2011handbook}, Bayesian inference \citep{gelman2013bayesian}, and statistical mechanics \citep{landau2014a}. The goal is to draw samples from a target distribution $\nu$ on a state space $\cX$, where $\nu(x)\propto\e^{-\beta E(x)}$ is specified through an energy function $E:\cX\to\R$ and inverse temperature $\beta>0$.
In high-dimensional settings with complex energy landscapes, traditional methods such as Markov chain Monte Carlo (MCMC) suffer from slow mixing and poor scalability.
Recent advances in \textbf{neural samplers} address these challenges by parameterizing sampling dynamics with deep neural networks, enabling efficient learning even without i.i.d. samples from the target. Crucially, these methods provide amortized inference, replacing costly MCMC iterations with rapid generation from a pretrained model.

For continuous state space $\cX=\R^D$, neural samplers have achieved remarkable success through diverse methodologies \citep{he2025no,sanokowski2025rethinking}, including sequential Monte Carlo \citep{phillips2024particle}, escorted transport \citep{vargas2024transport,albergo2025nets,chen2025sequential,blessing2025underdamped,du2025feat}, stochastic optimal control (SOC, \citet{zhang2022path,vargas2023denoising,richter2024improved}), parallel tempering \citep{rissanen2025progressive,akhoundsadegh2025progressive,zhang2026accelerated}, etc. Among SOC approaches, \textbf{adjoint matching} (\textbf{AM}, \citet{domingoenrich2025adjoint,havens2025adjoint}) has emerged as a powerful solver, enabling the framework of \textbf{adjoint Schrödinger bridge sampler} (\textbf{ASBS}, \citet{liu2025adjoint}), which offers fast convergence, scalability, and flexibility in reference dynamics. However, AM relies fundamentally on continuous calculus, making its extension to discrete state spaces highly nontrivial.

Motivated by these successes, recent work has explored discrete neural samplers based on continuous-time Markov chains (CTMCs), proposing similar methods based on escorted transport \citep{holderrieth2025leaps,ou2025discrete} and SOC formulations \citep{zhu2025mdns,guo2026proximal}. Despite this progress, the Schrödinger Bridge (SB) problem -- a distributionally constrained optimization framework central to optimal transport \citep{leonard2014a,chen2016relation,chen2021stochastic} and continuous diffusion models \citep{chen2021stochastic,de2021diffusion,chen2022likelihood,liu2022deep,shi2023diffusion} -- remains underdeveloped for discrete spaces \citep{ksenofontov2025categorical,kim2025discrete}. The theoretical connection between discrete SB and SOC is not fully established, and no discrete SOC solver achieves the efficiency of continuous AM. While a recent work \citet{so2026discrete} attempted to adapt AM to masked discrete diffusion, their approach yields complex training objectives, leaving a principled and efficient discrete AM framework an open challenge.

In this work, we develop a unified theoretical framework for SB and SOC on discrete state spaces, introducing the \textbf{discrete adjoint Schrödinger bridge sampler (DASBS)}. We derive optimality conditions for the discrete SB problem that structurally mirror the continuous setting, and extend AM to discrete domains through novel controller and corrector learning objectives. Our approach reveals that a group structure on the state space and uniform reference dynamics are essential -- requirements that parallel the additive noise assumption in continuous AM. Our \textbf{key contributions} are:

\textbf{I. Unified Discrete Framework}: We formalize discrete neural sampling as an SB problem for CTMCs and establish its equivalent SOC formulation.

\textbf{II. Generalizing Adjoint Matching}:
We identify state-space agnostic principles underlying AM, providing conceptual clarity for extensions beyond continuous spaces.

\textbf{III. Discrete ASBS}: We introduce discrete ASBS, a principled algorithm that alternates between adjoint and corrector matching, derived from variational characterizations of the learning objectives.

\textbf{IV. Empirical Validation}: We validate DASBS on high-dimensional discrete benchmarks, demonstrating competitive performance and efficiency.

\begin{table*}[t]
\centering
\caption{
Conceptual comparison between \colordm{denoising matching} and \colortm{adjoint matching} in continuous and discrete state spaces.
\colordm{Denoising matching} holds for general reference dynamics while \colortm{adjoint matching} requires \fbox{additive} noise. 
}
\label{tab:comparison_dm_tm}
\resizebox{\linewidth}{!}{
\begin{tabular}{ccc}
\toprule
 &
  Continuous state space $\cX=\R^D$ &
  Discrete state space $\cX=\Z_N^D$ \\ \midrule
Ref. dyn. $p^r$ &
  $\d X_t=b_t(X_t)\d t+\sigma_t\d W_t$, $X_0\sim\mu$ &
  CTMC $(X_t)$ with transition rate $(r_t)$, $X_0\sim\mu$ \\
Ctrl. dyn. $p^u$ &
  $\d X_t=(b_t+\sigma_tu_t)(X_t)\d t+\sigma_t\d W_t$, $X_0\sim\mu$ &
  CTMC $(X_t)$ with transition rate $(u_t)$, $X_0\sim\mu$ \\ \midrule
SB Prob. &
  $\min\limits_{u~\text{s.t.}~X_1\sim\nu}\underset{X\sim p^u}{\E}\Big[\int\limits_0^1\frac12\|u_t(X_t)\|^2\d t\Big]$ &
  $\min\limits_{u~\text{s.t.}~X_1\sim\nu}\underset{X\sim p^u}{\E}\Big[\int\limits_0^1\sum\limits_{y\ne X_t}\big(u_t\log\frac{u_t}{r_t}+r_t-u_t\big)(y,X_t)\d t\Big]$ \\
SOC Prob. &
  $\min\limits_{u}\underset{X\sim p^u}{\E}\Big[\int\limits_0^1\frac12\|u_t(X_t)\|^2\d t+\log\frac{\varphih_1}{\nu}(X_1)\Big]$ &
  $\min\limits_{u}\underset{X\sim p^u}{\E}\Big[\int\limits_0^1\sum\limits_{y\ne X_t}\big(u_t\log\frac{u_t}{r_t}+r_t-u_t\big)(y,X_t)\d t+\log\frac{\varphih_1}{\nu}(X_1)\Big]$ \\
 \midrule
Opt. Ctrl. &
  $u^\star_t(x)=\sigma_t\nabla\log\varphi_t(x)$ &
  $u^\star_t(y,x)=r_t(y,x)\frac{\varphi_t(y)}{\varphi_t(x)}$ \\
Corrector & $\nabla\log\varphih_1(x)$ &
  $\frac{\varphih_1(y)}{\varphih_1(x)}$ \\ \midrule
\begin{tabular}[c]{@{}c@{}}Example of\\ \fbox{Additive} Noise\end{tabular} &
  \begin{tabular}[c]{@{}c@{}}$b_t\equiv0\Rightarrow\boxed{p^r_{1|t}(x_1|x)=q_t(x_1-x)}$\\ $q_t(\varepsilon)=\cN(\varepsilon;0,\sigmat_t^2I)$\end{tabular} &
  \begin{tabular}[c]{@{}c@{}}$r_t(y,x)=\frac{\gamma_t}{N}1_{\dh(x,y)=1}\Rightarrow\boxed{p^r_{1|t}(x_1|x)=q_t(x_1-x)}$\\ $q_t(\varepsilon)=A(t,1)^{\dh(\varepsilon,0)}B(t,1)^{D-\dh(\varepsilon,0)}$\end{tabular} \\ \midrule
\begin{tabular}[c]{@{}c@{}}\textbf{\colordm{Denoising}} \\ \textbf{\colordm{Matching}}\end{tabular} &
  \begin{tabular}[c]{@{}c@{}}
  $\nabla\log\varphi_t(x)=\E_{p^\star_{1|t}(x_1|x)}\colordm{\nabla_x\log p^r_{1|t}(x_1|x)}$ \refstepcounter{equation}(\theequation)\label{eq:cts_ctrl_dm}
  \\
  $\nabla\log\varphih_1(x)=\E_{p^\star_{0|1}(x_0|x)}\colordm{\nabla_x\log p^r_{1|0}(x|x_0)}$ \refstepcounter{equation}(\theequation)\label{eq:cts_corr_dm}
  \end{tabular}
  &
  \begin{tabular}[c]{@{}c@{}}
  $\frac{\varphi_t(y)}{\varphi_t(x)}=\E_{p^\star_{1|t}(x_1|x)}\colordm{\frac{p^r_{1|t}(x_1|y)}{p^r_{1|t}(x_1|x)}}$ \\
  $\frac{\varphih_1(y)}{\varphih_1(x)}=\E_{p^\star_{0|1}(x_0|x)}\colordm{\frac{p^r_{1|0}(y|x_0)}{p^r_{1|0}(x|x_0)}}$
  \end{tabular}
  \\ \midrule
\begin{tabular}[c]{@{}c@{}}\textbf{\colortm{Adjoint}}\\ \textbf{\colortm{Matching}}\end{tabular} &
  \begin{tabular}[c]{@{}c@{}}
  $\nabla\log\varphi_t(x)=\E_{p^\star_{1|t}(x_1|x)}\colortm{\nabla\log\varphi_1(x_1)}$ \refstepcounter{equation}(\theequation)\label{eq:cts_ctrl_tm}\\
  $\nabla\log\varphih_1(x)=\E_{p^\star_{0|1}(x_0|x)}\colortm{\nabla\log\varphih_0(x_0)}$ \refstepcounter{equation}(\theequation)\label{eq:cts_corr_tm}
  \end{tabular}
  &
\begin{tabular}[c]{@{}c@{}}
  $\frac{\varphi_t(y)}{\varphi_t(x)}=\E_{p^\star_{1|t}(x_1|x)}\colortm{\frac{\varphi_1(y+x_1-x)}{\varphi_1(x_1)}}$ \\
  $\frac{\varphih_1(y)}{\varphih_1(x)}=\E_{p^\star_{0|1}(x_0|x)}\colortm{\frac{\varphih_0(y+x_0-x)}{\varphih_0(x_0)}}$
  \end{tabular} \\ \bottomrule
\end{tabular}
}
\end{table*}

\section{Preliminaries}
\label{sec:prelim}
\paragraph{Diffusion Samplers on Continuous State Space $\cX=\R^D$}
MCMC sampling based on equilibrium dynamics, e.g., Langevin Monte Carlo, is typically slow to mix when the target distribution is complex. Recent progress in non-equilibrium measure transport for generative modeling such as diffusion models \citep{song2021scorebased} has motivated sampling methods based on controlled stochastic differential equations (SDEs), commonly referred to as diffusion neural samplers. The sampling dynamics are described by SDE:
\begin{align}\label{eq:sde}
    p^u:~
    \d X_t = (b_t + \sigma_t u_t )(X_t) \d t + \sigma_t \d W_t,~X_0 \sim \mu, 
\end{align}
where $b: [0,1]\times\cX \rightarrow \cX$ is the base drift, $\sigma : [0,1] \rightarrow \R_{>0}$ is the noise schedule, and $\mu$ is the initial source distribution.
Given $(b_\cdot, \sigma_\cdot, \mu)$, the goal is to learn a parameterized control $u:[0,1]\times\cX\to\cX$ such that the marginal distribution of $X_1$ matches the target distribution $\nu$.
We use $p^u$ to denote the \textbf{path measure} induced by this SDE with control $u$. Formally, $p^u(X_{[0,1]})$ can be viewed as the limit of the joint distribution of $(X_{t_0},X_{t_1},...,X_{t_K})$ as the partition $0=t_0<t_1<...<t_K=1$ becomes finer and finer, and a rigorous definition is through the Radon-Nikod\'ym derivative (RND).

\paragraph{Schrödinger Bridge Problem on Continuous State Spaces}
One way to formulate the learning of diffusion samplers is through a distributionally constrained optimal transport problem known as the \textbf{Schrödinger bridge (SB)} problem \citep{leonard2014a,chen2016relation}:
\begin{align}\label{eq:sb_prob_cts}
    \min_{u~\text{s.t.}~X_1\sim\nu}\Big\{\KL(p^u\|p^r)=\underset{X \sim p^u}{\E}\int_0^1\frac{1}{2}\|u_t (X_t)\|^2 \d t\Big\},
\end{align}
where $p^u$ is the controlled path measure \cref{eq:sde} and $p^r$ is the path measure of a \textit{reference dynamics} with zero control ($u\equiv 0$).
The optimal control can be characterized by %
\begin{align}\label{eq:conti-u*}
    u^\star_t(x) = \sigma_t \nabla \log \varphi_t(x),
\end{align}
where the \textbf{SB potentials} $(\varphi_t,\varphih_t)$ are defined (up to multiplicative constants) through time integrations with respect to the reference transition kernel $p^r_{t|s}(y|x)$:
\begin{align*}
    \varphi_t(x) = \tint p^r_{1|t}(y|x) \varphi_1(y) \d y,
    & \quad \varphi_0(x)\varphih_0(x) = \mu(x); \\
    \varphih_t(x) = \tint p^r_{t|0}(x|y) \varphih_0(y) \d y,
    & \quad \varphi_1(x)\varphih_1(x) = \nu(x).
\end{align*}

\paragraph{Stochastic Optimal Control Characteristics of SB}
An interesting connection between the SB formulation and \textbf{stochastic optimal control (SOC)} is revealed by the characterization of the optimal control $u^\star$. As shown in \citet{dai1991stochastic,chen2016relation,liu2025adjoint}, $u^\star$ \cref{eq:conti-u*} is also the solution to the following SOC problem:
\begin{align}\label{eq:sb_soc_cts}
    \min_{u} \underset{X \sim p^u}{\E} \Big[
        \int_0^1 \frac{1}{2}\|u_t (X_t)\|^2 \d t + {\log \frac{\varphih_1(X_1)}{\nu(X_1)}}
    \Big].
\end{align}

\cref{eq:sb_soc_cts} highlights that SB \cref{eq:sb_prob_cts} admits an SOC interpretation, where the terminal marginal constraint is encoded through the terminal cost $\log \frac{\varphih_1}{\nu}$. This perspective provides a useful bridge between SB theory and SOC formulations.

\paragraph{Adjoint Schrödinger Bridge Sampler (ASBS, \citet{liu2025adjoint})}
When $b_t \equiv 0$, the reference dynamics reduces to Brownian motion. In this setting, the reference transition kernel is \textbf{additive} in the sense that sampling from $p^r_{1|t}(\cdot|x)$ can be achieved by adding a noise $\epsilon\sim q_t$ onto $x$, i.e.,
\begin{align}\label{eq:additive_noise_cts}
    p^r_{1|t}(y|x)=q_t(y-x),\quad q_t=\cN(0, \sigmat^2_t I),
\end{align}
where $\sigmat^2_t=\int_t^1\sigma^2_s\d s$.
Importantly, this additive property is central in simplifying the associated SOC problem \cref{eq:sb_soc_cts}; it yields tractable conditional distributions $p^r_{1|t}$ and enables explicit expressions for the quantities appearing in the optimality conditions \citep{havens2025adjoint}.
Exploiting this structure, \citet{liu2025adjoint} derived the \textbf{adjoint matching (AM)} identities
\cref{eq:cts_ctrl_tm,eq:cts_corr_tm},
as well as the corresponding \textbf{denoising matching (DM)} identities \cref{eq:cts_ctrl_dm,eq:cts_corr_dm}, and proposed an alternating procedure based on  \cref{eq:cts_ctrl_tm,eq:cts_corr_dm} to learn the controller and the corrector, which converges under suitable regularity conditions.
See the left part of \cref{tab:comparison_dm_tm} for details. 

\paragraph{Connection to the Memoryless Case (Adjoint Sampling)}
An earlier work, adjoint sampling \citep{havens2025adjoint}, considered a special case in which the reference path measure $p^r$ is \textbf{memoryless}, i.e. $p^r_{0,1}(x,y) = p^r_0(x)p^r_1(y)$,
under which one can show that $\varphih_1\propto p^r_1$. As a consequence, the corrector $\nabla\log\varphih_1$ is known and the learning problem reduces to a single regression objective for the controller.
From this perspective, ASBS is a generalization of adjoint sampling that relaxes the memoryless assumption, allowing for nontrivial boundary coupling. It is also observed in \citet{liu2025adjoint} that non-memoryless reference dynamics enable reduced noise levels compared with memoryless ones, thus offering improved performance.

\section{SB and SOC Theory for CTMC}
\label{sec:theory}

In this section, we introduce the SB problem for continuous-time Markov chains (CTMCs), serving as a discrete analog of \cref{eq:sb_prob_cts}. We then extend the corresponding SB and SOC theory from continuous state spaces to the discrete CTMC \citep{leonard2014a}. In particular, we derive optimality conditions for the optimal transition rate $u^\star$, analogous to \cref{eq:conti-u*}, and formulate an associated SOC problem in the spirit of \cref{eq:sb_soc_cts}.

\subsection{Problem Setting}
\label{sec:theory_prob_set}

\paragraph{Continuous-time Markov chain} Throughout this paper, we will consider the discrete state space $\cX$ as the set of length-$D$ sequences with $N$ possible states $[N]:=\{1,2,...,N\}$, i.e., $\cX=[N]^D$.
A \textbf{continuous-time Markov chain (CTMC)} $(X_t)_{t\in[0,1]}$ is a stochastic process taking values in $\cX$ and characterized by its \textbf{transition rate} $r=(r_t(y,x))^{x,y\in\cX}_{t\in[0,1]}$, defined by
\begin{align*}
    r_t(y,x)=\lim_{h\to0}\frac{\Pr(X_{t+h}=y|X_t=x)-1_{y=x}}{h},
\end{align*}
where $1_A\in\{0,1\}$ is the indicator of a statement $A$.

\paragraph{SB Problem on Discrete State Spaces} 
Following \cref{sec:prelim}, we address sampling problem by formulating a SB problem for CTMCs, whose optimal path measure $p^\star$ satisfies the boundary marginal constraints $p^\star_0 = \mu$ and $p^\star_1 = \nu$. Let $r_t(y,x)$ and $u_t(y,x)$ denote the reference and controlled transition rates, and write $p^r$ and $p^u$ for their induced \textbf{path measures}. We assume a common initial distribution $p^r_0 = p^u_0 = \mu$. 
Consider the following SB problem for CTMCs:

\graybox{
\begin{align}
    \min_{u~\text{s.t.}~p^u_1=\nu}\Big\{\KL(p^u\|p^r)=\underset{X\sim p^u}{\E}\!\int_0^1\!\sum_{y\ne X_t}\!\Big(u_t\log\frac{u_t}{r_t}+r_t-u_t\Big)(y,X_t)\d t\Big\}.
\label{eq:sb_prob}\tag{SB}
\end{align}
}

Next, we will make explicit the connection between this SB problem and a corresponding SOC formulation, in direct analogy with \cref{eq:sb_soc_cts}.

\subsection{SB and SOC Theory for CTMC}
\label{sec:theory_sb_soc}

\paragraph{Characterization of the Optimal Transition Rate}
We now characterize the optimal transition rate solving \cref{eq:sb_prob}. The following result provides an explicit description of the optimal transition rate in terms of a pair of time-dependent potentials, which play a role analogous to the SB potentials in the continuous-state setting. See \cref{app:theory_sb} for the proof.

\begin{theorem}
\label{thm:sb_sol}
The optimal transition rate $u^\star$ for \cref{eq:sb_prob} can be expressed as

\graybox{
\begin{align}
    u^\star_t(y, x)& = \frac{\varphi_t (y)}{\varphi_t (x)} r_t (y,x),~\forall y\ne x, \label{eq:u_star}
\end{align}
}

where the \textbf{SB potentials} $(\varphi_t, \varphih_t)$ satisfy

\graybox{
\vspace{0.5em}
\begin{empheq}[left={\forall 0\le s<t\le 1:~\empheqlbrace}]{align}
    \varphi_s(x)=\sum_y p^r_{t|s}(y|x)\varphi_t(y),\label{eq:varphi_kbe}\\
    \varphih_t(x)=\sum_y p^r_{t|s}(x|y)\varphih_s(y),\label{eq:hatvarphi_kfe}
\end{empheq}
}

and the optimal path measure $p^\star$ satisfies

\graybox{
\begin{align}\label{eq:sb_relation}
    p_t^\star(x)=\varphi_t(x)\varphih_t(x),\qquad
    \frac{p_{t|s}^\star(x|y)}{p^r_{t|s}(x|y)}=\frac{\varphi_t(x)}{\varphi_s(y)},\qquad
    \frac{p_{s|t}^\star(y|x)}{p^r_{t|s}(x|y)}=\frac{\varphih_s(y)}{\varphih_t(x)}.
\end{align}
}

\end{theorem}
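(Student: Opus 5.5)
The plan is the classical static/dynamic reduction for Schrödinger bridges (\citet{leonard2014a}), carried out for path measures of CTMCs on the finite space $\cX$.

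\textbf{Step 1: reduce to a static problem.} By the disintegration (chain rule) of KL with respect to the endpoints $(X_0,X_1)$,
\[
\KL(p\|p^r)=\KL(p_{0,1}\|p^r_{0,1})+\E_{p_{0,1}}\KL\big(p(\cdot|X_0,X_1)\,\|\,p^r(\cdot|X_0,X_1)\big).
\]
Consequently the minimizer over path measures with marginals $(\mu,\nu)$ has the form $p^\star=\pi^\star\, p^r(\cdot|x_0,x_1)$. Here $\pi^\star$ solves the static problem $\min\{\KL(\pi\|p^r_{0,1}) : \pi_0=\mu,\ \pi_1=\nu\}$.

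\textbf{Step 2: solve the static problem.} On a finite space this is an entropic projection. Lagrange duality, or equivalently Sinkhorn/IPF convergence, gives $\pi^\star(x_0,x_1)=f(x_0)\,p^r_{0,1}(x_0,x_1)\,g(x_1)$; existence needs positivity of the reference kernel. Set $\varphi_1:=g$ and $\varphih_0:=f\,\mu$, so that $\pi^\star(x_0,x_1)=\varphih_0(x_0)\,p^r_{1|0}(x_1|x_0)\,\varphi_1(x_1)$. Define $\varphi_s$ by \cref{eq:varphi_kbe} with $t=1$ and $\varphih_t$ by \cref{eq:hatvarphi_kfe} with $s=0$. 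The Chapman–Kolmogorov equation for $p^r$ then yields \cref{eq:varphi_kbe,eq:hatvarphi_kfe} for all $s<t$. The boundary identities $\varphi_0\varphih_0=\mu$ and $\varphi_1\varphih_1=\nu$ are exactly the marginal constraints on $\pi^\star$.

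\textbf{Step 3: identify the path measure.} From Step 1,
\[
\frac{\d p^\star}{\d p^r}(X)=\varphih_0(X_0)\,\varphi_1(X_1)/\mu(X_0),
\]
using $p^r_0=\mu$. The process $h$-transform $M_t:=\varphi_t(X_t)$ is a $p^r$-martingale by \cref{eq:varphi_kbe}, so $p^\star$ is Markov. Computing finite-dimensional marginals with the Markov property of $p^r$:
\begin{itemize}
\item Summing out all times except $t$ gives $p^\star_t(x)=\varphih_t(x)\varphi_t(x)$.
\item The joint law at $(s,t)$ is $p^\star_{s,t}(y,x)=\varphih_s(y)\,p^r_{t|s}(x|y)\,\varphi_t(x)$.
\end{itemize}
Dividing the joint by the marginal $p^\star_s$ or $p^\star_t$ gives the two ratio identities in \cref{eq:sb_relation}.

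\textbf{Step 4: read off the rate.} Differentiate $p^\star_{t+h|t}(y|x)=p^r_{t+h|t}(y|x)\,\varphi_{t+h}(y)/\varphi_t(x)$ at $h=0$ for $y\ne x$. This gives \cref{eq:u_star}, using continuity of $\varphi$ in $t$, which follows from the Kolmogorov backward equation.

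\textbf{Main obstacle.} The delicate point is existence and positivity of the Schrödinger system $(f,g)$ in Step 2. I would handle it by assuming $p^r_{1|0}>0$ everywhere (true for the uniform-jump reference, which is irreducible) and invoking the standard Sinkhorn/IPF convergence theorem for positive matrices. A second point is justifying that KL between CTMC path measures equals the rate functional in \cref{eq:sb_prob}. I would cite the Girsanov formula for jump processes, which also confirms that $p^\star$ is induced by the rate $u^\star$.
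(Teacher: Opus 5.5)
Your proposal is correct and follows essentially the same route as the paper. Both arguments use the KL chain rule to reduce to the static problem, take the product form $p^r_{0,1}(x_0,x_1)\phi(x_0)\varphi_1(x_1)$ of the static optimizer with $\varphih_0=\phi\mu$, propagate the potentials forward and backward, compute the two-time marginal $p^\star_{s,t}(y,x)=\varphih_s(y)\,p^r_{t|s}(x|y)\,\varphi_t(x)$, and differentiate $p^\star_{t+h|t}$ at $h=0$. The differences are minor refinements: you get the propagation identities directly from Chapman--Kolmogorov instead of showing that time derivatives vanish via the Kolmogorov equations, and you explicitly justify existence of the Schr\"odinger system (Sinkhorn on a positive kernel) and the Markov property of $p^\star$ (via the $h$-transform), both of which the paper's formal Lagrangian argument leaves implicit.
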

\cref{eq:varphi_kbe,eq:hatvarphi_kfe} is a forward-backward representation with respect to the reference transition kernel, and the boundary marginal constraints are encoded through the coupling conditions at boundary times: $\varphi_0\varphih_0=\mu$, $\varphi_1\varphih_1=\nu$.

\paragraph{SOC Problem on Discrete State Spaces}
The SOC problem for CTMCs with terminal cost $g:\cX\to\R$ is

\graybox{
\begin{align}
    &\min_{u~\text{s.t.}~p^u_0=\mu}\Big\{\KL(p^u\|p^r)+\underset{X\sim p^u}{\E}g(X_1)\Big\}\label{eq:soc}\tag{SOC}
\end{align}
}

\paragraph{SOC Characteristics of SB} 
While the optimality conditions in \cref{eq:u_star} provide an explicit characterization of $u^\star$, they are challenging to solve in practice. The main difficulties are twofold: the coupled boundary constraints at times $t=0$ and $t=1$, and the need to evaluate expectations with respect to the reference transition kernels. The SOC formulation circumvents these issues by avoiding the direct solution of coupled equations. The following theorem establishes an SOC reinterpretation of the SB problem:

\graybox{
\vspace{1em}
\begin{theorem}\label{thm:sb_soc_relation}
    The optimal transition rate $u^\star_t$ \cref{eq:u_star} for \cref{eq:sb_prob_cts} solves \cref{eq:soc} with terminal cost $g\gets\log\frac{\varphih_1}{\nu}$.
\end{theorem}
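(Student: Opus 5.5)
The plan is to rewrite the SOC objective with $g=\log\frac{\varphih_1}{\nu}$ as a single KL divergence between path measures, up to a constant independent of $u$. The objective is then minimized exactly when $p^u$ equals the target path measure, and we identify that measure with $p^\star$ from \cref{thm:sb_sol}.

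\textbf{Step 1: a Doob-type tilted path measure.} Define the path measure
\begin{align*}
\widetilde p(X_{[0,1]}) := p^r(X_{[0,1]})\,\frac{\varphi_1(X_1)}{\varphi_0(X_0)}.
\end{align*}
We need to check three things about $\widetilde p$.
\begin{itemize}
\item It is a probability measure with initial marginal $\mu$. Since $p^r_0=\mu$ and $\varphi_0(x)=\sum_y p^r_{1|0}(y|x)\varphi_1(y)$ by \cref{eq:varphi_kbe}, conditioning on $X_0$ gives $\widetilde p_0=\mu$. Here $\varphi_0>0$ wherever $\mu>0$, by the coupling $\varphi_0\varphih_0=\mu$.
\item Its conditional kernels are $p^r_{t|s}(x|y)\varphi_t(x)/\varphi_s(y)$. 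This again follows from \cref{eq:varphi_kbe} together with the Markov property of $p^r$.
\item It therefore coincides with $p^\star$. By \cref{eq:sb_relation}, $p^\star$ has exactly these kernels and the same initial law $\mu$, so $\widetilde p=p^\star$. Equivalently, $p^\star$ is the CTMC with rates $u^\star$ from \cref{eq:u_star}.
\end{itemize}

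\textbf{Step 2: decomposing the SOC objective.} Take any admissible $u$ with $p^u_0=\mu$ and $\KL(p^u\|p^r)<\infty$. Then
\begin{align*}
\KL(p^u\|p^r)=\KL(p^u\|p^\star)+\E_{p^u}\log\frac{p^\star}{p^r}(X)=\KL(p^u\|p^\star)+\E_{p^u}\big[\log\varphi_1(X_1)-\log\varphi_0(X_0)\big].
\end{align*}
Now add $\E_{p^u}g(X_1)=\E_{p^u}\log\frac{\varphih_1}{\nu}(X_1)$. The coupling $\varphi_1\varphih_1=\nu$ gives $\log\varphi_1+\log\frac{\varphih_1}{\nu}=0$, so the terminal terms cancel exactly. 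Since $p^u_0=\mu$, the term $\E_{p^u}\log\varphi_0(X_0)=\sum_x\mu(x)\log\varphi_0(x)$ is a constant independent of $u$. Hence
\begin{align*}
\text{SOC objective}=\KL(p^u\|p^\star)-\E_\mu\log\varphi_0,
\end{align*}
which is minimized, with value $-\E_\mu\log\varphi_0$, exactly when $p^u=p^\star$. By Step 1 this happens exactly when $u=u^\star$, and uniqueness holds on the support.

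\textbf{Main obstacle.} The delicate step is making the RND identity $\frac{\d p^\star}{\d p^r}=\frac{\varphi_1(X_1)}{\varphi_0(X_0)}$ rigorous for CTMC path measures. This requires three things:
\begin{itemize}
\item The chain rule for KL on path space.
\item Handling states where $\varphi$ or $\nu$ vanish, with conventions such as $0\log 0=0$.
\item Absolute continuity, i.e. restricting to $u$ with finite KL.
\end{itemize}
If one prefers to avoid the abstract path-space argument, there is an alternative. Use the Girsanov formula for CTMCs,
\begin{align*}
\log\frac{\d p^u}{\d p^r}=\sum_{\text{jumps}}\log\frac{u}{r}-\int_0^1\sum_{y\ne X_t}(u_t-r_t)(y,X_t)\,\d t.
\end{align*}
Apply it with $u^\star$, and use the Kolmogorov backward equation $\partial_t\varphi_t=-\sum_y r_t(y,x)\varphi_t(y)$ (differentiating \cref{eq:varphi_kbe}) to show that this log-RND telescopes to $\log\varphi_1(X_1)-\log\varphi_0(X_0)$.
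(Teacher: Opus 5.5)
Your argument is correct and is essentially the paper's proof. The paper obtains the SOC optimizer as $p^r e^{-g(X_1)}/Z(X_0)$ from the chain rule of KL conditioned on $X_0$, writes the SB optimizer as $p^r\,\frac{\varphih_0}{\mu}(X_0)\,\frac{\nu}{\varphih_1}(X_1)$ (which equals your $p^r\varphi_1(X_1)/\varphi_0(X_0)$ by the coupling conditions), and checks $Z=\mu/\varphih_0=\varphi_0$; that check is exactly your normalization $\widetilde p_0=\mu$ via $\varphi_0(x)=\sum_y p^r_{1|0}(y|x)\varphi_1(y)$, and completing the objective to $\KL(p^u\|p^\star)+\mathrm{const}$ simply merges the paper's two steps into one (your kernel-matching identification $\widetilde p=p^\star$ could also be replaced by the direct factorization $p^\star=p^r\,\phi(X_0)\varphi_1(X_1)$ with $\phi=\varphih_0/\mu=1/\varphi_0$).
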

}
\begin{sketchofproof}
The optimal path measures of \cref{eq:sb_prob,eq:soc} can be respectively written as
\begin{align}
    \frac{p^{\star}_{\text{SB}}(X_{[0,1]})}{p^r(X_{[0,1]})}&= \frac{\varphih_0(X_0)}{\mu(X_0)}\frac{\nu(X_1)}{\varphih_1(X_1)},\label{eq:sb_opt_path_measure}\\
    \frac{p^{\star}_{\text{SOC}}(X_{[0,1]})}{p^r(X_{[0,1]})}&=\frac{\e^{-g(X_1)}}{Z(X_0)},\quad Z(x)=\E_{p^r_{1|0}(y|x)}\e^{-g(y)}.\label{eq:soc_opt_path_measure}
\end{align}
\end{sketchofproof}
See \cref{app:theory_soc_sb} for the full proof.
\cref{thm:sb_soc_relation} shows that \cref{eq:sb_prob} admits an equivalent SOC formulation in which the terminal marginal constraint $p^u_1=\nu$ is incorporated through the terminal cost $g=\log\frac{\varphih_1}{\nu}$. This SOC perspective will be instrumental for developing tractable learning objectives in the discrete setting.

\paragraph{Connection to Memoryless Reference Dynamics}
If further assume $p^r$ is \textbf{memoryless},
then \cref{eq:varphi_kbe} implies $\varphi_0(x)=\E_{p^r_1}\e^{-g}=\const$, and \cref{eq:hatvarphi_kfe} implies $\varphih_1(x)=p^r_1(x)\sum_y\varphih_0(y)\propto p^r_1(x)$. Therefore, $g=\log\frac{p^r_1}{\nu}+\const$.

\paragraph{Relation to Continuous State Spaces}
Finally, we note that the SOC and SB theory developed above for discrete state spaces closely parallels its continuous counterpart introduced in \cref{sec:prelim}. A detailed comparison is provided in the upper part of \cref{tab:comparison_dm_tm}.

\section{Discrete Adjoint Schrödinger Bridge Sampler (DASBS)}
\label{sec:dasbs}

In this section, we introduce a principled theory and algorithm for learning the optimal transition rates in the discrete SB problem. Building on the SOC and SB formulation for CTMCs developed in the previous section, we propose a discrete analogue of ASBS \citep{liu2025adjoint} by developing discrete versions of adjoint matching and denoising matching for controller and corrector.

A key challenge in the discrete setting is the absence of additive noise structure \cref{eq:additive_noise_cts} that plays a central role in developing AM framework.
To address this issue, we will adopt a \textit{cyclic group structure} on the discrete state space, which allows discrete transitions to be interpreted in an additive form. Under this perspective, a uniform reference transition rate emerges as a natural choice for inducing tractable and symmetric transition kernels. 

\paragraph{Choice of the Reference Path Measure}
\cref{eq:u_star} implies that it suffices to learn the ratio of $\varphi_t$ at all pairs of $x,y\in\cX$ such that $r_t(y,x)>0$. We follow the typical strategy in discrete diffusion models \citep{campbell2022continuous,lou2024discrete,schiff2025simple} to restrict the transition to pairs of $x,y$ with Hamming distance $\dh(x,y)=1$, i.e., $x$ and $y$ differ at exactly one entry. Throughout this paper, we consider the reference path measure $p^r$ starting from an arbitrary tractable initial distribution $p^r_0=\mu$ and induced by the following \textbf{uniform transition rate} $r$ that keeps the uniform distribution on $\cX$ ($\punif=\frac{1}{N^D}$) invariant:
\begin{align}\label{eq:ref_transition_rate}
    r_t(y,x)=\begin{cases}
        \frac{\gamma_t}{N},&\text{if }\dh(y,x)=1,\\
        -\gamma_t D\ro{1-\frac1N},&\text{if }y=x,\\
        0,&\text{if otherwise,}
    \end{cases}    
\end{align}
where $\gamma_\cdot:[0,1]\to\R_+$ is a noise schedule. We remark that for two given time steps $0\le s<t\le1$ and states $x,y\in\cX$, $p^r_{t|s} (y|x)$ can be written in a closed form (\cref{prop:p_ref_trans_prob}), and so is $p^r_{t|0,1}(x|x_0, x_1)$ (\cref{prop:p_ref_bridge}). See \cref{app:theory_p_ref} for details.

Thus, it suffices to learn $\frac{\varphi_t(y)}{\varphi_t(x)}$ for all $\dh(x,y)=1$. Define the \textbf{controller} matrix $\varPhi^\star_t(x)\in\R^{D\times N}$ whose $(d,n)$-th element is $\varPhi^\star_t(x)_{d,n}=\frac{\varphi_t(x^{d\gets n})}{\varphi_t(x)}$, where $x^{d\gets n}$ denotes the vector obtained by replacing the $d$-th entry of $x$ with $n$.

\paragraph{Cyclic Group Structure for the State Space}
To extend AM \cref{eq:cts_ctrl_tm} to discrete space, we treat the state space $\cX=[N]^D$ as $\Z_N^D$, where $\Z_N$ is the cyclic group of integers modulo $N$. Thus, the sum and difference of any two elements in $\cX$ are still in $\cX$.
The key benefit of doing so is that the transition kernel $p^r_{1|t}(\cdot|x)$ is again \textbf{additive}:
\begin{align}\label{eq:additive_noise}
    \exists q_t \ \ \text{(see \cref{prop:p_ref_trans_prob}) \quad ~s.t.} \  \  \ p^r_{1|t}(y|x)=q_t(y-x).
\end{align}

\paragraph{Controller Adjoint Matching}
With an additive noise, we can thus consider a target score matching \citep{de2024target,zhang2025target} objective like in the continuous AM \citep{domingoenrich2025adjoint,havens2025adjoint}.
Applying \cref{eq:additive_noise}, we can refactor \cref{eq:varphi_kbe} as follows:
\begin{align}
    \varphi_t(y)
    &\underset{\cref{eq:varphi_kbe}}{=}
    \sum_{x_1\in\Z_N^D}p^r_{1|t}(x_1|y)\varphi_1(x_1) \underset{\cref{eq:additive_noise}}{=}
    \sum_{x_1\in\Z_N^D}p^r_{1|t}(x_1+\Delta|y+\Delta)\varphi_1(x_1) \nonumber\\
    &\underset{x'_1 \leftarrow x_1+\Delta}{=}
    \sum_{x'_1\in\Z_N^D}p^r_{1|t}(x'_1|y+\Delta)\varphi_1(x'_1 - \Delta) \underset{\Delta \leftarrow x-y}{=}
    \sum_{x'_1\in\Z_N^D}p^r_{1|t}(x'_1|x)\varphi_1(x'_1 - x + y),\label{eq:magic_varphi}\\
    \implies\frac{\varphi_t(y)}{\varphi_t(x)}&
    =\E_{p^r_{1|t}(x_1|x)}\frac{\varphi_1(x_1+\colorcyc{y-x})}{\varphi_t(x)} \underset{\cref{eq:sb_relation}}{=}
    \E_{p^\star_{1|t}(x_1|x)}\frac{\varphi_1(x_1+\colorcyc{y-x})}{\varphi_1(x_1)}.
    \label{eq:varphi_ratio_tm}
\end{align}

When $y\gets x^{d\gets n}$, since $\colorcyc{x^{d\gets n}-x}$ has at most one non-zero coordinate, $x_1$ and $x_1+\colorcyc{x^{d\gets n}-x}=x_1^{d\gets x_1^d+\colorcyc{n-x^d}}$ differ in at most one entry.
Let $D_f(a\|b)=f(a)-f(b)-(a-b)f'(b)\ge0$ be the Bregman divergence induced by a strictly convex and differentiable function $f$, and note that $\E\xi=\argmin_{c\in\R}\E D_f(\xi\|c)$ for any random variable $\xi$ \citep{lou2024discrete}.
We thus have the following variational characterization of the controller $\varPhi^\star$:
\begin{align}
    \varPhi^\star=\argmin_\varPhi\E_t\E_{p^\star_{t,1}(x,x_1)}
    \sum_{d=1}^D\sum_{n\ne x^d} D_f\ro{\frac{\varphi_1(x_1^{d\gets x_1^d+\colorcyc{n-x^d}})}{\varphi_1(x_1)}\middle\|\varPhi_t(x)_{d,n}},
    \label{eq:varphi_star_tsm}
\end{align}
where $t$ is a random variable in $(0,1)$ and one can sample $p^\star_{t,1}(x,x_1)$ by
\begin{align*}
    p^\star_{0,t,1}(x_0,x,x_1)&=p^\star_{0,1}(x_0,x_1)p^\star_{t|0,1}(x|x_0,x_1)\underset{\cref{eq:sb_sol_p_star_cond}}{=}p^\star_{0,1}(x_0,x_1)p^r_{t|0,1}(x|x_0,x_1).
\end{align*}
We can further rewrite the ratio in \cref{eq:varphi_star_tsm} as follows:
\footnote{For conciseness, $\triangle:=x_1^d+n-x^d$ and $\square:=x^d+n-x_1^d$.
\label{fn:shape}
}
\begin{align}
    \frac{\varphi_1(x_1^{d\gets \triangle})}{\varphi_1(x_1)}\underset{\cref{eq:sb_relation}}{=}\left.\frac{\nu(x_1^{d\gets \triangle})}{\nu(x_1)}\right/\left.\underbrace{\frac{\varphih_1(x_1^{d\gets \triangle})}{\varphih_1(x_1)}.}\right._{=\varPhih^\star (x_1)_{d,\triangle}}
    \label{eq:varphi1_ratio_param}
\end{align}
Notably, here, we require the \textbf{discrete score} $x\mapsto\ro{\frac{\nu(x^{d\gets n})}{\nu(x)}}_{d,n}$ of the target distribution, which is a \textit{first-order} oracle similar to the score of the target distribution in continuous AM.

\paragraph{Corrector Adjoint Matching}
From \cref{eq:varphi_star_tsm,eq:varphi1_ratio_param}, to learn the controller $\varPhi^\star_t$, we require estimating $\frac{\varphih_1(y)}{\varphih_1(x)}$ for all $\dh(x,y)=1$.
Let the \textbf{corrector} matrix $\varPhih^\star(x)\in\R^{D\times N}$ be defined by its entries $\varPhih^\star(x)_{d,n}=\frac{\varphih_1(x^{d\gets n})}{\varphih_1(x)}$.
Following the spirit of \cref{eq:magic_varphi}, we can derive the following identity and variational characterization of $\varPhih^\star$ (see \cref{app:theory_varphih_star_tsm} for proof):
\begin{align}\label{eq:hatvarphi_ratio_tm}
    \frac{\varphih_1(z)}{\varphih_1(y)}&=\sum_xp^\star_{t|1}(x|y)\frac{\varphih_t(x-y+z)}{\varphih_t(x)},~\forall t\in[0,1),\\
    \implies\varPhih^\star&=\argmin_{\varPhih}\E_{p^\star_{t,1}(x,x_1)}
    \sum_{d=1}^D\sum_{n\ne x^d}D_f\ro{\frac{\varphih_t(x^{d\gets x^d+n-x_1^d})}{\varphih_t(x)}\middle\|\varPhih(x_1)_{d,n}}.
    \label{eq:varphih_star_tsm}
\end{align}
However, if $t\ne0$, leveraging the relation \cref{eq:sb_relation} requires knowing the intractable $p^\star_t$; otherwise, as $\mu$ is known, and assume it is \textit{fully supported} on $\cX$, we have\footref{fn:shape}
\begin{align}
    \frac{\varphih_0(x^{d\gets\square})}{\varphih_0(x)}\underset{\cref{eq:sb_relation}}{=}
    \left.\frac{\mu(x^{d\gets\square})}{\mu(x)}\right/ \left.\underbrace{\frac{\varphi_0(x^{d\gets\square})}{\varphi_0(x)}.}\right._{=\varPhi^\star_0 (x)_{d,\square}}
    \label{eq:hatvarphi0_ratio_param}
\end{align}

\paragraph{Corrector Denoising Matching}
A limitation of \cref{eq:hatvarphi_ratio_tm,eq:hatvarphi0_ratio_param} is that they require the explicit density of $\mu$ to be positive everywhere. To circumvent this, one can leverage the following identity:
\begin{align}
    \frac{\varphih_1(z)}{\varphih_1 (y)}\underset{\cref{eq:hatvarphi_kfe}}{=}\sum_xp^r_{1|t} (z|x) \frac{\varphih_t (x)}{\varphih_1(y)}\underset{\cref{eq:sb_relation}}{=}\sum_x\frac{p^r_{1|t}(z|x)}{p^r_{1|t} (y|x)} p^\star_{t|1}(x|y).
    \label{eq:hatvarphi_ratio_dm}
\end{align}
Notably, while \cref{eq:hatvarphi_ratio_tm} relies on the additive noise \cref{eq:additive_noise}, \cref{eq:hatvarphi_ratio_dm} holds under general $p^r_{1|t}$. Thus, one can obtain a similar variational characterization of the corrector $\varPhih^\star$ by replacing the regression target (i.e., the ratio) in \cref{eq:varphih_star_tsm} with $\frac{p^r_{1|t}(x_1^{d\gets n}|x)}{p^r_{1|t}(x_1|x)}$. As this is related to the discrete score of the transition kernel $p^r_{1|t}(\cdot|x)$, we borrow the terminology in the continuous domain and call it \textbf{denoising matching (DM)}.

\paragraph{Alternating Update}
We can thus arrive at the core algorithm of \textbf{discrete ASBS}, following the practice in ASBS to learn $\varPhi\approx\varPhi^\star$ and $\varPhih\approx\varPhih^\star$ with an alternating update.
We initialize $\varPhih^{(0)}$ to be all-one following the practice in \citet{liu2025adjoint}, and for stage $k=1,2,...$, we solve the following two problems sequentially:

\graybox{
\begin{align}
    \varPhi^{(k)} &:= \argmin_\varPhi~
    \E_tw_t\E_{\substack{p^{\sg(r\varPhi)}_{0,1}(x_0,x_1)\\ p^r_{t|0,1}(x|x_0,x_1)}}
    \sum_{d=1}^D\sum_{n\ne x^d}D_f\ro{\colortm{\frac{\varphi_1(x_1^{d\gets x_1^d+n-x^d})}{\varphi_1(x_1)}}\middle\|\varPhi_t(x)_{d,n}},
    \label{eq:loss_varphi_tsm}\tag{ctrl-AM}\\
    \varPhih^{(k)} &:= \argmin_{\varPhih}~
    \E_{p^{\sg(r\varPhi^{(k)})}_{0,1}(x_0,x_1)}
    \sum_{d=1}^D\sum_{n\ne x_1^d}D_f\ro{\colortm{\frac{\varphih_0(x_0^{d\gets x_0^d+n-x_1^d})}{\varphih_0(x_0)}}\middle\|\varPhih(x_1)_{d,n}},
    \label{eq:loss_hatvarphi_tsm}\tag{corr-AM}\\
    \text{or}~~\varPhih^{(k)} &:=\argmin_{\varPhih}~
    \E_tw_t\E_{\substack{p^{\sg(r\varPhi^{(k)})}_{0,1}(x_0,x_1)\\ p^r_{t|0,1}(x|x_0,x_1)}}\sum_{d=1}^D\sum_{n\ne x_1^d}D_f\ro{\colordm{\frac{p^r_{1|t}(x_1^{d\gets n}|x)}{p^r_{1|t}(x_1|x)}}\middle\|\varPhih(x_1)_{d,n}}.\label{eq:loss_hatvarphi_dsm}\tag{corr-DM}
\end{align}
}

The stop gradient operator $\sg(\cdot)$ applied onto a model means not tracking the gradient when querying the model.
In \cref{eq:loss_varphi_tsm,eq:loss_hatvarphi_dsm}, $t\sim\unif(0,1)$, $w_\cdot:[0,1]\to\R_+$ is a time weight function. For the two AM losses, the \colortm{ratio} in \cref{eq:loss_varphi_tsm} is computed via \cref{eq:varphi1_ratio_param} by replacing $\varPhih^\star$ with the current $\sg(\varPhih^{(k-1)})$, and the \colortm{ratio} in \cref{eq:loss_hatvarphi_tsm} is computed via \cref{eq:hatvarphi0_ratio_param} by replacing $\varPhi^\star$ with the current $\sg(\varPhi^{(k)})$.
In all three losses, $p^{\sg(r\varPhi)}$ means sampling from the CTMC with transition rate $u_t(x^{d\gets n},x)=r_t(x^{d\gets n},x)\sg(\varPhi_t(x)_{d,n})$, $n\ne x^d$, which replaces $p^\star_{0,1}$ in the expectation with the detached non-optimal path measure. The theoretical justification of the validity of this replacement will be discussed in \cref{thm:cvg}, and in practice, an optional \textbf{trajectory importance reweighting} using the RND ${p^\star(x_{[0,1]})}/{p^{\sg(r\varPhi)}(x_{[0,1]})}$ can be incorporated (\cref{app:theory_imp_samp}).

\paragraph{Initialization}
Following ASBS \citep{liu2025adjoint}, one can initialize either the controller or the corrector to be non-informative (i.e., output all ones). In \cref{prop:networks_init}, we prove that under the reference transition rate \cref{eq:ref_transition_rate} and uniform initialization $\mu=\punif$, these two approaches are equivalent, and hence we always start with the all-one corrector.

\paragraph{Inference}
We use the \textbf{$\bm{\tau}$-leaping} method \citep{gillespie2001approximate,campbell2022continuous,lou2024discrete} to sample each dimension's transition independently. Since $u_t(x^{d\gets n},x)=\frac{\gamma_t}{N}\varPhi_t(x)_{d,n}$, $n\ne x^d$, this means we fix $\varPhi_\tau(X_\tau)_{d,n}$ on the interval $\tau\in[t,t+h]$ for a small step size $h>0$, and assume each dimension evolves independently. We defer the details to \cref{app:theory_tau_leap}.

\newcommand{\DeltaMag}{$\Delta\mathrm{Mag.}$}
\newcommand{\DeltaCorr}{$\Delta\mathrm{Corr.}$}
\newcommand{\EW}{$\mathrm{EW_2}$}
\definecolor{tabbg}{HTML}{DAE8FC}
\newcommand{\best}[1]{\mathbf{\uline{#1}}}
\newcommand{\sbest}[1]{\uline{#1}}

\begin{table*}[t]
\centering
\caption{
Learning to sample from lattice Ising models with $L=24$.
\textbf{\uline{Best}} and \uline{second best} results among all \textit{uniform-based} discrete neural samplers are highlighted.
$*$: Measured on one A6000 GPU with largest feasible batch size for $\betah$.
$\dagger$: For $\betac$ and $\betal$, using warm-up strategy in PDNS \citep{guo2026proximal}.
$\ddagger$: Failed to converge to meaningful distributions at $\betac$ and $\betal$ even with warm-up.
}
\label{tab:exp_ising}
\resizebox{\linewidth}{!}{
\begin{tabular}{ccccccccccccc}
\toprule
 & Inv. Temp. & & & \multicolumn{3}{c}{$\betah=0.28$} & \multicolumn{3}{c}{$\betac=0.4407$} & \multicolumn{3}{c}{$\betal=0.6$} \\ \cmidrule{2-13}
Type & Metrics $\downarrow$ & Steps ($\times1\e3)^*$ & Runtime (h)$^*$ & \DeltaMag & \DeltaCorr & \EW & \DeltaMag & \DeltaCorr & \EW & \DeltaMag & \DeltaCorr & \EW \\ \midrule
\rowcolor{tabbg}
\cellcolor{tabbg} & \textbf{DASBS} & $\best{3.75}$ & $\best{0.5}$ & $\sbest{2.7\e-3}$ & $\sbest{1.8\e-3}$ & $\sbest{8.6}$ & $\best{5.7\e-2}$ & $\best{4.7\e-2}$ & $\best{12.1}$ & $\best{2.0\e-2}$ & $\best{1.8\e-3}$ & $\best{2.0}$ \\
\rowcolor{tabbg}
\cellcolor{tabbg} & LEAPS & $\sbest{30}$ & $8.4$ & $\best{1.8\e-3}$ & $\best{9.2\e-4}$ & $\best{3.1}$ & $\sbest{5.9\e-2}$ & $\sbest{2.8\e-1}$ & $\sbest{96.5}$ & $\sbest{3.0\e-2}$ & $\sbest{5.5\e-1}$ & $\sbest{176.6}$ \\
\rowcolor{tabbg}
\cellcolor{tabbg} & UDNS$^\ddagger$ & $50$ & $11.9$ & $9.0\e-3$ & $8.7\e-3$ & $23.6$ & -- & -- & -- & -- & -- & -- \\ 
\rowcolor{tabbg}
\multirow{-4}{*}{\cellcolor{tabbg}Uniform} & DFNS$^\ddagger$ & $50$ & $\sbest{2.1}$ & $9.3\e-1$ & $8.0\e-1$ & $661.6$ & -- & -- & -- & -- & -- & -- \\ \midrule
Masked & MDNS$^\dagger$ & $50$ & $16.8$ & $3.9\e-3$ & $7.4\e-4$ & $0.1$ & $1.1\e-2$ & $5.6\e-3$ & $5.1$ & $9.0\e-3$ & $4.7\e-3$ & $5.3$ \\ \midrule
MCMC & MH & -- & -- & $8.9\e-4$ & $2.9\e-4$ & $1.2$ & $2.5\e-2$ & $3.7\e-3$ & $293.3$ & $4.0\e-2$ & $6.6\e-4$ & $109.9$ \\ \bottomrule
\end{tabular}
}
\end{table*}

\section{Additional Theory and Insights of DASBS}
\label{sec:additional_theory}
In this section, we provide further theory and insights to the DASBS framework.

\paragraph{AM v.s. DM for Corrector}
Recall that our derivation of the AM losses \cref{eq:loss_varphi_tsm,eq:loss_hatvarphi_tsm} leverages the relations \cref{eq:varphi_ratio_tm,eq:hatvarphi_ratio_tm}, which rely explicitly on the additiveness of the reference transition kernel $p^r_{1|t}(\cdot|x)$ \cref{eq:additive_noise}. In contrast, the DM loss \cref{eq:loss_hatvarphi_dsm} does not rely on this condition.

\paragraph{AM v.s. DM for Controller}
A parallel relationship holds for the \textit{controller},
which yields a different way of the alternating update by replacing the \colortm{ratio} in \cref{eq:loss_varphi_tsm} with $\colordm{\frac{p^r_{1|t}(x_1|x^{d\gets n})}{p^r_{1|t}(x_1|x)}}$ (see \cref{app:theory_dm,eq:loss_varphi_dsm} for details).
However, though theoretically grounded, this formulation may suffer from a weak \textit{mutual supervisory signal} during alternating updates: we rely on the trained $\varPhih^{(k-1)}$ to supervise the training of $\varPhi^{(k)}$, but such supervisory information only comes in from the \textit{implicit boundary relation} $\varPhi_1(x)_{d,n}\varPhih(x)_{d,n}=\frac{\nu(x^{d\gets n})}{\nu(x)}$, which is not explicitly enforced in the loss.
Even when the reference dynamics is \textit{memoryless} and there is no need to learn the corrector, we discover in \cref{fig:abl_tm_dm} that AM \cref{eq:loss_varphi_tsm} works significantly better than its DM counterpart \cref{eq:loss_varphi_dsm}.

\paragraph{Further Connection to Target Matching}
Such contrast resembles the distinction between \colortm{\textbf{target matching (TM)}} and \colordm{\textbf{denoising matching (DM)}} in the literature of learning scores of continuous probability distributions \citep{de2024target,kahouli2025control}: for two continuous random vectors $x,y\sim p(x,y)$, we can express the score as
\begin{align*}
    \underbrace{\E_{p(x|y)}\colortm{\nabla_x\log p(x)}}_{\text{for additive}~p(y|x)}=\nabla_y\log p(y)=\underbrace{\E_{p(x|y)}\colordm{\nabla_y\log p(y|x)}}_{\text{for general}~p(y|x)},
\end{align*}
where \textbf{additive} means $p(y|x)=q(y-x)$ for some distribution $q$. In other words, \colortm{TM} regresses onto the target score, whereas \colordm{DM} regresses onto the score of the transition kernel.
\colortm{TM} leverages a more meaningful learning signal and avoids the numerical instability of \colordm{DM} when the noise level is very small (where $\nabla_y\log p(y|x)$ becomes singular), thereby providing faster convergence of training, a benefit we observe directly in our discrete experiments (\cref{fig:abl_tm_dm}).

\paragraph{Unified View of Adjoint Matching}
\textbf{Adjoint matching} (\textbf{AM}, \citet{domingoenrich2025adjoint}) is originally derived by analyzing the ODE of the \textit{adjoint state} -- the gradient of the cost-to-go from $X_t=x$ with respect to $x$. This formulation connects the adjoint state to the gradient of the objective with respect to control parameters, yielding a learning objective whose \textbf{unique fixed-point} corresponds to the optimal control. However, such gradient-based perspective is not directly generalizable to discrete domains.
Recently, \citet{so2026discrete} made a first attempt by expressing the optimal control as an expectation under the optimal path measure; however, their reliance on \textit{masked} transition rate resulted in a complex training objective, due to the lack of the additive property.
In contrast, our derivation identifies that the \textbf{additive reference noise} is the key structural requirement that enables a TM-like loss in discrete domains.
We refer readers to the lower half of \cref{tab:comparison_dm_tm} for a side-by-side comparison of DM and AM formulations across continuous and discrete settings, and conclude with a unified characterization of the intrinsic nature of AM:

\begin{tcolorbox}[
colback=gray!10, colframe=gray!10,
sharp corners,
left=0.2em, right=0.2em,
top=0.3em, bottom=0.3em,
before skip=0.3em, after skip=0.3em
]
    \centering
    \textit{
    Adjoint matching:
    a \textbf{fixed-point iteration} driven by a \textbf{\colortm{target matching}} objective converging to the \textbf{optimal} $\boldsymbol{p^\star}$.
    }
\end{tcolorbox}

\paragraph{DASBS Unifies Existing  Memoryless SOC Solvers}
We further establish the connection between DASBS and the \textbf{weighted denoising cross-entropy (WDCE)} loss for solving \textit{memoryless} SOC problems \citep{zhu2025mdns,zhu2025enhancing} (see \cref{prop:udns,prop:mdns} for full statement and proof):
\begin{proposition}
    \label{prop:udns_mdns_informal}
    Under a \emph{memoryless} reference path measure $p^r$ (encompassing both uniform \cref{eq:ref_transition_rate} and masked \cref{eq:ref_transition_rate_mask} discrete diffusion), the denoising loss for the controller \cref{eq:loss_varphi_dsm}, with trajectory importance reweighting, generalized KL divergence,
    \footnote{$f(t)=t\log t\implies D_f(a\|b)=a\log\frac{a}{b}-a+b$. \label{fn:gen_kl}}
    and time weight $w_t\gets\frac{\gamma_t}{N}$, is equivalent to the WDCE loss.
\end{proposition}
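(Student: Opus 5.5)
The argument has four steps. First, fix the memoryless simplifications from the "Connection to Memoryless Reference Dynamics" paragraph after \cref{thm:sb_soc_relation}. Second, write out the controller DM loss \cref{eq:loss_varphi_dsm} with generalized KL, the RND weight and $w_t=\frac{\gamma_t}{N}$. Third, rearrange it by dropping $\varPhi$-independent terms. Fourth, compare with the WDCE loss of \citet{zhu2025mdns}.

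Under memorylessness, $\varphi_0\equiv\const$ and $\varphih_1\propto p^r_1$, so $g=\log\frac{p^r_1}{\nu}+\const$. The optimal path measure \cref{eq:soc_opt_path_measure} is then $p^\star(X_{[0,1]})=p^r(X_{[0,1]})\frac{\nu(X_1)}{p^r_1(X_1)}$. This gives $p^\star_{0,1}=\mu\otimes\nu$ and $p^\star_{t|1}(x|x_1)=p^r_{t|1}(x|x_1)$. The trajectory reweighting factor $W=\frac{p^\star}{p^{\sg(r\varPhi)}}(X_{[0,1]})$ therefore depends on the sampled trajectory only through its log-RND. This RND is exactly the weight used in WDCE.

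Now write the reweighted DM loss. The target is $\xi_{d,n}=\frac{p^r_{1|t}(x_1|x^{d\gets n})}{p^r_{1|t}(x_1|x)}$. With $f(t)=t\log t$ we have $D_f(a\|b)=a\log\frac ab-a+b$. Dropping $\varPhi$-independent terms, the loss becomes
\begin{align*}
\E_t\frac{\gamma_t}{N}\E_{p^{\sg(r\varPhi)}}W\sum_{d}\sum_{n\ne x^d}\Big(\varPhi_t(x)_{d,n}-\xi_{d,n}\log\varPhi_t(x)_{d,n}\Big).
\end{align*}
The key step is to convert $\xi_{d,n}$ into a denoising-posterior quantity. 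Since the reference is memoryless, Bayes' rule gives $\frac{p^r_{1|t}(x_1|y)}{p^r_{1|t}(x_1|x)}=\frac{p^r_{t|1}(y|x_1)}{p^r_{t|1}(x|x_1)}\frac{p^r_t(x)}{p^r_t(y)}$. Two cases then apply.
\begin{itemize}
\item Uniform case: $p^r_t=\punif$, so the marginal factor cancels. The term $\frac{\gamma_t}{N}\xi_{d,n}\log\varPhi_t$ becomes the cross-entropy between the reference bridge $p^r_{t|1}$-ratio and the learned rate $u_t=\frac{\gamma_t}{N}\varPhi_t$. This is the summand of WDCE in the ratio/rate parametrization, and $\frac{\gamma_t}{N}\varPhi_t$ is the rate-integral term.
\item Masked case: I will repeat the computation with \cref{eq:ref_transition_rate_mask}. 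There only unmask transitions are nonzero, the ratio $\xi$ reduces to an indicator that $x_1^d$ equals the candidate $n$ (times a schedule factor), and the loss collapses to the familiar masked cross-entropy $-\log$ of the predicted clean-token probability.
\end{itemize}

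The main obstacle is making the weights and constants match exactly. WDCE is usually stated with a time weight such as $\frac{1}{t}$ or $\frac{\gamma_t}{\bar\gamma_t}$, an expectation over $x\sim p^r_{t|1}(\cdot|x_1)$, and an importance weight normalized over samples of $X_1$. So I must check three things. First, $W$, marginalized over the intermediate path given $(x_0,x_1)$, equals the WDCE weight; for this I will integrate the path RND using the Markov property of $p^r$ and memorylessness. Second, the linear term $\sum_{d,n}\varPhi_t$ is either a constant or the normalization term in WDCE; in the uniform case it is the total outgoing rate. Third, the schedule factors in $p^r_{t|1}$ from \cref{prop:p_ref_trans_prob} combine with $\frac{\gamma_t}{N}$ to give WDCE's time weight. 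I will verify these identities up to additive constants and positive global factors, which suffices since only the $\argmin$ matters.
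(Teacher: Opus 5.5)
Your proposal is correct and follows essentially the paper's argument. Memorylessness gives $p^\star_{t|1}=p^r_{t|1}$, and Bayes' rule with $p^r_t=\punif$ turns the DM target $\frac{p^r_{1|t}(x_1|x^{d\gets n})}{p^r_{1|t}(x_1|x)}$ into the $p^\star_{t|1}$-ratio of the UDNS WDCE identity; in the masked case the target collapses to $N1_{x_1^d=n}1_{x^d=\mask}$, so with a normalized $s_\theta$ the generalized KL reduces to the masked cross-entropy.

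Two small refinements. First, both losses use the identical path weight $W$, so your first check should reconcile sampling $x\sim p^r_{t|0,1}(\cdot|x_0,x_1)$ versus $x\sim p^r_{t|1}(\cdot|x_1)$; this follows after reweighting by $W$ and using $p^\star_{0,1}=\mu\otimes\nu$. Second, the ``schedule factor'' in the masked ratio is exactly $N$ because $\gammab_{t,1}=\infty$.
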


\paragraph{Convergence Analysis}
Finally, following the continuous arguments \citep[Thm. 4]{liu2025adjoint}, we establish the following convergence guarantee of DASBS.

\begin{theorem}
\label{thm:cvg}
\textbf{(1)} The path measure induced by the unique fixed-point of \cref{eq:loss_varphi_tsm} ($\varPhi^{(k)}$) solves a \textbf{forward half bridge problem} $\min_{p~\text{s.t.}~p_0=\mu}\KL(p\|q^{\varphih^{(k-1)}_1})$ for some path measure $q^{\varphih^{(k-1)}_1}$ induced by $\varphih^{(k-1)}_1$ (\cref{def:cvg_q_psi}).

\textbf{(2)}
The unique fixed-point of \cref{eq:loss_hatvarphi_dsm} is the same as the unique fixed-point of \cref{eq:loss_hatvarphi_dsm}. Denote it as $\varPhih^{(k)}$, which induces a path measure $q^{\varphih^{(k)}_1}$ that solves a \textbf{backward half bridge problem} $\min_{q~\text{s.t.}~q_1=\nu}\KL(p^{r\varPhi^{(k)}}\|q)$.
\end{theorem}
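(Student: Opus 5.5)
Part (1) comes first: I will identify the fixed point of \cref{eq:loss_varphi_tsm} and recognize its path measure as an SOC optimum. Fix $\varPhih^{(k-1)}$ and write $\varphih^{(k-1)}_1$ for a positive function whose discrete ratios are $\varPhih^{(k-1)}$; it is determined up to a constant on the connected Hamming graph. Define the terminal cost $g^{(k-1)}:=\log\frac{\varphih^{(k-1)}_1}{\nu}$, and let $q^{\varphih^{(k-1)}_1}$ be the path measure with $\frac{\d q}{\d p^r}(X)=\mu(X_0)^{-1}\varphih_0^{(k-1)}(X_0)\,\e^{-g^{(k-1)}(X_1)}$ up to normalization, which I take to be \cref{def:cvg_q_psi}.

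By the Bregman variational identity, a fixed point $\varPhi$ must satisfy, for every $t,x$ and $\dh(x,y)=1$,
\begin{align*}
\varPhi_t(x)_{y}=\E_{p^{r\varPhi}_{1|t}(x_1|x)}\frac{\e^{-g^{(k-1)}}(x_1+y-x)}{\e^{-g^{(k-1)}}(x_1)} .
\end{align*}
The conditional $p^{r\varPhi}_{t|0,1}=p^r_{t|0,1}$ is used implicitly here; this is legitimate for any control of the form $r\varPhi$ whose path RND depends only on the endpoints, and that property will be verified a posteriori.

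I claim the SOC solution with cost $g^{(k-1)}$ is a fixed point. By \cref{eq:soc_opt_path_measure} its rate is $r_t\,\varphi_t(y)/\varphi_t(x)$ with $\varphi_t(x)=\E_{p^r_{1|t}(\cdot|x)}\e^{-g}$. Repeating the additive-noise computation \cref{eq:magic_varphi} gives exactly the identity above, with $p^\star_{1|t}$ computed under that SOC measure. This also confirms that the RND depends only on $(X_0,X_1)$, so the bridge is the reference bridge.

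For uniqueness I will argue as ASBS does. Any fixed point $\varPhi$ defines a rate, and the identity above forces $\varPhi_t(x)_y\,h_t(x)=h_t(y)$ for the function $h_t(x):=\E_{p^{r\varPhi}_{1|t}}[\e^{-g}(x_1)\cdot\text{(density ratio)}]$. Showing that $h_t$ solves the backward Kolmogorov equation \cref{eq:varphi_kbe} is the main obstacle, because it is where uniqueness comes from. My first attempt is to use the Kolmogorov backward equation of $p^{r\varPhi}_{1|t}$ and the translation-invariance of $r$ (which commutes with shifts on $\Z_N^D$) to show that $\varphi_t(x)/\varphi_t(x)$-type ratios propagate uniquely. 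The fallback is to cite \citep[Thm.~4]{liu2025adjoint} with the gradient replaced by the discrete ratio.

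With the fixed point equal to the SOC optimum, the standard decomposition
\begin{align*}
\KL(p\|q^{\varphih^{(k-1)}_1})=\KL(p_0\|q_0)+\E_{p_0}\KL(p_{\cdot|0}\|q_{\cdot|0})
\end{align*}
then shows that, under $p_0=\mu$, the minimizer keeps $q_{\cdot|0}$. By \cref{eq:soc_opt_path_measure} this is exactly $p^{r\varPhi^{(k)}}$, which proves (1).

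For (2), I read the statement as asserting that \cref{eq:loss_hatvarphi_tsm} and \cref{eq:loss_hatvarphi_dsm} have the same fixed point. Both losses are Bregman regressions, so each minimizer equals a conditional expectation given $x_1$ under $p:=p^{r\varPhi^{(k)}}$. By (1), $p$ has endpoint-only RND with respect to $p^r$, namely $\frac{\d p}{\d p^r}=\frac{\varphih_0(X_0)}{\mu(X_0)}\frac{\nu'(X_1)}{\varphih_1(X_1)}$ in SB form, with $\varphi^{(k)}_t$ and $\varphih^{(k)}_t:=p_t/\varphi^{(k)}_t$ satisfying \cref{eq:varphi_kbe,eq:hatvarphi_kfe}. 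The relations \cref{eq:sb_relation} therefore hold for $p$ with these potentials.

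Consequently the DM minimizer equals $\varphih^{(k)}_1(z)/\varphih^{(k)}_1(y)$ by \cref{eq:hatvarphi_ratio_dm}. The AM minimizer equals the same ratio by \cref{eq:hatvarphi_ratio_tm} at $t=0$ together with \cref{eq:hatvarphi0_ratio_param}, using $\varphih_0=\mu/\varphi_0$ and full support of $\mu$. The two fixed points therefore coincide.

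Finally, set $\frac{\d q^{\varphih^{(k)}_1}}{\d p^r}\propto \frac{\varphih^{(k)}_0(X_0)}{\mu(X_0)}\frac{\nu(X_1)}{\varphih^{(k)}_1(X_1)}$. Its time-$1$ marginal is $\varphi'_1\varphih^{(k)}_1\cdot\nu/\varphih^{(k)}_1$ with the appropriate normalization, which equals $\nu$, so $q_1=\nu$. It shares the conditional $q_{\cdot|1}=p_{\cdot|1}$ because the two RNDs differ only by a function of $X_1$. The time-reversed chain rule $\KL(p\|q)=\KL(p_1\|q_1)+\E_{p_1}\KL(p_{\cdot|1}\|q_{\cdot|1})$ then shows that this $q$ minimizes $\KL(p\|q)$ over $q_1=\nu$, which is the backward half bridge claim.
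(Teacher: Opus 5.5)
Your proof is correct, provided uniqueness of the \cref{eq:loss_varphi_tsm} fixed point is taken from the statement, which presupposes it. The paper does the same: it only shows that the SOC optimum with $g=\log\frac{\varphih^{(k-1)}_1}{\nu}$ satisfies the fixed-point identity. You can therefore drop your unfinished uniqueness sketch, which as written is not an argument. One related point: for a general $\varPhi$, the fixed-point identity is an expectation under the reciprocal projection $\sum_{x_0}p^{r\varPhi}_{0,1}(x_0,x_1)\,p^r_{t|0,1}(x|x_0,x_1)$, not under $p^{r\varPhi}_{1|t}$. Your ``a posteriori'' remark is valid only for the SOC candidate, which is another reason uniqueness cannot come out of this computation.

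Where you genuinely differ from the paper is in how you link the half-bridge problems to SOC. The paper works with \cref{def:cvg_q_psi} at the level of rates. It writes the forward rate of $q$ as $\frac{(q_t/\psi_t)(y)}{(q_t/\psi_t)(x)}r_t(y,x)$, differentiates $\log q_t$ and $\log\psi_t$ using the Kolmogorov forward equation, and applies \cref{lem:ctmc_jump_expectation} to get $\KL(p^u\|q)=\KL(p^u\|p^r)+\E_{p^u}\log\frac{\varphih^{(k-1)}_1}{\nu}(X_1)+\const$. Part (2) likewise uses the rate form of KL for backward chains.

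You instead describe $q^\psi$ as an endpoint reweighting of $p^r$, so you only need the disintegration of KL at $t=0$ (resp.\ $t=1$). This works because $q_{\cdot|0}=p^r_{\cdot|0}\,\e^{-g(X_1)}/Z(X_0)$ is the SOC optimum by \cref{eq:soc_opt_path_measure}, and $q=\nu\otimes p_{\cdot|1}$. Your route is shorter, avoids jump-process calculus entirely, and makes the IPF structure transparent.

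The cost is that the identification you ``take'' must actually be proved. From the endpoint form one gets $q_t=\psi_t\phi_t$ with $\phi_t(x)=\sum_{x_1}p^r_{1|t}(x_1|x)\frac{\nu}{\psi}(x_1)$. Hence $q_1=\nu$ exactly, with no normalization needed (your expression for $q_1$ in part (2) is garbled). Also $q_{s|t}(y|x)=\frac{\psi_s(y)}{\psi_t(x)}p^r_{t|s}(x|y)$, which gives precisely the backward rate in \cref{def:cvg_q_psi}. If $\mu$ is not fully supported, state the reweighting conditionally on $X_0$ (with $q_0=\psi_0 Z$ and $q_{\cdot|0}$ as above) rather than dividing by $\mu$. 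Your treatment of the two corrector losses coincides with the paper's.
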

Consequently, convergence is guaranteed by the theory of iterative proportional fitting \citep{ruschendorf1995convergence,chen2016entropic,de2021diffusion}.
See \cref{app:theory_cvg} for the proof.

\begin{table*}[t]
\centering
\caption{
Learning to sample from lattice Potts models with $L=16$ and $N=4$.
\textbf{\uline{Best}} result among all \textit{uniform-based} discrete neural samplers are highlighted.
$\dagger$: For $\betac$ and $\betal$, using warm-up strategy in PDNS \citep{guo2026proximal}.
}
\label{tab:exp_potts}
\resizebox{\linewidth}{!}{
\begin{tabular}{ccccccccccc}
\toprule
 & Inv. Temp. & \multicolumn{3}{c}{$\betah=0.9$} & \multicolumn{3}{c}{$\betac=1.0986$} & \multicolumn{3}{c}{$\betal=1.3$} \\ \midrule
Type & Metrics $\downarrow$ & \DeltaMag & \DeltaCorr & \EW & \DeltaMag & \DeltaCorr & \EW & \DeltaMag & \DeltaCorr & \EW \\ \midrule
\rowcolor{tabbg}
\cellcolor{tabbg} & \textbf{DASBS} & $8.7\e-3$ & $6.6\e-3$ & $\best{5.3}$ & $\best{4.1\e-3}$ & $\best{2.9\e-2}$ & $\best{20.0}$ & $\best{4.2\e-3}$ & $\best{7.3\e-3}$ & $\best{3.6}$ \\
\rowcolor{tabbg}
\multirow{-2}{*}{\cellcolor{tabbg}Uniform} & LEAPS & $\best{5.7\e-3}$ & $\best{5.0\e-3}$ & $8.2$ & $3.2\e-1$ & $2.6\e-1$ & $79.9$ & $3.6\e-1$ & $3.5\e-1$ & $90.5$ \\ \midrule
Mask & MDNS & $6.5\e-3$ & $5.1\e-3$ & $6.4$ & $5.2\e-3$ & $4.6\e-3$ & $1.9$ & $8.4\e-4$ & $6.1\e-4$ & $0.7$ \\ \midrule
MCMC & MH & $5.2\e-2$ & $4.7\e-2$ & $98.6$ & $4.9\e-1$ & $4.0\e-1$ & $273.2$ & $6.8\e-1$ & $6.4\e-1$ & $313.1$ \\ \bottomrule
\end{tabular}
}
\end{table*}

\section{Experiments}
\label{sec:exp}

In this section, we evaluate the effectiveness and efficiency of the proposed DASBS algorithm. We demonstrate that it achieves competitive sample quality across standard benchmarks while offering significant advantages in training speed. We further investigate the benefits of AM and non-memoryless reference dynamics through ablation studies. Additional details and results are provided in \cref{app:exp}.

\paragraph{Experimental Setup}
We test our method on two discrete distributions from statistical physics: the Ising and Potts models on square lattices. They are well known for exhibiting phase transitions as a function of the inverse temperature \citep{onsager1944crystal,beffara2012self} and serve as standard benchmarks for discrete sampling.
For the Ising model, we use a lattice of size $L=24$ (i.e., sequence length $D=L^2=576$) with states $\{\pm1\}$, and consider inverse temperatures $\betah=0.28$, $\betac=\log(1+\sqrt{2})/2\approx0.4407$, and $\betal=0.6$.
For the Potts model, we use $L=16$ ($D=L^2=256$) with $N=4$ states, and consider $\betah=0.9$, $\betac=\log(1+\sqrt{q})\approx1.0986$, and $\beta=1.3$.
We benchmark DASBS against leading discrete neural samplers, including LEAPS \citep{holderrieth2025leaps}, DFNS \citep{ou2025discrete}, UDNS \citep[App. F]{zhu2025mdns}, and MDNS \citep{zhu2025mdns}, as well as the classical Metropolis-Hastings (MH) algorithm.

\paragraph{Results and Discussion}
Quantitative results for the Ising and Potts models are summarized in \cref{tab:exp_ising,tab:exp_potts}, respectively. We assess sample quality using three metrics: the deviation of the magnetization (\DeltaMag), the deviation of the 2-point correlation (\DeltaCorr), and the energy Wasserstein-2 distance (\EW), all relative to ground-truth samples from the Swendsen-Wang (SW, \citet{swendsen1986replica,swendsen1987nonuniversal}) algorithm. For Ising model with high temperature, we also report the number of training steps and runtime on a single A6000 GPU, utilizing the maximum feasible batch size. 
Across all regimes, DASBS delivers \textbf{satisfactory sample fidelity} that is highly competitive with state-of-the-art uniform-based discrete neural samplers. Notably, DASBS confers a distinct advantage in \textbf{training efficiency}. We attribute this speed-up to three factors: (1) the use of the discrete score as a highly informative first-order oracle; (2) a memory-efficient design that avoids storing full roll-outs or computing loss over full trajectories; and (3) the simplicity of the matching loss objective. 

\begin{figure}[t]
    \centering
    \includegraphics[width=0.75\linewidth]{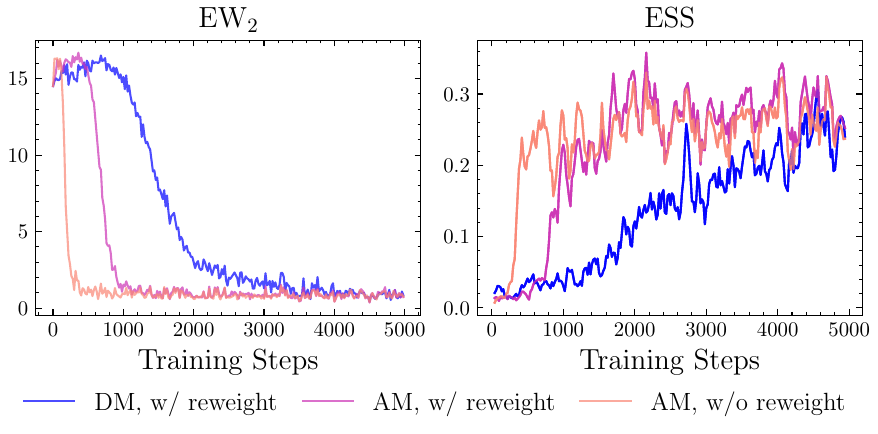}
    \caption{
    Ablation study of the adjoint matching (AM) and denoising matching (DM) training losses for the \textit{memoryless} noise schedule $\gamma_t=\frac1t$ on Potts model with $L=8$, $N=3$, $\betah=0.5$.
    Reweighting means using trajectory importance weight $p^\star/p^{\sg(u)}$ in the training losses.
    DM with reweighting corresponds to UDNS \citep[App. F]{zhu2025mdns}.
    \textit{Left}: Energy Wasserstein-2 distance to the ground-truth samples from SW algorithm.
    \textit{Right}: Effective sample size computed from the trajectory importance weights.
    }
    \label{fig:abl_tm_dm}
\end{figure}

\begin{figure}[t]
    \centering
    \includegraphics[width=0.75\linewidth]{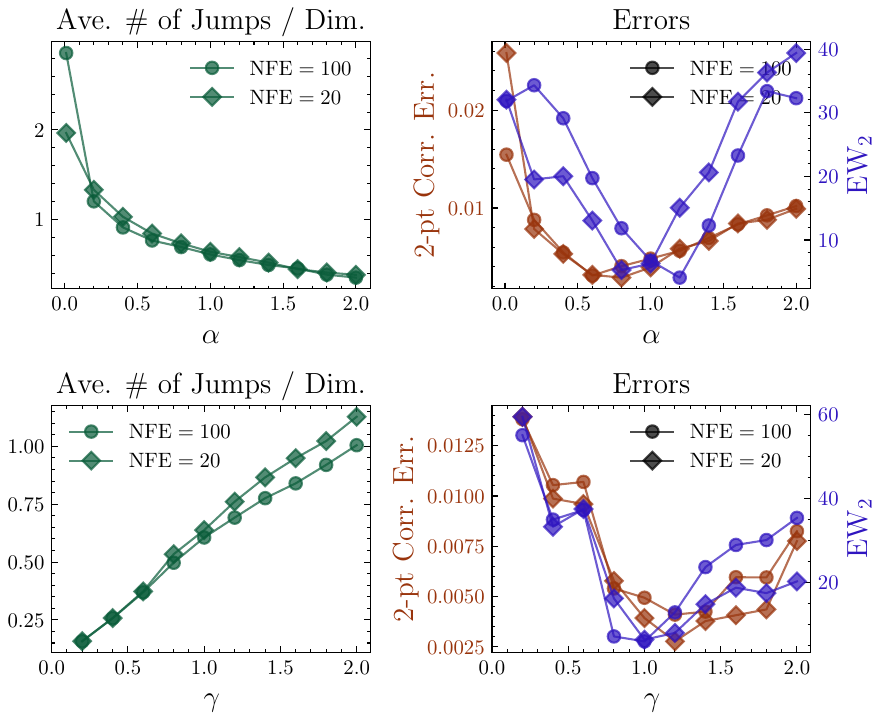}
    \caption{
    Ablation study of the hyperparameters $\alpha$ and $\gamma$ for the modified log-linear noise schedule $\gamma_t=\frac{\gamma}{t+\alpha}$ on Ising model with $L=24$ and $\betah=0.28$.
    The case of $\alpha=0$ is \textit{memoryless}.
    NFE is the number of function evaluations during generation for both training and inference.
    \textit{Top}: fix $\gamma=1$ and vary $\alpha$.
    \textit{Bottom}: fix $\alpha=1$ and vary $\gamma$.    
    \textit{Left}: average number of jumps for each dimension during generation.
    \textit{Right}: 2-point correlation error and energy Wasserstein-2 distance to ground-truth samples drawn from SW algorithm.
    }
    \label{fig:abl_loglin}
\end{figure}

\paragraph{Ablation Study: AM v.s. DM in Learning Controller}
In \cref{fig:abl_loglin}, we empirically validate the claim that AM is more efficient than DM. We focus on an $8\times 8$ Potts model ($N=3$, $\betah=0.5$). We choose the standard \textit{memoryless} noise schedule $\gamma_t=\frac{1}{t}$ \citep{ou2025your,zhu2025mdns} to isolate the impact of the training objectives \cref{eq:loss_varphi_tsm} v.s. \cref{eq:loss_varphi_dsm}. \cref{fig:abl_tm_dm} shows that, while all three methods eventually converge, AM exhibits \textbf{significantly faster convergence} than DM when trajectory importance reweighting is enabled. Furthermore, we observe that the importance reweighting can inadvertently slow down the convergence due to small effective sample sizes (even after convergence), likely stemming from variance of the estimated RND.

\paragraph{Ablation Study: Memoryless v.s. Non-Memoryless}
We investigate the impact of the noise schedule $\gamma_t$ in \cref{fig:abl_loglin}, and in particular, we show that non-memoryless noise schedule performs better than the memoryless one. We adopt the \textit{modified log-linear schedule} $\gamma_t = \frac{\gamma}{t+\alpha}$ on the $24\times24$ Ising model with $\betah=0.28$, which is memoryless when $\alpha=0$. \cref{fig:abl_loglin} shows that when either hyperparameter is fixed at $1$, the error metrics exhibit a distinct U-shaped curve as a function of the other parameter. Deviating from the optimal regime leads to degradation in sample quality: small $\gamma$ or large $\alpha$ reduces the transition rate magnitude and may hinder exploration, while large $\gamma$ or small $\alpha$ (particularly the memoryless case $\alpha=0$) induces excessive jumps during generation, also degrading performance. Notably, these trends remain consistent across different computational budgets, demonstrating the robustness of the optimal configuration. A similar study for the constant noise schedule $\gamma_t\equiv\gamma$ is provided in \cref{app:exp_further_res}.

\section{Conclusion and Future Work}
\label{sec:conc}
In this work, we introduced a unified framework for discrete SB and SOC, proposing DASBS  as an optimal-policy fixed-point iteration. By leveraging an additive noise scheme based on group structure, DASBS effectively extends AM to discrete domains.
Several limitations remain: 
our evaluation is currently restricted to synthetic benchmarks, and performance on more complex distributions remains unknown.
Additionally, the first-order nature of DASBS may be costly to implement if computing energy is expensive or not parallelizable.
Future directions include extending the framework to discrete SOC with running costs \citep{domingoenrich2025adjoint}, exploring non-uniform reference dynamics like the Ehrenfest process \citep{winkler2024bridging}, and broadening our theoretical insights of AM to general state (e.g., \citet{park2024stochastic,woo2025energybased,park2025functional}).

\bibliography{reference}
\bibliographystyle{icml2026}

\newpage
\appendix
\crefalias{section}{appendix}
\crefalias{subsection}{appendix}
\crefalias{subsubsection}{appendix}
\onecolumn

\section{Related Works}
\label{app:related_works}

\paragraph{1. Adjoint Matching}

The idea of AM \citep{domingoenrich2025adjoint} can be traced back to the adjoint method in optimal control theory \citep{pontryagin1987mathematical}, and a few of its earlier applications in machine learning \citep{han2016deep,chen2018neural,li2020scalable,domingoenrich2024stochastic}. The core idea is to define the adjoint state $a_t(X;u)$ associated with a control $u$ and trajectory $X=(X_t)_{t\in[0,1]}$ as $\nabla_{X_t}J_t(X)$, where $J_t(X)$ the cost-to-go along the trajectory starting from $X_t$. Then, one can establish the dynamics of the adjoint state as an ODE backward in time starting from $a_1$ being the gradient of the terminal cost. Furthermore, the gradient of the full cost-to-go at the initial time $t=0$ with respect to the control parameter can be written as an integral associated with the adjoint state, thus leading to a matching objective using the stop-gradient operator. It is shown in \citet{domingoenrich2025adjoint} that the optimal control is the unique fixed-point of the learning objective.
AM-based methods have been applied to multiple tasks including continuous neural sampler training \citep{havens2025adjoint,liu2025adjoint,choi2025nonequilibrium,blessing2025trust}, fine-tuning diffusion/flow-based models \citep{blessing2025trust,liu2025value,domingoenrich2025a}, and transition path sampling \citep{pidstrigach2025conditioning,howard2025control}, showing competitive performance.

While AM is popularly applied in continuous state spaces, its generalization to discrete state spaces remains unexplored. Recently, \citet{so2026discrete} made a first attempt by defining the discrete adjoint state as an estimator of the exponential of the value difference, and establishing its connection with the optimal transition rate, leading to a matching objective whose unique fixed-point is the optimal transition rate. However, their training loss requires row-out from intermediate states given the samples in the buffer, which is complicated.

\paragraph{2. Discrete Diffusion Neural Samplers}

The study of continuous neural samplers has inspired the construction of similar algorithms for learning discrete distributions.
For uniform-based discrete diffusion, \citet{holderrieth2025leaps} employed the discrete Jarzynski equality and learns the transport via locally equivariant neural networks, \citet{ou2025discrete} followed and refined this line of study by proposing a different loss formulation, and \citet{kholkin2025sampling} proposed a neural sampler based on target concrete score identity.
For mask-based discrete diffusion, \citet{zhu2025mdns} formulated the sampling problem as an SOC problem similar to \citet{zhang2022path}, and proposed samplers based on masked and uniform discrete diffusion.
Finally, discrete diffusion neural samplers are also used for solving combinatorial optimization problems via sampling from a low-temperature target distribution, e.g., \citet{sanokowski2023variational,sanokowski2024a,sanokowski2025scalable,ou2025discrete,guo2026proximal}.

\section{Theoretical Derivation}
\label{app:theory}

\subsection{Kolmogorov Equations for Continuous-time Markov Chains}
\label{app:theory_kol_eq}
\begin{proposition}
    Suppose $p_{t|s}(y|x)$, $0\le s<t\le 1$ are the transition kernels of a CTMC with transition rate $r$. Then the following equations hold:
    \begin{itemize}
        \item \textbf{Kolmogorov forward equation (KFE)}:
        \begin{align}\label{eq:kfe}\tag{KFE}
                \partial_tp_{t|s}(y|x) &= \sum_{z}r_t(y, z) p_{t|s}(z|x),
        \end{align}
        \item \textbf{Kolmogorov backward equation (KBE)}:
        \begin{align}\label{eq:kbe}\tag{KBE}
            \partial_sp_{t|s}(y|x) = - \sum_{z}  r_s(z, x)p_{t|s}(y|z).
        \end{align}
    \end{itemize}
    \label{prop:kol_eq}
\end{proposition}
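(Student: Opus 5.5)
The plan is to derive both equations from the Chapman--Kolmogorov (semigroup) identity $p_{t|s}(y|x)=\sum_z p_{t|u}(y|z)\,p_{u|s}(z|x)$ for $s\le u\le t$, together with the definition of the transition rate $r_t(y,x)=\lim_{h\to0}\frac{p_{t+h|t}(y|x)-1_{y=x}}{h}$. Since $\cX=[N]^D$ is finite, every sum is finite. Limits and sums therefore commute freely, and no integrability or uniformity issues arise beyond assuming that $r$ is (piecewise) continuous in time, so that the defining limits exist and the kernels are differentiable.

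For \cref{eq:kfe}, fix $s<t$ and $h>0$, and apply Chapman--Kolmogorov with intermediate time $t$:
\[
p_{t+h|s}(y|x)=\sum_z p_{t+h|t}(y|z)\,p_{t|s}(z|x).
\]
Subtract $p_{t|s}(y|x)=\sum_z 1_{y=z}\,p_{t|s}(z|x)$ and divide by $h$. Letting $h\to0$ and using the definition of $r_t(y,z)$ termwise gives the right derivative $\sum_z r_t(y,z)\,p_{t|s}(z|x)$. The left derivative follows the same way by splitting at $t-h$: write $p_{t|s}=\sum_z p_{t|t-h}(y|z)\,p_{t-h|s}(z|x)$ and use continuity of $p_{\cdot|s}$ and of $r$ in time.

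For \cref{eq:kbe}, split instead at the start. Write
\[
p_{t|s}(y|x)=\sum_z p_{t|s+h}(y|z)\,p_{s+h|s}(z|x),
\]
so that
\[
\frac{p_{t|s+h}(y|x)-p_{t|s}(y|x)}{h}=-\sum_z \frac{p_{s+h|s}(z|x)-1_{z=x}}{h}\,p_{t|s+h}(y|z).
\]
Letting $h\to0$, and using continuity of $p_{t|\cdot}(y|z)$ in its second time argument, yields $-\sum_z r_s(z,x)\,p_{t|s}(y|z)$. The two-sided version is handled as before.

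The only delicate point is the regularity: existence of the rate limits uniformly enough, and continuity of the kernels in time, which is needed to pass from one-sided to two-sided derivatives. For a finite state space with rates continuous in $t$, I would simply invoke the standard fact that the kernel is the time-ordered exponential of the generator $r$. That fact gives differentiability directly and makes both equations immediate: for each fixed $s$ the kernel solves the forward ODE $\partial_t P=rP$, and for each fixed $t$ it solves the backward ODE $\partial_s P=-Pr$.
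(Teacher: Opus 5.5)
Your proposal is correct and takes the same approach as the paper. The paper gives no detailed proof; it only says that both equations are standard and follow from the definition of the transition rate, and your Chapman--Kolmogorov argument (splitting at $t$ for the forward equation and at $s$ for the backward one, then letting $h\to0$ termwise over the finite state space) is exactly that argument written out, with your matrix forms $\partial_t P=rP$ and $\partial_s P=-Pr$ matching the paper's convention that $r_t(y,x)$ is the rate from $x$ to $y$.
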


These are standard results in the theory of CTMC and can be proved by leveraging the definition of the transition rate.

\subsection{Schrödinger Bridge: Proof of \cref{thm:sb_sol}}
\label{app:theory_sb}

We refer readers to \citet{leonard2014a} for a comprehensive review of the SB theory. For completeness, we provide a self-contained proof here.
\begin{lemma}
    There exists some non-negative functions $\phi,\varphi_1$ such that the optimal path measure $p^\star$ satisfies
    \begin{align}
        p^\star_{(0,1)|0,1}&=p^r_{(0,1)|0,1},
        \label{eq:sb_sol_p_star_cond}\\
        p^\star_{0,1}(x,y)&=p^r_{0,1}(x,y)\phi(x)\varphi_1(y),\quad\text{s.t.}~p^\star_0=\mu,~p^\star_1=\nu.
        \label{eq:sb_sol_p_star_marg}
    \end{align}
    In other words, $p^\star(X_{[0,1]})=p^r(X_{[0,1]})\phi(X_0)\varphi_1(X_1)$.
    \label{lem:sb_path_measure}
\end{lemma}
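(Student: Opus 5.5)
We plan to use the standard disintegration of path-space KL, followed by a Lagrangian (entropic optimal transport) argument on the two-time marginal. For any path measure $p$ on CTMC trajectories with $p\ll p^r$, the chain rule for KL along the endpoint map $X\mapsto(X_0,X_1)$ gives
\begin{align*}
\KL(p\|p^r)=\KL(p_{0,1}\|p^r_{0,1})+\E_{p_{0,1}(x,y)}\KL\big(p_{(0,1)|0,1}(\cdot|x,y)\,\big\|\,p^r_{(0,1)|0,1}(\cdot|x,y)\big).
\end{align*}
The constraints $p_0=\mu$ and $p_1=\nu$ involve only $p_{0,1}$. The second term is nonnegative and vanishes exactly when the bridges coincide. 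So any minimizer must satisfy \cref{eq:sb_sol_p_star_cond}: otherwise, replacing its bridges by those of $p^r$ keeps the constraints and strictly lowers the cost.

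We also need the minimizer to be induced by some transition rate $u$, i.e.\ to be Markov. We will handle this by showing, once the form $p^r(X)\phi(X_0)\varphi_1(X_1)$ is established, that such an $h$-transform of a Markov process is Markov. Its rate is then $r_t(y,x)\varphi_t(y)/\varphi_t(x)$ with $\varphi_t$ defined by \cref{eq:varphi_kbe}. Hence the infimum over rates equals the infimum over all path measures, and the reduction is legitimate.

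The problem now reduces to the finite-dimensional static problem $\min \KL(\pi\|p^r_{0,1})$ over couplings $\pi$ of $\mu$ and $\nu$ on the finite set $\cX\times\cX$. The feasible set is a compact convex polytope and the objective is strictly convex, so a unique minimizer $\pi^\star$ exists, provided a feasible $\pi$ with finite KL exists. This holds because $p^r_{1|0}(y|x)>0$ for all $x,y$ under \cref{eq:ref_transition_rate} with $\int\gamma>0$. Therefore $\mu\otimes\nu\ll p^r_{0,1}$, and we may assume that $p^r_{0,1}$ is fully supported.

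We introduce Lagrange multipliers $\alpha(x)$ and $\beta(y)$ for the marginal constraints. Stationarity gives $\log\frac{\pi^\star}{p^r_{0,1}}(x,y)+1=\alpha(x)+\beta(y)$ wherever $\pi^\star>0$. Setting $\phi=\e^{\alpha-1}$ and $\varphi_1=\e^{\beta}$ yields \cref{eq:sb_sol_p_star_marg}.

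The main obstacle is justifying this KKT step on the boundary, where $\pi^\star(x,y)=0$ for some pairs (for instance when $\mu$ or $\nu$ has zeros). We will address it in two ways. First, the derivative of $t\log t$ at $0$ is $-\infty$, so if $\pi^\star(x,y)=0$ while $\mu(x)\nu(y)>0$, perturbing toward $\mu\otimes\nu$ strictly decreases the objective. Hence $\pi^\star>0$ on $\mathrm{supp}\,\mu\times\mathrm{supp}\,\nu$, and the multipliers exist on that product set. Second, we set $\phi=0$ off $\mathrm{supp}\,\mu$ and $\varphi_1=0$ off $\mathrm{supp}\,\nu$, which is why the lemma only asserts nonnegativity. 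As an alternative, we could invoke convergence of Sinkhorn/IPF scaling on a finite space with positive kernel to obtain $\phi$ and $\varphi_1$ directly.

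Finally, combining the two pieces gives $p^\star(X_{[0,1]})=p^r_{0,1}(X_0,X_1)\phi(X_0)\varphi_1(X_1)\,p^r_{(0,1)|0,1}=p^r(X_{[0,1]})\phi(X_0)\varphi_1(X_1)$, which is the claim.
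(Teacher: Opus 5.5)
Your proposal is correct and follows the same route as the paper: the KL chain rule along $X\mapsto(X_0,X_1)$ forces the reference bridges, and a Lagrangian on the static coupling problem produces the factor $\phi(x)\varphi_1(y)$. Your additional steps are points the paper's proof passes over silently: existence of the minimizer via compactness and strict convexity, the $h$-transform check that the minimizer is Markov and hence induced by a rate, and positivity of $\pi^\star$ on $\mathrm{supp}\,\mu\times\mathrm{supp}\,\nu$ to justify KKT stationarity (this last step genuinely uses $p^r_{1|0}>0$).
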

\begin{proof}
By the chain rule of KL divergence, we have
\begin{align*}
    \KL(p^u\|p^r)&=\KL(p^u_{0,1}\|p^r_{0,1})+\E_{p^u_{0,1}(x_0,x_1)}\KL(p^u_{(0,1)|0,1}(\cdot|x_0,x_1)\|p^r_{(0,1)|0,1}(\cdot|x_0,x_1)).
\end{align*}
Therefore, the optimal $p^u_{(0,1)|0,1}$ is $p^r_{(0,1)|0,1}$, and we only need to solve the following \textit{static} SB problem:
\begin{align*}
    \min_{p_{0,1}}~&\KL(p_{0,1}\|p^r_{0,1})\\
    \text{s.t.}~&p_0=\mu,~p_1=\nu\iff\sum_yp_{0,1}(x,y)=\mu(x),~\sum_xp_{0,1}(x,y)=\nu(y).
\end{align*}

Let the Lagrangian multiplier functions for the above constraints be $\alpha(x)$ and $\beta(y)$, respectively. The Lagrangian function is
\begin{align*}
    L(p_{0,1},\alpha,\beta)&=\sum_{x,y}p_{0,1}(x,y)\log\frac{p_{0,1}(x,y)}{p^r_{0,1}(x,y)}\\
    &+\sum_x\alpha(x)\ro{\sum_yp_{0,1}(x,y)-\mu(x)}+\sum_y\beta(y)\ro{\sum_xp_{0,1}(x,y)-\nu(y)}.
\end{align*}
Set the partial derivative with respect to $p_{0,1}(x,y)$ to zero, we have
\begin{align*}
    \log\frac{p_{0,1}(x,y)}{p^r_{0,1}(x,y)}+1+\alpha(x)+\beta(y)=0 \implies p^\star_{0,1}(x,y)=p^r_{0,1}(x,y)\e^{-1-\alpha(x)-\beta(y)}.
\end{align*}
Therefore, by defining $\phi(x):=\e^{-1-\alpha(x)}$ and $\varphi_1(y):=\e^{-\beta(y)}$, we complete the proof. Note that $\phi$ and $\varphi_1$ are defined up to a constant scaling factor.
\end{proof}

\begin{lemma}
    Define $\varphih_0:=\phi p^r_0$, and define the SB potentials $\varphi_t$ and $\varphih_t$ through the following relation:
    \begin{align}\label{eq:def_sb_potentials}
        \varphi_t(x) = \sum_y p^r_{1|t}(y|x) \varphi_1(y), \quad \varphih_t(x) = \sum_y p^r_{t|0}(x|y) \varphih_0(y).
    \end{align}
    Then,
    \begin{align}
        \partial_t \varphi_t(x)&=- \sum_y \varphi_t(y)r_t(y,x),
        \label{eq:hopf-cole-varphi}
        \\
        \partial_t \varphih_t(x)&=\sum_y \varphih_t(y)r_t(x,y),
        \label{eq:hopf-cole-hatvarphi}
    \end{align}
    and furthermore, \cref{eq:varphi_kbe,eq:hatvarphi_kfe} hold.
\end{lemma}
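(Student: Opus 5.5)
The plan is to differentiate the defining formulas \cref{eq:def_sb_potentials} in $t$ and apply the Kolmogorov equations of \cref{prop:kol_eq}. Then, for the integrated relations \cref{eq:varphi_kbe,eq:hatvarphi_kfe}, I would use the Chapman--Kolmogorov (semigroup) property of the reference kernels, $p^r_{u|s}(z|x)=\sum_y p^r_{t|s}(y|x)p^r_{u|t}(z|y)$ for $s<t<u$, which holds since $p^r$ is Markov.

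\textbf{Step 1 (backward potential).} Write $\varphi_t(x)=\sum_y p^r_{1|t}(y|x)\varphi_1(y)$ and differentiate in the lower time index. By \cref{eq:kbe} with $s\gets t$ and $t\gets 1$, $\partial_t p^r_{1|t}(y|x)=-\sum_z r_t(z,x)p^r_{1|t}(y|z)$. Substituting and exchanging the finite sums gives
\begin{align*}
\partial_t\varphi_t(x)=-\sum_z r_t(z,x)\sum_y p^r_{1|t}(y|z)\varphi_1(y)=-\sum_z r_t(z,x)\varphi_t(z),
\end{align*}
which is \cref{eq:hopf-cole-varphi}. The state space is finite, so interchanging differentiation and summation is harmless.

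\textbf{Step 2 (forward potential).} Similarly, $\varphih_t(x)=\sum_y p^r_{t|0}(x|y)\varphih_0(y)$. Applying \cref{eq:kfe} to $p^r_{t|0}(x|y)$ gives $\partial_t p^r_{t|0}(x|y)=\sum_z r_t(x,z)p^r_{t|0}(z|y)$. Summing against $\varphih_0$ yields $\sum_z r_t(x,z)\varphih_t(z)$, which is \cref{eq:hopf-cole-hatvarphi}.

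\textbf{Step 3 (integrated forms).} For \cref{eq:varphi_kbe}, take $s<t$ and use Chapman--Kolmogorov:
\begin{align*}
\varphi_s(x)=\sum_z p^r_{1|s}(z|x)\varphi_1(z)=\sum_y p^r_{t|s}(y|x)\sum_z p^r_{1|t}(z|y)\varphi_1(z)=\sum_y p^r_{t|s}(y|x)\varphi_t(y).
\end{align*}
For \cref{eq:hatvarphi_kfe}, write $p^r_{t|0}(x|z)=\sum_y p^r_{t|s}(x|y)p^r_{s|0}(y|z)$ and sum against $\varphih_0(z)$ to obtain $\varphih_t(x)=\sum_y p^r_{t|s}(x|y)\varphih_s(y)$.

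As an alternative, the integrated forms also follow from the ODEs: $s\mapsto\sum_y p^r_{t|s}(y|x)\varphi_s(y)$ has zero derivative by \cref{eq:kbe} and \cref{eq:hopf-cole-varphi}, and it equals $\varphi_t(x)$ at $s=t$.

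\textbf{Main obstacle.} Nothing here is deep; the main risk is index bookkeeping. The paper writes rates as $r_t(y,x)$ for a jump $x\to y$, so KBE contracts the first argument and KFE the second. The sign and the argument order in \cref{eq:hopf-cole-varphi} versus \cref{eq:hopf-cole-hatvarphi} must match that convention, which I would check carefully. The boundary cases $s=0$ and $t=1$ are immediate from the definitions, since $p^r_{t|t}$ is the identity.
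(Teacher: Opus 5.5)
Your proof is correct, and for the integrated relations it takes a different route from the paper. Steps 1 and 2 are exactly the paper's computation of \cref{eq:hopf-cole-varphi,eq:hopf-cole-hatvarphi}.

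For \cref{eq:varphi_kbe,eq:hatvarphi_kfe} you factor $p^r_{1|s}$ and $p^r_{t|0}$ through the intermediate time by Chapman--Kolmogorov. The paper never invokes the semigroup property explicitly. Instead it shows that the right-hand sides are conserved quantities:
\begin{itemize}
\item it differentiates $\sum_y p^r_{t|s}(y|x)\varphi_t(y)$ in the upper index $t$, using \cref{eq:kfe} for the kernel and \cref{eq:hopf-cole-varphi} for the potential;
\item it differentiates $\sum_y p^r_{t|s}(x|y)\varphih_s(y)$ in the lower index $s$, using \cref{eq:kbe} and \cref{eq:hopf-cole-hatvarphi}.
\end{itemize}
In each case the two double sums cancel after relabeling $y\leftrightarrow z$. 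The endpoint evaluation that pins down the constant is left implicit.

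Your route is more elementary. It is purely algebraic, uses only the Markov property of $p^r$, needs no time-differentiability of the kernels, and does not depend on Steps 1--2. The paper's route stays within the Kolmogorov-equation calculus of \cref{prop:kol_eq}. It presents the integrated relations as consequences of the ODEs just derived: pairing the reference kernel with a potential propagated in the opposite time direction gives a quantity invariant in the intermediate time.

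One correction to your closing aside. You do not need it, but as written it is mis-indexed. The map $s\mapsto\sum_y p^r_{t|s}(y|x)\varphi_s(y)$ is not conserved. Differentiating it with \cref{eq:kbe} and \cref{eq:hopf-cole-varphi} gives $-\sum_z r_s(z,x)\sum_y p^r_{t|s}(y|z)\varphi_s(y)-\sum_{y,z}p^r_{t|s}(y|x)r_s(z,y)\varphi_s(z)$, and these terms do not cancel. For a two-state chain with symmetric rate $\lambda$ and $\varphi_1=(1,0)$, this map equals $\tfrac12(1+\e^{-2\lambda(1+t-2s)})$ at $x=1$, which depends on $s$.

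Even if it were constant, combining that with its value $\varphi_t(x)$ at $s=t$ would give a different identity from \cref{eq:varphi_kbe}. The correct conserved quantity is $\tau\mapsto\sum_y p^r_{\tau|s}(y|x)\varphi_\tau(y)$ on $[s,1]$, differentiated in the upper index with \cref{eq:kfe} rather than \cref{eq:kbe}. It equals $\varphi_s(x)$ at $\tau=s$ and the right-hand side of \cref{eq:varphi_kbe} at $\tau=t$. That corrected version is exactly the paper's argument.
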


\begin{proof}
First, we prove \cref{eq:hopf-cole-varphi,eq:hopf-cole-hatvarphi}:
\begin{align*}
    \partial_t \varphi_t(x)&=\sum_y \partial_t p^r_{1|t}(y|x)\varphi_1(y)\underset{\cref{eq:kbe}}{=}-\sum_y\sum_z r_t(z,x)p^r_{1|t}(y|z)\varphi_1(y)=-\sum_z r_t(z,x)\varphi_t(z),\\
    \partial_t \varphih_t(x)&=\sum_y \partial_t p^r_{t|0}(x|y)\varphih_0(y)\underset{\cref{eq:kfe}}{=}\sum_y\sum_z r_t(x,z)p^r_{t|0}(z|y)\varphih_0(y)=\sum_z r_t(x,z)\varphih_t(z).
\end{align*}

Next, we prove \cref{eq:varphi_kbe,eq:hatvarphi_kfe} by verifying that the partial derivatives to $t$ in \cref{eq:varphi_kbe} and to $s$ in \cref{eq:hatvarphi_kfe} are zero:
\begin{align*}
    \sum_y\partial_t(p^r_{t|s}(y|x)\varphi_t(y))&=\sum_y\sum_z r_t(y,z)p^r_{t|s}(z|x)\varphi_t(y)-\sum_y\sum_z p^r_{t|s}(y|x)r_t(z,y)\varphi_t(z)=0,\\
    \sum_y\partial_s(p^r_{t|s}(x|y)\varphih_s(y))&=-\sum_y\sum_z r_s(z,y)p^r_{t|s}(x|z)\varphih_s(y)+\sum_y\sum_z p^r_{t|s}(x|y)r_s(y,z)\varphih_s(z)=0.
\end{align*}

\end{proof}

\textit{Proof of \cref{eq:sb_relation}.}
We first study the marginal distribution at an arbitrary time point $t\in[0,1]$:
\begin{align*}
    p^\star_t(x)&=\sum_{x_0,x_1} p^\star_{0,1}(x_0,x_1)p^\star_{t|0,1}(x|x_0,x_1)
    \underset{\text{\cref{eq:sb_sol_p_star_cond,eq:sb_sol_p_star_marg}}}{=}\sum_{x_0,x_1}p^r_{0,1}(x_0,x_1)\phi(x_0)\varphi_1(x_1)p^r_{t|0,1}(x|x_0,x_1)\\
    &=\sum_{x_0} p^r_{0}(x_0)p^r_{t|0}(x|x_0)\phi(x_0)\sum_{x_1} p^r_{1|t}(x_1|x)\varphi_1(x_1)\underset{\cref{eq:def_sb_potentials}}{=}\varphih_t(x)\varphi_t(x).
\end{align*}

Next, we study the joint distribution at two arbitrary time points $0\le s<t\le 1$:
\begin{align*}
    p^\star_{s,t}(x,y)&=\sum_{x_0,x_1} p^\star_{0,1}(x_0,x_1)p^\star_{s,t|0,1}(x,y|x_0,x_1)\\
    &\underset{\text{\cref{eq:sb_sol_p_star_cond,eq:sb_sol_p_star_marg}}}{=}\sum_{x_0,x_1}p^r_{0,1}(x_0,x_1)\phi(x_0)\varphi_1(x_1)p^r_{s,t|0,1}(x,y|x_0,x_1)\\
    &=\sum_{x_0} p^r_{0,s,t,1}(x_0,x,y,x_1)\phi(x_0)\varphi_1(x_1)\\
    &=\sum_{x_0} p^r_{0}(x_0)p^r_{s|0}(x|x_0)\phi(x_0)\sum_{x_1} p^r_{t|s}(y|x)p^r_{1|t}(x_1|y)\varphi_1(x_1)\\
    &\underset{\cref{eq:def_sb_potentials}}{=}\varphih_s(x)p^r_{t|s}(y|x)\varphi_t(y).
\end{align*}

Therefore, the second and third equalities in \cref{eq:sb_relation} follow immediately.

\textit{Proof of \cref{eq:sb_opt_path_measure}.} 
This is obvious from \cref{lem:sb_path_measure,eq:sb_relation}:
\begin{align*}
    p^\star(X_{[0,1]})&=p^r(X_{[0,1]})\phi(X_0)\varphi_1(X_1)=p^r(X_{[0,1]})\frac{\varphih_0(X_0)}{\mu(X_0)}\varphi_1(X_1)=p^r(X_{[0,1]})\frac{\varphih_0(X_0)}{\mu(X_0)}\frac{\nu(X_1)}{\varphih_1(X_1)}.
\end{align*}
\hfill\qedsymbol

\textit{Proof of \cref{eq:u_star}.} From \cref{eq:sb_relation}, for $y\ne x$,
\begin{align*}
    u_t^\star(y,x)&=\lim_{h\to0}\frac{p^\star_{t+h|t}(y|x)}{h}=\lim_{h\to0}\frac{p^r_{t+h|t}(y|x)}{h}\frac{\varphi_{t+h}(y)}{\varphi_t(x)}=\frac{\varphi_t(y)}{\varphi_t(x)}\lim_{h\to0}\frac{p^r_{t+h|t}(y|x)}{h}=\frac{\varphi_t(y)}{\varphi_t(x)}r_t(y,x).
\end{align*}
\hfill\qedsymbol

\subsection{Stochastic Optimal Control: Proof of \cref{eq:soc_opt_path_measure}}
\label{app:theory_soc}

\begin{proof}
Using the chain rule of KL divergence and as $p^u_0=p^r_0=\mu$, we have
\begin{align*}
    \KL(p^u\|p^r)+\E_{X\sim p^u}g(X_1)&=\E_{\mu(X_0)}\sq{\E_{p^u(X_{(0,1]}|X_0)}\log\frac{p^u(X_{(0,1]}|X_0)}{p^r(X_{(0,1]}|X_0)\e^{-g(X_1)}}}.
\end{align*}
Therefore, for any $X_0$, the optimal $p^u(X_{(0,1]}|X_0)$ is
\begin{align*}
    p^\star(X_{(0,1]}|X_0)=\frac{1}{Z(X_0)}p^r(X_{(0,1]}|X_0)\e^{-g(X_1)},
\end{align*}
where
\begin{align*}
    Z(X_0)=\E_{p^r(X_{(0,1]}|X_0)\e^{-g(X_1)}}=\E_{p^r_{1|0}(y|X_0)}\e^{-g(y)}.
\end{align*}
Finally, combining this with $p^\star_0=p^r_0=\mu$ completes the proof.
\end{proof}

\subsection{Proof of \cref{thm:sb_soc_relation}}
\label{app:theory_soc_sb}
\begin{proof}
See \cref{app:theory_sb,app:theory_soc} for the proofs of \cref{eq:sb_opt_path_measure,eq:soc_opt_path_measure}, respectively. To conclude the proof, it suffices to show the equivalence when $g\gets\log\frac{\varphih_1}{\nu}$:
\begin{align*}
    Z(x)&\underset{\cref{eq:soc_opt_path_measure}}{=}\sum_y\frac{p^r_{1|0}(y|x)}{\varphih_1(y)}\nu(y)\underset{\cref{eq:sb_relation}}{=}\sum_y\frac{p^\star_{0|1}(x|y)}{\varphih_0(x)}\nu(y)=\sum_y\frac{p^\star_{0,1}(x,y)}{\varphih_0(x)}=\frac{\mu(x)}{\varphih_0(x)}.
\end{align*}
\end{proof}

\begin{corollary}
    \label{thm:soc_to_sb}
    The problem \cref{eq:soc} with terminal cost $g$ is equivalent to the problem \cref{eq:sb_prob} with terminal distribution
    \begin{align*}
        \nu(x)=\e^{-g(x)}\sum_yp^r_{1|0}(x|y)\frac{\mu(y)}{Z(y)},\qquad\text{where}~Z(y)=\E_{p^r_{1|0}(\cdot|y)}\e^{-g}.
    \end{align*}
\end{corollary}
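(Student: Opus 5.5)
The plan is to compute the terminal marginal $p^\star_1$ of the optimal path measure of \cref{eq:soc} with terminal cost $g$, and then show that $p^\star$ also solves \cref{eq:sb_prob} with $\nu:=p^\star_1$.

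First, by \cref{eq:soc_opt_path_measure} we have $p^\star_{\text{SOC}}(X_{[0,1]})=p^r(X_{[0,1]})\e^{-g(X_1)}/Z(X_0)$. Marginalizing over the path except at time $1$, and using $p^r_0=\mu$, gives
\begin{align*}
    p^\star_1(x)=\sum_y \mu(y)p^r_{1|0}(x|y)\frac{\e^{-g(x)}}{Z(y)}=\e^{-g(x)}\sum_y p^r_{1|0}(x|y)\frac{\mu(y)}{Z(y)},
\end{align*}
which is exactly the stated $\nu$. Normalization is automatic, because $\sum_x p^r_{1|0}(x|y)\e^{-g(x)}=Z(y)$ and hence $\sum_x p^\star_1(x)=\sum_y\mu(y)=1$. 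The initial marginal is $\mu$ by construction.

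Second, I will show that this $p^\star_{\text{SOC}}$ is the SB optimum for this $\nu$. The path measure has the product form $p^r(X_{[0,1]})\phi(X_0)\varphi_1(X_1)$ with $\phi=1/Z$ and $\varphi_1=\e^{-g}$. I will argue that this structure characterizes the \cref{eq:sb_prob} solution. The chain rule of KL, as in \cref{lem:sb_path_measure}, reduces the problem to the static one. For any coupling $p_{0,1}$ with the same marginals $\mu,\nu$,
\begin{align*}
    \KL(p_{0,1}\|p^r_{0,1})-\KL(p^\star_{0,1}\|p^r_{0,1})=\KL(p_{0,1}\|p^\star_{0,1})\ge0.
\end{align*}
This holds because $\log\frac{p^\star_{0,1}}{p^r_{0,1}}=\log\phi(x_0)+\log\varphi_1(x_1)$ is a sum of functions of the marginals, so its expectation coincides under $p$ and $p^\star$. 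This gives optimality, and uniqueness follows from strict convexity.

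Conversely, given this $\nu$, \cref{thm:sb_soc_relation} says the SB solution solves \cref{eq:soc} with cost $\log\frac{\varphih_1}{\nu}$. Equivalence of the two problems in both directions means only that they share the same optimizer $u^\star$. I will check this by consistency: with $\varphih_1$ built from $\varphih_0=\mu/Z$ propagated forward, one gets $\varphih_1(x)=\sum_y p^r_{1|0}(x|y)\mu(y)/Z(y)$. Then $\log\frac{\varphih_1}{\nu}=g$ exactly, using the formula for $\nu$.

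The main obstacle is mild. The potentials are only defined up to scaling, and positivity of $\e^{-g}$ and $Z$ is needed. Since $g$ is real-valued, both quantities are strictly positive, so the computation goes through without support issues.
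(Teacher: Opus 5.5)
Your proof is correct, but it takes a somewhat different route from the paper.

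The paper's proof is a bookkeeping identification. It sets the two Radon--Nikodym formulas \cref{eq:sb_opt_path_measure} and \cref{eq:soc_opt_path_measure} equal and reads off $\mu=\varphih_0 Z$ and $\nu=\varphih_1\e^{-g}$. It then substitutes the forward propagation $\varphih_1(x)=\sum_y p^r_{1|0}(x|y)\varphih_0(y)$, which is exactly your closing consistency check. Equating the two formulas tacitly assumes that the SOC optimum is also the SB optimum for its own terminal marginal. In other words, it assumes that any path measure of the form $p^r\,\phi(X_0)\varphi_1(X_1)$ with marginals $(\mu,\nu)$ solves \cref{eq:sb_prob}. \cref{lem:sb_path_measure} only proves the converse, namely a first-order necessary condition obtained from Lagrange multipliers.

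You instead compute $\nu$ directly as the time-$1$ marginal of the SOC optimum. This gives the same formula, since $p^\star_1=\varphi_1\varphih_1$. You then prove the missing sufficiency step with the Pythagorean identity $\KL(p_{0,1}\|p^r_{0,1})-\KL(p^\star_{0,1}\|p^r_{0,1})=\KL(p_{0,1}\|p^\star_{0,1})$. That identity holds because $\phi=1/Z$ and $\varphi_1=\e^{-g}$ are strictly positive, and it also yields uniqueness.

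Your version is longer but self-contained, and it makes ``equivalent'' rigorous. The paper's version is shorter because it leans on the SB/SOC structure already set up around \cref{thm:sb_soc_relation}. Once your second step is in place, the converse check in your third paragraph is redundant, though harmless.
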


\begin{proof}
It suffices to express $\nu$ by $g$ by comparing \cref{eq:sb_opt_path_measure,eq:soc_opt_path_measure}. We have $\mu=\varphih_0Z$ and $\nu=\varphih_1\e^{-g}$. From \cref{eq:hatvarphi_kfe}, $\varphih_1(x)=\sum_yp^r_{1|0}(x|y)\varphih_0(y)$. Combining all these three equations completes the proof.
\end{proof}

\subsection{Uniform Reference Dynamics}
\label{app:theory_p_ref}
We write $r_t(y,x)=\gamma_tr(y,x)$ where
\begin{align*}
    r(y,x)=\begin{cases}
        \frac{1}{N},&\text{if }\dh(y,x)=1,\\
        -D\ro{1-\frac1N},&\text{if }y=x,\\
        0,&\text{if otherwise.}
    \end{cases}
\end{align*}
By verifying the detailed balance condition, one is easy to see that this transition rate keeps $\punif=\unif(\cX)$ invariant:
\begin{align*}
    \punif(x)r_t(y,x)=\punif(y)r_t(x,y)=\frac{\gamma_t}{N^{D+1}}1_{\dh(y,x)=1},~\forall x\ne y.
\end{align*}

\begin{proposition}
    Define $\gammab_{s,t}:=\int_s^t\gamma_u\d u$. Then for $x,y\in\cX$ and $0\le s<t\le1$, the reference transition probability is
    \begin{equation}
        p^r_{t|s}(y|x)=A(s,t)^{\dh(x,y)}B(s,t)^{D-\dh(x,y)},~
        \text{where}~A(s,t):=\frac{1-\e^{-\gammab_{s,t}}}{N},~B(s,t):=\frac{1+(N-1)\e^{-\gammab_{s,t}}}{N}.
    \label{eq:p_r_trans_prob}
    \end{equation}
    \label{prop:p_ref_trans_prob}
\end{proposition}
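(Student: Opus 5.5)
The plan is to exploit the product structure of the generator. The matrix $r$ (with $r_t=\gamma_t r$) decomposes as a sum over coordinates, $r=\sum_{d=1}^D I\otimes\cdots\otimes Q\otimes\cdots\otimes I$, where $Q$ sits in the $d$-th slot and is the $N\times N$ single-site generator $Q=\frac1N\one\one\tp-I$. To check this, note that off-diagonal entries of $Q$ are $\frac1N$ and diagonal entries are $\frac1N-1=-(1-\frac1N)$; summing over the $D$ sites recovers the diagonal $-D(1-\frac1N)$ and the rate $\frac1N$ for Hamming-distance-one moves. The summands act on different tensor factors, so they commute. Moreover $r_t=\gamma_t r$ for a fixed matrix $r$, so the family $\{r_t\}$ commutes across times. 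Hence the solution of the Kolmogorov forward equation \cref{eq:kfe} from \cref{prop:kol_eq} is the time-ordered exponential, which here collapses to $p^r_{t|s}=\exp(\gammab_{s,t}\,r)$. Concretely, I would verify that $P(t):=\exp(\gammab_{s,t}r)$ satisfies $\partial_tP=\gamma_t rP$ with $P(s)=I$, and invoke uniqueness of solutions to this linear ODE.

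Next, the tensor structure together with commutativity gives $\exp(\gammab_{s,t}r)=\bigotimes_{d=1}^D\exp(\gammab_{s,t}Q)$. Probabilistically, this says the coordinates evolve as independent single-site chains. It therefore suffices to compute the single-site kernel. Since $J:=\frac1N\one\one\tp$ is idempotent, $Q=J-I$ and $J(I-J)=0$. So $\exp(cQ)=\e^{-c}\exp(cJ)=\e^{-c}\big(I+(\e^{c}-1)J\big)=\e^{-c}I+(1-\e^{-c})J$. The entries are $\frac{1-\e^{-c}}{N}=A$ off the diagonal and $\e^{-c}+\frac{1-\e^{-c}}N=\frac{1+(N-1)\e^{-c}}{N}=B$ on the diagonal, with $c=\gammab_{s,t}$.

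Finally, the $(y,x)$ entry of the tensor product is $\prod_d [\exp(cQ)]_{y^d,x^d}$. This product contains a factor $A$ for each of the $\dh(x,y)$ coordinates where $x$ and $y$ differ, and a factor $B$ for each of the remaining $D-\dh(x,y)$ coordinates, which gives \cref{eq:p_r_trans_prob}.

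The main obstacle is the rigor of the step $p^r_{t|s}=\exp(\gammab_{s,t}r)$ for a time-inhomogeneous rate, since in general the time-ordered exponential is not a plain exponential. Commutativity of $r_t$ across $t$ handles this. Alternatively, one can bypass matrix exponentials entirely by directly verifying that the claimed formula satisfies \cref{eq:kfe}. For this route, differentiate $A^{k}B^{D-k}$ in $t$ using $\partial_tA=\frac{\gamma_t}{N}\e^{-\gammab}$ and $\partial_tB=-\frac{(N-1)\gamma_t}{N}\e^{-\gammab}$, then compare against $\sum_z r_t(y,z)p_{t|s}(z|x)$, counting the neighbours $z$ of $y$ by whether they move closer to or farther from $x$. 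This check also pins down the convention that $r_t(y,x)$ is the rate from $x$ to $y$.
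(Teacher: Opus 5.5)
Your proposal is correct and follows essentially the same route as the paper: reduce to independent single-site chains with generator $\gamma_t(\frac1N\one\one\tp-\bI)$, write the kernel as $\exp(\gammab_{s,t}\,\cdot)$, evaluate it via $(\one\one\tp)^k=N^{k-1}\one\one\tp$ (equivalently, your idempotency of $J$), and take the product over coordinates. The paper simply asserts the coordinate independence and the closed-form exponential for the time-inhomogeneous rate. Your tensor-product decomposition of the generator and your observation that $r_t=\gamma_t r$ commutes across times justify both of these steps explicitly.
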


\begin{remark}
    As a corollary, $p^r_{1|t}(y|x)=q_t(y-x)$ where $q_t(\varepsilon)=A(t,1)^{\dh(\varepsilon,0)}B(t,1)^{D-\dh(\varepsilon,0)}$.
\end{remark}

\begin{proof}
Note that each dimension evolves independently with transition rate matrix $\gamma_t\bR$ where $\bR\in\R^{N\times N}$ has off-diagonal entries $\frac1N$ and diagonal entries $\frac1N-1$, i.e., $\bR=\frac1N\one\one\tp-\bI$.
For $x\in[N]$, the vector $\bp^r_{t|s}(\cdot|x):=(p^r_{t|s}(y|x))_{y\in[N]}$ satisfies the Kolmogorov forward equation $\partial_t\bp^r_{t|s}(\cdot|x)=\gamma_t\bR\bp^r_{t|s}(\cdot|x)$ with initial condition $\bp^r_{s|s}(\cdot|x)=\be^x$, i.e., the one-hot vector with $1$ at the $x$-th entry. Therefore, we have $\bp^r_{t|s}(\cdot|x)=\e^{\gammab_{s,t}\bR}\be^x$.

The matrix exponential can be computed as follows:
\begin{align*}
    &\e^{s\one\one\tp}=\sum_{k=0}^\infty\frac{s^k}{k!}(\one\one\tp)^k=\bI+\sum_{k=1}^\infty\frac{s^k}{k!}N^{k-1}\one\one\tp=\bI+\frac{\e^{sN}-1}{N}\one\one\tp,\\
    \implies&\e^{\lambda\bR}=\e^{\frac\lambda N\one\one\tp-\lambda\bI}=\e^{-\lambda}\e^{\frac\lambda N\one\one\tp}=\e^{-\lambda}\ro{\bI+\frac{\e^\lambda-1}{N}\one\one\tp}=\e^{-\lambda}\bI+\frac{1-\e^{-\lambda}}{N}\one\one\tp.
\end{align*}

Therefore, the per-dimension transition probability for $x,y\in[N]$ is
\begin{align}
    p^r_{t|s}(y|x)&=\begin{cases}
        \frac{1-\e^{-\gammab_{s,t}}}{N}=:A(s,t) & \text{if }y\ne x,\\
        \frac{1+(N-1)\e^{-\gammab_{s,t}}}{N}=:B(s,t) & \text{if }y=x,
    \end{cases}
    \label{eq:p_r_trans_prob_1d}
\end{align}
which implies the full transition probability for $x,y\in\cX$ is \cref{eq:p_r_trans_prob}.

\end{proof}

\begin{remark}
    Note that when $\gammab_{0,1}=\infty$, we have $p^r_{1|0}(y|x)=\frac{1}{N^D}$ for all $x,y\in\cX$, i.e., the reference dynamics is memoryless.
    Under the assumption that $p^r_0=\punif$, $p^r_t=\punif$ and thus $p^r_{t|1}(\cdot|y)$ can be implemented as follows: for each entry of $y$, independently, with probability $1-\e^{-\gammab_{t,1}}$, replace it with a random state from $\unif[N]$.
\end{remark}

\begin{proposition}
    Under the reference dynamics $p^r$, each dimension of $p^r_{t|0,1}(x|x_0,x_1)$ for $x,x_0,x_1\in\cX$ is independent, and the per-dimensional distribution is given by
    \begin{align}
        \begin{split}
        p^r_{t|0,1}(x|x_0,x_1)&=
        \begin{cases}
            \text{if }x_0\ne x_1:&
            \begin{cases}
                \frac{A(0,t)A(t,1)}{A(0,1)}, & \text{if }x\notin\{x_0,x_1\},\\
                \frac{B(0,t)A(t,1)}{A(0,1)}, & \text{if }x=x_0,\\
                \frac{A(0,t)B(t,1)}{A(0,1)}, & \text{if }x=x_1,
            \end{cases}
            \\
            \text{if }x_0=x_1:&
            \begin{cases}
                \frac{A(0,t)A(t,1)}{B(0,1)}, & \text{if }x\ne x_0=x_1,\\
                \frac{B(0,t)B(t,1)}{B(0,1)}, & \text{if }x=x_0=x_1,
            \end{cases}
        \end{cases}
        \end{split}
        \label{eq:p_r_bridge_1d}
    \end{align}
    for $x,x_0,x_1\in[N]$.
    \label{prop:p_ref_bridge}
\end{proposition}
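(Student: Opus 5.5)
The plan is to write the bridge as a ratio of reference transition probabilities and then use the product structure from \cref{prop:p_ref_trans_prob}. Since $p^r$ is Markov,
\begin{align*}
    p^r_{t|0,1}(x|x_0,x_1)=\frac{p^r_{0,t,1}(x_0,x,x_1)}{p^r_{0,1}(x_0,x_1)}=\frac{p^r_{t|0}(x|x_0)\,p^r_{1|t}(x_1|x)}{p^r_{1|0}(x_1|x_0)}.
\end{align*}
The initial factor $p^r_0(x_0)$ cancels between numerator and denominator. This also shows the bridge does not depend on $\mu$, which is worth stating explicitly.

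Next comes independence across dimensions. The proof of \cref{prop:p_ref_trans_prob} shows the rate matrix acts coordinatewise, so each of the three kernels factorizes over $d=1,\dots,D$. Equivalently, one can write $A^{\dh}B^{D-\dh}=\prod_{d}\bigl(A1_{x^d\ne y^d}+B1_{x^d=y^d}\bigr)$. Hence the ratio above is a product over $d$ of the one-dimensional ratios
\begin{align*}
    \frac{p^r_{t|0}(x^d|x_0^d)\,p^r_{1|t}(x_1^d|x^d)}{p^r_{1|0}(x_1^d|x_0^d)}.
\end{align*}
This gives the claimed independence, with each factor already normalized in $x^d$ by Chapman--Kolmogorov.

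The remaining step is routine case analysis using the per-dimension formula \cref{eq:p_r_trans_prob_1d}. I take $A(s,t)$ for off-diagonal entries and $B(s,t)$ for diagonal ones, which makes $p^r_{t|0}$ contribute $A(0,t)$ or $B(0,t)$ and $p^r_{1|t}$ contribute $A(t,1)$ or $B(t,1)$.

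If $x_0\ne x_1$, the denominator is $A(0,1)$, and there are three subcases:
\begin{itemize}
    \item $x\notin\{x_0,x_1\}$ gives $A(0,t)A(t,1)$;
    \item $x=x_0$ gives $B(0,t)A(t,1)$;
    \item $x=x_1$ gives $A(0,t)B(t,1)$.
\end{itemize}
These three are exhaustive and disjoint because $x_0\ne x_1$.

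If $x_0=x_1$, the denominator is $B(0,1)$, and there are two subcases:
\begin{itemize}
    \item $x\ne x_0$ gives $A(0,t)A(t,1)$;
    \item $x=x_0$ gives $B(0,t)B(t,1)$.
\end{itemize}

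There is no real obstacle here. The only point needing care is that the denominator is positive, so the conditional is well defined. This requires $\gammab_{0,1}>0$ for $A(0,1)>0$, and $B>0$ holds always. In the memoryless limit $\gammab_{0,1}=\infty$ the formulas still make sense with $e^{-\infty}=0$.
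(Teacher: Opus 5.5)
Your proposal is correct and follows essentially the same route as the paper's proof: you write the bridge via the Markov property as $p^r_{t|0}(x|x_0)\,p^r_{1|t}(x_1|x)/p^r_{1|0}(x_1|x_0)$, factorize it over coordinates, and read off the five cases from the per-dimension kernel. Your remarks on normalization via Chapman--Kolmogorov and on positivity of the denominator are correct details that the paper leaves implicit.
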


\begin{proof}    
Due to the Markov property of $p^r$, we have
\begin{align*}
    p^r_{t|0,1}(x|x_0,x_1)&=\frac{p^r_{0,t,1}(x_0,x,x_1)}{p^r_{0,1}(x_0,x_1)}=\frac{p^r_{t|0}(x|x_0)p^r_{1|t}(x_1|x)}{p^r_{1|0}(x_1|x_0)}.
\end{align*}
Note that again, each dimension is independent under $p^r$. Using the per-dimension transition probability \cref{eq:p_r_trans_prob_1d}, one can easily obtain the desired result.
\end{proof}

\paragraph{Choice of Noise Schedule $\gamma_t$}
We consider the following noise schedules:
\begin{itemize}
    \item Constant schedule: $\gamma_t\equiv\gamma>0$, $\gammab_{s,t}=\gamma(t-s)$.
    \item Modified log-linear schedule: $\gamma_t=\frac{\gamma}{t+\alpha}$ for some $\gamma,\alpha>0$, $\gammab_{s,t}=\gamma\log\frac{t+\alpha}{s+\alpha}$. A larger $\alpha$ means a stronger memory effect, and $\alpha=0$ recovers the memoryless case.
\end{itemize}
The ablation studies of the noise schedules can be found at \cref{fig:abl_const,fig:abl_loglin}.

\subsection{Inference via $\tau$-leaping}
\label{app:theory_tau_leap}

\begin{proposition}
    Using $\tau$-leaping to discretize the controlled CTMC, the transition probability for any dimension $d$ is approximated as
    \begin{align*}
        \Pr(X_{t+h}^d=n|X_t=x)\approx&\begin{cases}
            \frac{\gammab_{t,t+h}}{N}\varPhi_t(x)_{d,n}, & \text{if}~n\ne x^d,\\
            1-\frac{\gammab_{t,t+h}}{N}\sum_{n'\ne x^d}\varPhi_t(x)_{d,n'}, & \text{if}~n=x^d.
        \end{cases}
    \end{align*}
    The approximate transition probability can be computed as
    \begin{align}
        p^u_{t+h|t}(y|x)&\approx\prod_{d=1}^D\Pr(X_{t+h}^d=y^d|X_t=x)\nonumber\\
        &=\prod_{d=1}^D\ro{\frac{\gammab_{t,t+h}}{N}\varPhi_t(x)_{d,y^d}}^{1_{y^d\ne x^d}}\ro{1-\frac{\gammab_{t,t+h}}{N}\sum_{n'\ne x^d}\varPhi_t(x)_{d,n'}}^{1_{y^d=x^d}}.\label{eq:p_u_trans_prob}
    \end{align}
    \label{prop:p_u_trans_prob}
\end{proposition}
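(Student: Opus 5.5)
The plan is to write down the exact controlled transition kernel over a short interval $[t,t+h]$ in which $\varPhi_\tau(X_\tau)$ is frozen at $\varPhi_t(x)$. Then expand it to first order in $h$, replacing $h\gamma_t$ by $\gammab_{t,t+h}=\int_t^{t+h}\gamma_u\,\d u$. The $\tau$-leaping approximation consists of two modelling choices. First, the rates are frozen at their value at the left endpoint $X_t=x$. Second, the $D$ coordinates are treated as evolving independently over the step. We justify each choice and then read off the product formula \cref{eq:p_u_trans_prob}.

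First I will fix the rates. By \cref{eq:ref_transition_rate} and the parameterization $u_t(x^{d\gets n},x)=r_t(x^{d\gets n},x)\varPhi_t(x)_{d,n}=\frac{\gamma_\tau}{N}\varPhi_t(x)_{d,n}$ for $n\ne x^d$, freezing the state-dependent factor at $x$ leaves only the time factor $\gamma_\tau$. Under the independence assumption, coordinate $d$ then evolves as a single-site CTMC on $[N]$ started at $x^d$. Its jump rate to each $n\ne x^d$ is $\frac{\gamma_\tau}{N}\varPhi_t(x)_{d,n}$, and its total exit rate is $\frac{\gamma_\tau}{N}\sum_{n'\ne x^d}\varPhi_t(x)_{d,n'}$.

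Next I will integrate the Kolmogorov forward equation of \cref{prop:kol_eq} for this single-site chain over $[t,t+h]$. To first order (at most one jump per coordinate in a short step), $\Pr(X^d_{t+h}=n)=\int_t^{t+h}\frac{\gamma_\tau}{N}\varPhi_t(x)_{d,n}\d\tau+O(h^2)=\frac{\gammab_{t,t+h}}{N}\varPhi_t(x)_{d,n}+O(h^2)$. The stay probability is the complement, $1-\frac{\gammab_{t,t+h}}{N}\sum_{n'\ne x^d}\varPhi_t(x)_{d,n'}$, which gives the first display. If $\gamma$ is continuous, this agrees with the naive $h\gamma_t$ version up to $O(h^2)$, so $\gammab_{t,t+h}$ is simply the more accurate integrated choice.

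Finally, because the coordinates are independent given $X_t=x$, the joint kernel factorizes as $\prod_d\Pr(X^d_{t+h}=y^d\mid X_t=x)$. Writing each factor with the indicator exponents $1_{y^d\ne x^d}$ and $1_{y^d=x^d}$ yields \cref{eq:p_u_trans_prob}.

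The main subtle point is to make precise in what sense the statement's ``$\approx$'' holds. I will compare against the exact CTMC, in which simultaneous jumps in two coordinates have probability $O(h^2)$ and the true generator allows only Hamming-distance-one moves. The product kernel differs from the exact one only on such $O(h^2)$ events and through the freezing of $\varPhi$, which contributes $O(h^2)$ when $\varPhi$ is smooth in time. The approximation therefore has local error $O(h^2)$, matching the consistency of the standard $\tau$-leaping schemes cited. If $h$ is too large, the stay probability can become negative; I would note in a remark that in practice one clips it, or requires $\frac{\gammab_{t,t+h}}{N}\sum_{n'}\varPhi_t(x)_{d,n'}\le1$.
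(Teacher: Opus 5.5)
Your proposal is correct and follows essentially the same approach as the paper. The paper gives no separate proof: it treats this as the definition of $\tau$-leaping (main text and \cref{alg:dasbs_sample}), namely freezing $\varPhi_\tau(X_\tau)$ at $\varPhi_t(x)$ on $[t,t+h]$, integrating the remaining time factor $\frac{\gamma_\tau}{N}$ to obtain $\gammab_{t,t+h}$, and letting the coordinates evolve independently. That is exactly your first-order single-site computation followed by the product factorization. Your added $O(h^2)$ local-error comparison with the exact CTMC is not in the paper. One small correction there: $\gammab_{t,t+h}=h\gamma_t+O(h^2)$ requires $\gamma$ to be Lipschitz near $t$, not merely continuous, which only gives $o(h)$.
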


We summarize the sampling procedure in \cref{alg:dasbs_sample}.
Note that for uniform discrete diffusion models, other samplers such as uniformization \citep{chen2025convergence,ren2025how} and higher-order $\tau$-leaping \citep{ren2025fast} can also be applied, which we leave as future works.

\begin{algorithm}[ht]
\caption{Sampling of discrete adjoint Schrödinger bridge sampler (DASBS) via $\tau$-leaping}
\label{alg:dasbs_sample}
\begin{algorithmic}[1]
\REQUIRE Model $\varPhi$, initial distribution $\mu$, time discretization $0=t_0<t_1<...<t_M=1$.
\STATE Sample initial state $x_0\sim\mu$.
\FOR{$i=0$ \textbf{to} $M-1$}
    \STATE Query $\varPhi_{t_i}(x_i)\in\R_+^{D\times N}$ and compute $\gammab_{t_i,t_{i+1}}$.
    \STATE For each $d\in[D]$ (in parallel), independently sample
    \begin{align*}
        \Pr(x_{i+1}^d=n)&=\begin{cases}
            \frac{\gammab_{t_i,t_{i+1}}}{N}\varPhi_{t_i}(x_i)_{d,n}, & \text{if } n\ne x_i^d,\\
            1-\frac{\gammab_{t_i,t_{i+1}}}{N}\sum_{n'\ne x_i^d}\varPhi_{t_i}(x_i)_{d,n'}, & \text{if } n=x_i^d.
        \end{cases}
    \end{align*}
\ENDFOR
\OUTPUT $x_M$ as the generated sample.
\end{algorithmic}
\end{algorithm}

\subsection{Target Matching Loss}
\label{app:theory_varphih_star_tsm}
\textit{Proof of \cref{eq:varphih_star_tsm}.}
\begin{align*}
    \varphih_1(z)
    &\underset{\cref{eq:hatvarphi_kfe}}{=}
    \sum_{x\in\Z_N^D}p^r_{1|t}(z|x)\varphih_t(x)
    \underset{\cref{eq:additive_noise}}{=}
    \sum_{x\in\Z_N^D}p^r_{1|t}(z+\Delta|x+\Delta)\varphih_t(x)\\
    &\underset{x'\gets x+\Delta}{=}\sum_{x'\in\Z_N^D}p^r_{1|t}(z+\Delta|x')\varphih_t(x'-\Delta)
    \underset{\Delta\gets y-z}{=}\sum_{x'\in\Z_N^D}p^r_{1|t}(y|x')\varphih_t(x'-y+z),\\
    \implies\frac{\varphih_1(z)}{\varphih_1(y)}
    &=\sum_xp^r_{1|t}(y|x)\frac{\varphih_t(x-y+z)}{\varphih_1(y)}\underset{\cref{eq:sb_relation}}{=}
    \sum_xp^\star_{t|1}(x|y)\frac{\varphih_t(x-y+z)}{\varphih_t(x)}.
\end{align*}
\hfill\qedsymbol

\subsection{Denoising Matching Loss}
\label{app:theory_dm}
We first prove the denoising matching characterization of the controller: 
\begin{align}
    \frac{\varphi_t(y)}{\varphi_t(x)}
    &\underset{\cref{eq:varphi_kbe}}{=}\sum_z p^r_{1|t}(z|y)\frac{\varphi_1(z)}{\varphi_t(x)} = \sum_z \frac{p^r_{1|t}(z|y)}{p^r_{1|t}(z|x)}\frac{p^r_{1|t}(z|x)\varphi_1(z)}{\varphi_t(x)}
    \underset{\cref{eq:sb_relation}}{=}\sum_z \frac{p^r_{1|t}(z|y)}{p^r_{1|t}(z|x)} p^\star_{1|t}(z|x),
    \label{eq:varphi_ratio_dm}\\
    \implies\varPhi^\star&=\argmin_\varPhi\E_t\E_{p^\star_{t,1}(x,x_1)}\sum_{d=1}^D\sum_{n\ne x^d}D_f\ro{\frac{p^r_{1|t}(x_1|x^{d\gets n})}{p^r_{1|t}(x_1|x)}\middle\|\varPhi_t(x)_{d,n}}.
    \label{eq:varphi_star_dsm}
\end{align}

Thus, the denoising matching loss for the controller reads
\begin{align}
    \varPhi^{(k)} &:= \argmin_\varPhi
    \E_tw_t\E_{\substack{p^{\sg(r\varPhi)}_{0,1}(x_0,x_1)\\ p^r_{t|0,1}(x|x_0,x_1)}}\sum_{d=1}^D\sum_{n\ne x^d} D_f\ro{\colordm{\frac{p^r_{1|t}(x_1|x^{d\gets n})}{p^r_{1|t}(x_1|x)}}\middle\|\varPhi_t(x)_{d,n}}.\label{eq:loss_varphi_dsm}\tag{ctrl-DM}
\end{align}

Moreover, the variational characterization of $\varPhih^\star$ in the style of denoising matching is as follows:
\begin{align*}
    \varPhih^\star=\argmin_{\varPhih}\E_t\E_{p^\star_{t,1}(x,x_1)}\sum_{d=1}^D\sum_{n\ne x^d}D_f\ro{\frac{p^r_{1|t}(x_1^{d\gets n}|x)}{p^r_{1|t}(x_1|x)}\middle\|\varPhih(x_1)_{d,n}}.
\end{align*}

\subsection{Trajectory Importance Reweighting}
\label{app:theory_imp_samp}
By \textbf{trajectory importance reweighting}, we refer to the practice when we use the RND $\frac{p^\star(x_{[0,1]})}{p^u(x_{[0,1]})}$ in computing the losses \cref{eq:loss_varphi_tsm,eq:loss_varphi_dsm,eq:loss_hatvarphi_tsm,eq:loss_hatvarphi_dsm}. For instance, \cref{eq:loss_hatvarphi_tsm} becomes the following form:
\begin{align*}
    \varPhi^{(k)} &:= \argmin_\varPhi~
    \E_tw_t\boxed{\E_{p^{\sg(r\varPhi)}(x_{[0,1]})}\frac{p^\star(x_{[0,1]})}{p^{\sg(r\varPhi)}(x_{[0,1]})}}\E_{p^r_{t|0,1}(x|x_0,x_1)}\sum_{d=1}^D\sum_{n\ne x^d}D_f\ro{\colortm{\frac{\varphi_1(x_1^{d\gets x_1^d+n-x^d})}{\varphi_1(x_1)}}\middle\|\varPhi_t(x)_{d,n}}.
\end{align*}
In practice, for stability, after obtaining the log RND $\log\frac{p^\star(x_{[0,1]})}{p^{\sg(r\varPhi)}(x_{[0,1]})}$ up to an additive constant over a batch of trajectories, one typically apply \textit{softmax} to normalize the sum of weights to one.

Leveraging \cref{prop:p_u_trans_prob,prop:p_ref_trans_prob}, we can approximately compute $\log\frac{p^u(x_{[0,1]})}{p^r(x_{[0,1]})}$. With the approximation in the following \cref{lem:log_p_star_p_r}, we can approximate the log weights for importance sampling, $\log\frac{p^\star(x_{[0,1]})}{p^u(x_{[0,1]})}$.

\begin{lemma}
    When the reference dynamics is memoryless, $\log\frac{p^\star(x_{[0,1]})}{p^r(x_{[0,1]})}=\log\frac{\nu(x_1)}{p^r_1(x_1)}+\const$; otherwise, the log RND can be computed and approximated as follows:
    \begin{align}
        &\log\frac{p^\star(x_{[0,1]})}{p^r(x_{[0,1]})}=\int_0^1\sum_{y\ne x_t}\ro{1-\frac{\varphi_t(y)}{\varphi_t(x_t)}}r_t(y,x_t)\d t+\sum_{t:x_{t_-}\ne x_t}\log\frac{\varphi_t(x_t)}{\varphi_t(x_{t_-})}
        \label{eq:log_p_star_p_r_exact}\\
        &\approx\sum_{i=0}^{M-1}\sq{\sum_{d=1}^D\sum_{n\ne x_{t_i}^d}\frac{\gammab_{t_i,t_{i+1}}}{N}\ro{1-\varPhi_{t_i}(x_{t_i})_{d,n}}+\sum_{d:x_{t_i}^d\ne x_{t_{i+1}}^d}\log\varPhi_{t_i}(x_{t_i})_{d,x_{t_{i+1}}^d}},
        \label{eq:log_p_star_p_r_approx}
    \end{align}
    where $0=t_0<...<t_M=1$ are the discretized time points.
    \label{lem:log_p_star_p_r}
\end{lemma}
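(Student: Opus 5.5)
The plan is to compute the log Radon--Nikodým derivative between two CTMC path measures that share the initial law $\mu$, using the standard Girsanov-type formula for jump processes. Then specialize it to $u=u^\star$ from \cref{thm:sb_sol}.

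\textbf{Memoryless case.} Here no Girsanov computation is needed. By \cref{eq:sb_opt_path_measure}, $\log\frac{p^\star}{p^r}=\log\frac{\varphih_0(X_0)}{\mu(X_0)}+\log\frac{\nu(X_1)}{\varphih_1(X_1)}$. The memoryless discussion after \cref{thm:sb_soc_relation} gives $\varphih_1\propto p^r_1$ and $\varphi_0\equiv\const$. Since $\varphi_0\varphih_0=\mu$, the first term is constant. This yields $\log\frac{\nu(x_1)}{p^r_1(x_1)}+\const$.

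\textbf{General case: exact formula \cref{eq:log_p_star_p_r_exact}.} For two CTMCs with rates $u,r$ and the same initial law, I will use
\begin{align*}
\log\frac{p^u}{p^r}(x_{[0,1]})=\int_0^1\sum_{y\ne x_t}(r_t-u_t)(y,x_t)\,\d t+\sum_{t:x_{t_-}\ne x_t}\log\frac{u_t(x_t,x_{t_-})}{r_t(x_t,x_{t_-})}.
\end{align*}
I would justify this by discretizing time: take the product of one-step probabilities $1+h\,u_t(x,x)$ versus $1+h\,r_t(x,x)$ on holding intervals, and $h\,u$ versus $h\,r$ at jumps, then let $h\to0$. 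Equivalently, it is the pathwise density whose expectation gives the KL integrand in \cref{eq:sb_prob}. Substituting $u^\star_t(y,x)=\frac{\varphi_t(y)}{\varphi_t(x)}r_t(y,x)$ from \cref{eq:u_star} gives $(r_t-u^\star_t)(y,x_t)=\bigl(1-\frac{\varphi_t(y)}{\varphi_t(x_t)}\bigr)r_t(y,x_t)$, and the jump terms become $\log\frac{\varphi_t(x_t)}{\varphi_t(x_{t_-})}$. This is exactly \cref{eq:log_p_star_p_r_exact}.

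As a consistency check, I can also verify this against \cref{eq:sb_opt_path_measure}. Apply Itô/Dynkin to $\log\varphi_t(X_t)$ along the path, using \cref{eq:hopf-cole-varphi}. The time integral plus the jump sum then telescopes to $\log\varphi_1(x_1)-\log\varphi_0(x_0)$, which equals $\log\frac{\varphih_0\,\nu}{\mu\,\varphih_1}$ by $\varphi_0\varphih_0=\mu$ and $\varphi_1\varphih_1=\nu$.

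\textbf{Approximation \cref{eq:log_p_star_p_r_approx}.} On each interval $[t_i,t_{i+1}]$ I freeze the integrand at $t_i$. I restrict to Hamming-one neighbours $y=x^{d\gets n}$, where $r_t=\gamma_t/N$ and $\frac{\varphi_{t_i}(y)}{\varphi_{t_i}(x)}=\varPhi_{t_i}(x)_{d,n}$. Then $\int_{t_i}^{t_{i+1}}\gamma_t\,\d t=\gammab_{t_i,t_{i+1}}$. Consistently with the $\tau$-leaping of \cref{prop:p_u_trans_prob}, jumps within a step are attributed per dimension to the frozen controller, which gives $\log\varPhi_{t_i}(x_{t_i})_{d,x^d_{t_{i+1}}}$.

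\textbf{Main obstacle.} The only delicate point is rigorously establishing the CTMC Girsanov formula, and this is routine. The approximation is heuristic by design, since it treats multiple jumps in one step as independent per-dimension jumps.
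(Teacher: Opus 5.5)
Your proposal is correct, but your main derivation of \cref{eq:log_p_star_p_r_exact} takes a different route from the paper's. You invoke a CTMC Girsanov formula for $\log\frac{p^u}{p^r}$ and substitute $u^\star$ from \cref{eq:u_star}. The paper never uses a Girsanov formula. It starts from the factorized RND \cref{eq:sb_opt_path_measure}, which it rewrites via $\varphi_0\varphih_0=\mu$ and $\varphi_1\varphih_1=\nu$ as $\frac{p^\star}{p^r}=\frac{\varphi_1(x_1)}{\varphi_0(x_0)}$. It then expands $\log\varphi_1(x_1)-\log\varphi_0(x_0)$ along the piecewise-constant path as $\int_0^1\partial_t\log\varphi_t(x_t)\,\d t$ plus the jump increments, with $\partial_t\log\varphi_t$ read off from \cref{eq:hopf-cole-varphi}. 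That is exactly what you list as your consistency check.

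The paper's route uses only results it has already proved. It is a purely pathwise identity (the fundamental theorem of calculus on a step path), so it needs no external change-of-measure theorem. It also makes clear that the expression depends only on the endpoints. Your route relies on a standard Girsanov formula that is not proved in the paper and that you would still have to justify by discretization. In return it is more general: with the learned rate, the same formula gives $\log\frac{p^u}{p^r}$, which is the other half of the importance weight $\log\frac{p^\star}{p^u}$ the paper needs. It also explains the ``exit-rate difference plus jump log-ratio'' structure directly from the form of $u^\star$.

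Your memoryless argument and your time-freezing heuristic match the paper. One step you should state explicitly: $\varphi_{t_i}(y)/\varphi_{t_i}(x)$ equals the true $\varPhi^\star_{t_i}(x)_{d,n}$, not the model output. The final form \cref{eq:log_p_star_p_r_approx} comes from replacing this unknown $\varPhi^\star_{t_i}$ by the learned $\varPhi_{t_i}$. That replacement adds a learning-error approximation on top of the discretization error.
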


\begin{proof}
From \cref{eq:sb_opt_path_measure}, one can find $\log\frac{p^\star(x_{[0,1]})}{p^r(x_{[0,1]})}=\log\frac{\varphi_1(x_1)}{\varphi_0(x_0)}$. When memoryless, following the argument in \cref{sec:theory}, $\varphi_0=\const$, $\log\varphi_1=\log\frac{\nu}{\varphih_1}=\log\frac{\nu}{p^r_1}+\const$, which yields the desired result.

For general cases, inspired by the discussion in \citet[App. D.4]{liu2025adjoint}: \cref{eq:hopf-cole-varphi} implies
\begin{align*}
    \partial_t\varphi_t(x)=\sum_{y\ne x}(\varphi_t(x)-\varphi_t(y))r_t(y,x)\implies\partial_t\log\varphi_t(x)=\sum_{y\ne x}\ro{1-\frac{\varphi_t(y)}{\varphi_t(x)}}r_t(y,x).
\end{align*}
Hence, we have
\begin{align*}
    \log\frac{\varphi_1(x_1)}{\varphi_0(x_0)}&=\int_0^1\partial_t\log\varphi_t(x_t)\d t+\sum_{t:x_{t_-}\ne x_t}\log\frac{\varphi_t(x_t)}{\varphi_t(x_{t_-})}\\
    &=\int_0^1\sum_{y\ne x_t}\ro{1-\frac{\varphi_t(y)}{\varphi_t(x_t)}}r_t(y,x_t)\d t+\sum_{t:x_{t_-}\ne x_t}\log\frac{\varphi_t(x_t)}{\varphi_t(x_{t_-})}.
\end{align*}

Under our setting, the summation in the first term can be easily calculated:
\begin{align*}
    \sum_{y\ne x_t}\ro{1-\frac{\varphi_t(y)}{\varphi_t(x_t)}}r_t(y,x_t)&=\sum_{d=1}^D\sum_{n\ne x_t^d}\ro{1-\frac{\varphi_t(x_t^{d\gets n})}{\varphi_t(x_t)}}r_t(x_t^{d\gets n},x_t)=\sum_{d=1}^D\sum_{n\ne x_t^d}\ro{1-\varPhi^\star_t(x_t)_{d,n}}\frac{\gamma_t}{N}.
\end{align*}
For the second term, in theory, $x_{t_-}$ and $x_t$ differ by one dimension only, but during discretization, multiple dimensions may change simultaneously. A heuristic approximation is to decompose it into multiple single-dimension changes:
\begin{align*}
    \log\frac{\varphi_t(x_t)}{\varphi_t(x_{t_-})}&\approx\sum_{d:x_{t_-}^d\ne x_t^d}\log\frac{\varphi_t(x_{t_-}^{d\gets x_t^d})}{\varphi_t(x_{t_-})}=\sum_{d:x_{t_-}^d\ne x_t^d}\log\varPhi^\star_t(x_{t_-})_{d,x_t^d}.
\end{align*}
We can thus summarize the approximate calculation of $\frac{p^\star(x_{[0,1]})}{p^r(x_{[0,1]})}=\frac{\varphi_1(x_1)}{\varphi_0(x_0)}$ on the time-discretized trajectory $(x_{t_0},x_{t_1},...,x_{t_M})$ as follows:
\begin{align*}
    \log\frac{\varphi_1(x_1)}{\varphi_0(x_0)}&\approx\sum_{i=0}^{M-1}\sq{\sum_{d=1}^D\sum_{n\ne x_{t_i}^d}\frac{\gammab_{t_i,t_{i+1}}}{N}\ro{1-\varPhi^\star_{t_i}(x_{t_i})_{d,n}}+\sum_{d:x_{t_i}^d\ne x_{t_{i+1}}^d}\log\varPhi^\star_{t_i}(x_{t_i})_{d,x_{t_{i+1}}^d}}.
\end{align*}
Finally, as the ground-truth $\varPhi^\star$ is unavailable, we use $\varPhi$ to approximate its value. This concludes the proof.

\end{proof}

\begin{remark}
    Unlike the method proposed in \citet[App. D.4]{liu2025adjoint}, here we don't need to train $\frac{\varphih_t(y)}{\varphih_t(x)}$ along the trajectory. This is because here we derive through $\frac{\varphi_1(x_1)}{\varphi_0(x_0)}$ instead of $\frac{\varphih_1(x_1)}{\varphih_0(x_0)}$.
\end{remark}

\begin{remark}
    In the case of \textit{masked} diffusion, the path measure $p^u$ can be \textit{exactly} sampled and the log RND on a give trajectory can be computed \textit{precisely}; however, here, the estimated $\log\frac{p^\star(x_{[0,1]})}{p^u(x_{[0,1]})}$ involves \textit{time discretization error}, and furthermore,  \textit{learning error} if we use \cref{eq:log_p_star_p_r_approx} under non-memoryless cases. This possibly explains the low ESS observed in \cref{fig:abl_tm_dm} even after convergence.
\end{remark}

\subsection{Connection between DASBS and WDCE}
\label{app:theory_wdce}

\begin{proposition}
    Assume $\gammab_{0,1}=\infty$, i.e., the reference path measure $p^r$ is memoryless. Then the denoising loss for training the controller \cref{eq:loss_varphi_dsm} with trajectory importance reweighting, $D_f$ being the generalized KL divergence,\footref{fn:gen_kl} and time-weight $w_t\gets\frac{\gamma_t}{N}$ reduces the denoising cross-entropy (WDCE) loss in UDNS \citep[App. F]{zhu2025mdns}, which is equal to $\KL(p^\star\|p^u)+\const$.
    \label{prop:udns}
\end{proposition}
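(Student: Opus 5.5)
The plan is to expand the reweighted loss \cref{eq:loss_varphi_dsm} under the stated choices and recognize it as a WDCE loss, then identify that loss with $\KL(p^\star\|p^u)+\const$. With trajectory importance reweighting, the outer expectation becomes one over $p^\star_{0,1}(x_0,x_1)$. Combined with $p^r_{t|0,1}$ and \cref{eq:sb_sol_p_star_cond}, the pair $(x,x_1)$ is therefore distributed as $p^\star_{t,1}$. Under memorylessness, \cref{eq:sb_opt_path_measure} and the argument in \cref{sec:theory} give $p^\star\propto p^r\,\nu(X_1)/p^r_1(X_1)$. Hence $p^\star_1=\nu$, and $x\sim p^r_{t|1}(\cdot|x_1)$ is the uniform-noising posterior: each entry is independently resampled uniformly with probability $1-\e^{-\gammab_{t,1}}$, as in the remark after \cref{prop:p_ref_trans_prob}. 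This is exactly the sampling scheme of the UDNS WDCE loss, so the expectation structures agree.

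Next I would expand the Bregman divergence with $f(a)=a\log a$. It gives $D_f(a\|b)=a\log a-a-a\log b+b$. The terms $a\log a-a$ do not depend on $\varPhi$, so they are constants. Up to these constants, the loss becomes
\[
\E_t\frac{\gamma_t}{N}\E_{p^\star_{t,1}(x,x_1)}\sum_{d}\sum_{n\ne x^d}\Bigl(\varPhi_t(x)_{d,n}-\frac{p^r_{1|t}(x_1|x^{d\gets n})}{p^r_{1|t}(x_1|x)}\log\varPhi_t(x)_{d,n}\Bigr).
\]
Using \cref{prop:p_ref_trans_prob}, the ratio depends only on dimension $d$. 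It equals $B/A$ if $n=x_1^d\ne x^d$, $A/B$ if $n\ne x_1^d=x^d$, and $1$ otherwise, with $A=A(t,1)$ and $B=B(t,1)$. Substituting $u_t(x^{d\gets n},x)=\frac{\gamma_t}{N}\varPhi_t(x)_{d,n}$ turns the expression into $\E\int\sum_{y\ne x}\bigl(u_t(y,x)-r_t(y,x)\tfrac{p^r_{1|t}(x_1|y)}{p^r_{1|t}(x_1|x)}\log u_t(y,x)\bigr)\d t+\const$. This is the denoising score-entropy/cross-entropy form of WDCE in \citet[App.~F]{zhu2025mdns}, and I would match it term by term.

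To identify the loss with $\KL(p^\star\|p^u)$, I would use the Girsanov-type formula for CTMCs. It gives $\KL(p^\star\|p^u)=\E_{p^\star}\int\sum_{y\ne X_t}\bigl(u^\star\log\frac{u^\star}{u}-u^\star+u\bigr)(y,X_t)\d t$, since the initial laws coincide. By \cref{eq:u_star} and \cref{eq:varphi_ratio_dm}, $u^\star_t(y,x)=r_t(y,x)\,\E_{p^\star_{1|t}(x_1|x)}\frac{p^r_{1|t}(x_1|y)}{p^r_{1|t}(x_1|x)}$. Because the $u$-dependent part of the KL is linear in $u^\star$, I can replace $u^\star$ by this conditional expectation inside the outer expectation over $p^\star_t$ via the tower property. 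This gives $\E_{p^\star_{t,1}}\sum_{y}\bigl(u-r\frac{p^r_{1|t}(x_1|y)}{p^r_{1|t}(x_1|x)}\log u\bigr)$, which is the expression above, plus terms independent of $u$.

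I expect the main obstacle to be bookkeeping rather than any conceptual step. The weight $w_t=\gamma_t/N$ must turn $\varPhi$ exactly into $u$, with no extra factor: the $\log(\gamma_t/N)$ offset multiplies a $\varPhi$-free quantity and is absorbed into the constant. The notation and constants must also be reconciled with the precise UDNS definition of WDCE, including the fact that the memoryless endpoint $t\to0$ has $\gammab_{0,1}=\infty$. Integrability near $t=0$ also needs care, since $\gamma_t$ may blow up there, so the constants should be understood formally or after truncation.
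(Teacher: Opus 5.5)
Your proposal is correct, but for the identification with $\KL(p^\star\|p^u)$ it takes a different route from the paper.

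The paper's proof is essentially by citation. It quotes the first display on p.~39 of \citet{zhu2025mdns}, which already writes $\KL(p^\star\|p^u)+\const$ as an importance-reweighted generalized-KL regression of $\varPhi_t(x)_{d,n}$, with weight $\gamma_t/N$, onto the posterior ratio $p^\star_{t|1}(x^{d\gets n}|x_1)/p^\star_{t|1}(x|x_1)$. It then uses memorylessness to replace $p^\star_{t|1}$ by $p^r_{t|1}$, and the symmetry of the reference kernel to turn this target into the one in \cref{eq:loss_varphi_dsm}.

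You instead prove the $\KL$ identity directly. You use the CTMC Girsanov formula, the representation $u^\star_t(y,x)=r_t(y,x)\,\E_{p^\star_{1|t}(x_1|x)}\frac{p^r_{1|t}(x_1|y)}{p^r_{1|t}(x_1|x)}$ given by \cref{eq:u_star,eq:varphi_ratio_dm}, and the tower property applied to the $u$-dependent part $u-u^\star\log u$, which is linear in $u^\star$. This makes that step self-contained. It also shows that, for the uniform reference with any schedule, the equality between the reweighted \cref{eq:loss_varphi_dsm} loss and $\KL(p^\star\|p^u)+\const$ does not need memorylessness. Memorylessness is only needed so that $p^\star_{t|1}=p^r_{t|1}$ and $p^\star/p^r\propto\nu(X_1)/p^r_1(X_1)$ are tractable, i.e., to recognize the UDNS scheme.

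What your route leaves implicit is the literal ``term by term'' match with the UDNS target, which is exactly where the paper's symmetry step sits. The identity $\frac{p^r_{t|1}(x^{d\gets n}|x_1)}{p^r_{t|1}(x|x_1)}=\frac{p^r_{1|t}(x_1|x^{d\gets n})}{p^r_{1|t}(x_1|x)}$ requires $p^r_t$ to be uniform, e.g.\ $\mu=\punif$, or $\gammab_{0,t}=\infty$ for all $t>0$. Both the paper and your appeal to the remark after \cref{prop:p_ref_trans_prob} assume this tacitly, so it is a shared caveat rather than a gap specific to your argument.
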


\begin{proof}
The first equation on \citet[Page 39]{zhu2025mdns} reads (using the notation in this paper)
\begin{align*}
    \KL(p^\star\|p^u)&=\E_{p^{\sg(u)}(\xb_{[0,1]})}\frac{p^\star(\xb_{[0,1]})}{p^{\sg(u)}(\xb_{[0,1]})}\E_t\frac{\gamma_t}{N}\E_{p^\star_{t|1}(x|\xb_1)}\sum_{d=1}^D\sum_{n\ne x^d}D_f\ro{\frac{p^\star_{t|1}(x^{d\gets n}|\xb_1)}{p^\star_{t|1}(x|\xb_1)}\middle\|\varPhi_t(x)_{d,n}}+\const,
\end{align*}
where $D_f$ is the generalized KL divergence. Under the memoryless assumption, we have $p^\star_{t|1}(x|x_1)=p^r_{t|1}(x|x_1)$. Due to the symmetry $p^r_{t|1}(x|x_1)=p^r_{t|1}(x_1|x)$, the equivalence to \cref{eq:loss_varphi_dsm} is obvious. 

\end{proof}

\begin{proposition}
    Consider the mask-augmented state space $\cX=\{1,...,N,\mask\}^D$. Let the reference transition rate be 
    \begin{align}
    r_t(y,x)=\begin{cases}
        \frac{\gamma_t}{N},&\text{if }y=x^{d\gets n},~x^d=\mask,~n\in[N],\\
        -\gamma_t|\{d:x^d=\mask\}|,&\text{if }y=x,\\
        0,&\text{otherwise,}
    \end{cases}
    \label{eq:ref_transition_rate_mask}
    \end{align}
    for some noise schedule $\gamma_\cdot:[0,1]\to\R_+$. Let $\gammab_{s,t}:=\int_s^t\gamma_u\d u$ for $0\le s<t\le1$, and assume $\gammab_{t,1}=\infty$ for any $0\le t<1$. Then, the denoising loss for training the controller \cref{eq:loss_varphi_dsm} with trajectory importance weighting, $D_f$ being the generalized KL divergence,\footref{fn:gen_kl} and time-weight $\frac{\gamma_t}{N}$ reduces to the WDCE loss in MDNS \citep{zhu2025mdns}, which is also equal to $\KL(p^\star\|p^u)+\const$.
    \label{prop:mdns}
\end{proposition}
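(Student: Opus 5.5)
The plan mirrors the proof of \cref{prop:udns}: start from the WDCE representation of $\KL(p^\star\|p^u)$ for masked diffusion in \citet{zhu2025mdns}, and show it coincides term by term with \cref{eq:loss_varphi_dsm} under the masked reference \cref{eq:ref_transition_rate_mask}.

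\textbf{Step 1 (memorylessness).} First record the per-dimension masked reference kernel. Write $\mask$ for the mask state. A masked entry stays masked on $[s,t]$ with probability $\e^{-\gammab_{s,t}}$; otherwise it is unmasked to a uniform value in $[N]$. An unmasked entry never changes. The assumption $\gammab_{t,1}=\infty$ for all $t<1$ then gives $p^r_{1|t}(x_1|x)=\prod_{d:x^d=\mask}\frac1N\prod_{d:x^d\ne\mask}1_{x_1^d=x^d}$. Hence $p^r_{1|0}$ does not depend on the initial state (take $\mu=\delta_{\mask^D}$, so this is immediate). The reference is memoryless, so $\varphi_0=\const$, and by \cref{lem:sb_path_measure} we get $p^\star_{t|1}=p^r_{t|1}$. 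This reduces the whole statement to comparing regression targets.

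\textbf{Step 2 (targets agree).} With the path reweighting $p^\star/p^{\sg(u)}$, the outer expectation in \cref{eq:loss_varphi_dsm} becomes $\E_{p^\star_{1}(x_1)}\E_{p^r_{t|1}(x|x_1)}$, which is the same sampling as in WDCE. The only nonzero rates are $r_t(x^{d\gets n},x)=\gamma_t/N$ for $x^d=\mask$, so the sum over $(d,n)$ ranges over masked $d$ and $n\in[N]$. On these pairs I compute the DM target $\frac{p^r_{1|t}(x_1|x^{d\gets n})}{p^r_{1|t}(x_1|x)}$ from the product formula. It equals $N\,1_{x_1^d=n}$: unmasking entry $d$ to $n$ removes one factor $\frac1N$ and imposes $x_1^d=n$. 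For the MDNS target $\frac{p^\star_{t|1}(x^{d\gets n}|x_1)}{p^\star_{t|1}(x|x_1)}$, I use $p^r_{t|1}=p^r_{1|t}p^r_t/p^r_1$ together with the explicit masked marginal $p^r_t(x)=\prod_d\e^{-\gammab_{0,t}1_{x^d\ne\mask}}\ldots$, or equivalently the direct per-dimension backward kernel. This target is $1_{x_1^d=n}\cdot\frac{1-\e^{-\gammab_{0,t}}}{\e^{-\gammab_{0,t}}}$-type ratios. Here I expect the main obstacle: the two targets differ by a $t$-dependent factor, and I must check whether this factor is absorbed by the time weight or by a different convention for $\gamma$ (e.g.\ $\gammab_{t,1}$ versus $\gammab_{0,t}$ schedules in MDNS).

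\textbf{Step 3 (Bregman and weight bookkeeping).} I would first try reconciling the factor through the generalized KL. Write $D_f(a\|b)=a\log a-a\log b-a+b$ and expand. Terms not involving $\varPhi$ become constants, and the remaining cross-entropy $-a\log b+b$ must match the WDCE integrand after multiplying by $w_t=\gamma_t/N$. If a residual $t$-factor remains, I would reabsorb it by invoking $\E_{p^\star_{t|1}}$ of the target equals the true ratio $\varPhi^\star_t$, which is identical in both formulations by \cref{eq:varphi_ratio_dm}. The two losses then differ only by $\varPhi$-independent terms, since for fixed $x$ both are Bregman regressions onto the same conditional mean. Finally, I would quote \citet{zhu2025mdns} for the identity WDCE $=\KL(p^\star\|p^u)+\const$, completing the argument.
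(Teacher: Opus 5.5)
Your Step 1 and your DM target, $\frac{p^r_{1|t}(x_1|x^{d\gets n})}{p^r_{1|t}(x_1|x)}=N1_{x_1^d=n}$ for masked $d$, are correct and agree with the paper. The gap is in what you take the MDNS WDCE loss to be. You import the uniform-case (UDNS) form, a Bregman regression of $\varPhi_t(x)_{d,n}$ onto $\frac{p^\star_{t|1}(x^{d\gets n}|x_1)}{p^\star_{t|1}(x|x_1)}$. In the masked case the $t$-dependent mismatch you anticipate is real, and no schedule convention removes it. Since $p^\star_{t|1}=p^r_{t|1}=p^r_{1|t}\,p^r_t/p^r_1$, that target equals the DM target times $\frac{p^r_t(x^{d\gets n})}{p^r_t(x)}=\frac{1-\e^{-\gammab_{0,t}}}{N\e^{-\gammab_{0,t}}}$, which runs from $0$ to $\infty$ over $[0,1]$. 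The two targets coincide only when $p^r_t$ is uniform, which is why the symmetry argument works in \cref{prop:udns} and not here. Your Step 3 repair also fails. Targets differing by a deterministic non-unit factor have conditional means (over $p^\star_{1|t}(x_1|x)$, not $p^\star_{t|1}$) differing by that same factor. Concretely, the UDNS-form target averages to $\frac{p^\star_t(x^{d\gets n})}{p^\star_t(x)}=\varPhi^\star_t(x)_{d,n}\frac{p^r_t(x^{d\gets n})}{p^r_t(x)}$, using $\varphih_t\propto p^r_t$ here. By contrast, \cref{eq:varphi_ratio_dm} identifies only the mean of the DM target with $\varPhi^\star_t(x)_{d,n}$. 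So the two losses have different minimizers and cannot differ by a $\varPhi$-independent constant.

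The missing idea is that the masked MDNS WDCE is a cross-entropy: $\gamma_t\sum_{d:x^d=\mask}-\log s_\theta(x)_{d,x_1^d}$, with importance weights and $x\sim p^r_{t|1}(\cdot|x_1)$, for a denoiser whose rows $s_\theta(x)_{d,\cdot}$ are probability vectors. The generalized-KL regression onto the scaled one-hot target $N1_{x_1^d=n}$ already is this cross-entropy, so there is no second target to reconcile. Parametrize $\varPhi_t(x)_{d,n}=N s_\theta(x)_{d,n}$ for masked $d$ with $\sum_{n=1}^N s_\theta(x)_{d,n}=1$; this matches the masked-diffusion form $\varPhi^\star_t(x)_{d,n}=N\Pr_{X\sim\nu}(X^d=n|X^\um=x^\um)$. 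Then
\[
\sum_{n=1}^N D_f\bigl(N1_{x_1^d=n}\,\big\|\,N s_\theta(x)_{d,n}\bigr)=-N\log s_\theta(x)_{d,x_1^d}-N+N\sum_{n=1}^N s_\theta(x)_{d,n}=-N\log s_\theta(x)_{d,x_1^d}.
\]
The normalization is what kills the linear term $+b$ in $D_f(a\|b)=a\log\frac ab-a+b$. Your Step 3 expansion never uses it, and without it the $\varPhi$-dependent term $\sum_n\varPhi_t(x)_{d,n}$ survives. Multiply by $w_t=\gamma_t/N$ and use your correct observation that the reweighted outer expectation is $\E_{p^\star_1(x_1)}\E_{p^r_{t|1}(x|x_1)}$. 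This yields exactly the WDCE of \citet{zhu2025mdns}, whose identity with $\KL(p^\star\|p^u)+\const$ can then be quoted.
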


\begin{proof}
Throughout the proof, we always assume $n\in[N]$ is a non-mask state.
From \citet{zhu2025mdns}, we have the following results:

\begin{align*}
    u^\star_t(x^{d\gets n},x)&=\gamma_t\Pr_{X\sim\nu}(X^d=n|X^\um=x^\um)1_{x^d=\mask}\\
    \varPhi^\star_t(x)_{d,n}=\frac{\varphi_t(x^{d\gets n})}{\varphi_t(x)}&=N\Pr_{X\sim\nu}(X^d=n|X^\um=x^\um)1_{x^d=\mask}\\
    p^r_{t|s}(y|x)&=\prod_{d:x^d=\mask}\ro{\frac{1-\e^{-\gammab_{s,t}}}{N}}^{1_{y^d\ne\mask}}\ro{\e^{-\gammab_{s,t}}}^{1_
    {y^d=\mask}}\cdot\prod_{d:x^d\ne\mask}1_{x^d=y^d},~0\le s<t\le1\\
    \implies\frac{p^r_{1|t}(x_1|x^{d\gets n})}{p^r_{1|t}(x_1|x)}&=N1_{x_1^d=n}1_{x^d=\mask}~(\text{suppose $\forall d$ s.t. $x^d\ne\mask$, $x_1^d=x^d$})\\
\end{align*}

Let $s_\theta:\cX\to\R^{D\times N}_{\ge0}$ be the neural network to learn the conditional distribution in $\nu$, i.e., $s_\theta(x)_{d,n}\approx\Pr_{X\sim\nu}(X^d=n|X^\um=x^\um)$ for $x^d=\mask$, and $s_\theta(x)_{d,n}=1_{x^d=n}$ for $x^d\ne\mask$. We assume $\sum_{n=1}^Ns_\theta(x)_{d,n}=1$ for all $d$.

Therefore, with generalized KL divergence, the loss \cref{eq:loss_varphi_dsm} with trajectory importance reweighting becomes
\begin{align*}
    &\E_tw_t\E_{p^{\sg(\theta)}(x_{[0,1]})}\de{p^\star}{p^{\sg(\theta)}}(x_{[0,1]})\E_{p^r_{t|0,1}(x|x_0,x_1)}\sum_{d=1}^D\sum_{n\ne x^d}D_f(N1_{x_1^d=n}1_{x^d=\mask}\|Ns_\theta(x)_{d,n}1_{x^d=\mask}+1_{x^d=n})\\
    &=\E_tw_t\E_{p^{\sg(\theta)}(x_{[0,1]})}\de{p^\star}{p^{\sg(\theta)}}(x_{[0,1]})\E_{p^r_{t|1}(x|x_1)}\sum_{d:x^d=\mask}\sum_{n=1}^ND_f(N1_{x_1^d=n}\|Ns_\theta(x)_{d,n})\\
    &=\E_tw_t\E_{p^{\sg(\theta)}(x_{[0,1]})}\de{p^\star}{p^{\sg(\theta)}}(x_{[0,1]})\E_{p^r_{t|1}(x|x_1)}\sum_{d:x^d=\mask}\sum_{n=1}^N\ro{N1_{x_1^d=n}\log\frac{1_{x_1^d=n}}{s_\theta(x)_{d,n}}-N1_{x_1^d=n}+Ns_\theta(x)_{d,n}}\\
    &=\E_tw_t\E_{p^{\sg(\theta)}(x_{[0,1]})}\de{p^\star}{p^{\sg(\theta)}}(x_{[0,1]})\E_{p^r_{t|1}(x|x_1)}\sum_{d:x^d=\mask}-N\log s_\theta(x)_{d,x_1^d}+\const.
\end{align*}

Thus, with $t\sim\unif(0,1)$ and time weights $\frac{\gamma_t}{N}$, the loss
\begin{align*}
    \min_\theta\E_{p^{\sg(\theta)}(x_{[0,1]})}\de{p^\star}{p^{\sg(\theta)}}(x_{[0,1]})\E_{t\sim\unif(0,1)}\gamma_t\E_{p^r_{t|1}(x|x_1)}\sum_{d:x^d=\mask}-\log s_\theta(x)_{d,x_1^d},
\end{align*}
which is exactly the same as the WDCE loss in \citet{zhu2025mdns} if we choose the canonical noise schedule in masked diffusion model: $\gamma_t=\frac{1}{t}$.
\end{proof}

\subsection{Convergence of Alternating Update: Proof of \cref{thm:cvg}}
\label{app:theory_cvg}

\begin{definition}
    \label{def:cvg_q_psi}
    For a function $\psi:\cX\to\R_+$, we use $q^\psi$ to denote the path measure of a CTMC $(Y_t)_{t\in[0,1]}$ induced by the \emph{backward} transition rate $\ul_t(y,x)=\frac{\psi_t(y)}{\psi_t(x)}r_t(x,y)$, $y\ne x$ and initialized at $Y_1\sim\nu$, where $\psi_t$ satisfies $\psi_t(x)=\sum_y p^r_{t|0}(x|y)\psi_0(y)$, $\psi_1=\psi$. In other words, 
    \begin{align*}
        \ul_t(y,x)&=\lim_{h\to0}\frac{\Pr(Y_{t-h}=y|Y_t=x)-1_{x=y}}{h}.
    \end{align*}
\end{definition}

\begin{remark}
    By the Bayes formula, one can obtain its equivalent forward transition rate \citep{kelly2011reversibility}: for $y\ne x$,
    \begin{align*}
        \ur_t(y,x)&=\lim_{h\to0}\frac{1}{h}\Pr(Y_{t+h}=y|Y_t=x)=\lim_{h\to0}\frac{1}{h}\frac{\Pr(Y_t=x|Y_{t+h}=y)\Pr(Y_{t+h}=y)}{\Pr(Y_t=x)}\\
        &=\lim_{h\to0}\frac{1}{h}\frac{q^\psi_{t+h}(y)}{q^\psi_t(x)}(\ul_{t+h}(x,y)h+o(h))=\frac{q^\psi_{t}(y)}{q^\psi_t(x)}\ul_t(x,y)=\frac{(q^\psi_t/\psi_t)(y)}{(q^\psi_t/\psi_t)(x)}r_t(y,x).
    \end{align*}
\end{remark}

\subsubsection{Proof of Part (1) of \cref{thm:cvg}}
\begin{proof}
Let $(Y_t)_{t\in[0,1]}\sim q^{\varphih^{(k-1)}_1}=:q$. By \cref{eq:kfe},
\begin{align*}
    \partial_tq_t(x)&=\sum_y q_t(y)\ur_t(x,y)=\sum_{y\ne x}(q_t(y)\ur_t(x,y)-q_t(x)\ur_t(y,x))\\
    &=\sum_{y\ne x}\Big(\frac{(q_t/\psi_t)(x)}{(q_t/\psi_t)(y)}r_t(x,y)q_t(y)-\frac{(q_t/\psi_t)(y)}{(q_t/\psi_t)(x)}r_t(y,x)q_t(x)\Big),\\
    \implies\partial_t\log q_t(x)&=\sum_{y\ne x}\Big(\frac{\psi_t(y)}{\psi_t(x)}r_t(x,y)-\frac{(q_t/\psi_t)(y)}{(q_t/\psi_t)(x)}r_t(y,x)\Big).
\end{align*}

On the other hand, using \cref{eq:kfe} again,
\begin{align*}
    \partial_t\psi_t(x)&=\sum_y\partial_tp^r_{t|0}(x|y)\psi_0(y)=\sum_y\sum_z r_t(x,z)p^r_{t|0}(z|y)\psi_0(y)=\sum_z r_t(x,z)\psi_t(z)\\
    &=\sum_{y\ne x}(r_t(x,y)\psi_t(y)-r_t(y,x)\psi_t(x)),\\
    \implies\partial_t\log \psi_t(x)&=\sum_{y\ne x}\Big(r_t(x,y)\frac{\psi_t(y)}{\psi_t(x)}-r_t(y,x)\Big).
\end{align*}

Now we compute $\KL(p^u\|q)$ where $p^u$ is the path measure of a CTMC $(X_t)_{t\in[0,1]}$ induced by transition rate $u_t$ and initial distribution $\mu$:
\begin{align*}
    \KL(p^u\|q)&=\KL(\mu\|q_0)+\E_{p^u(X)}\int_0^1\sum_{y\ne X_t}\Big(u_t\log\frac{u_t}{\ur_t}+ \ur_t - u_t\Big)(y,X_t)\d t\\
    &=\KL(\mu\|q_0)+\E_{p^u(X)}\int_0^1\sum_{y\ne X_t}\Big[
        \Big(u_t\log\frac{u_t}{r_t}+ r_t-u_t\Big)(y,X_t)\\
    &+\Big(-u_t(y,X_t)\log\frac{(q_t/\psi_t)(y)}{(q_t/\psi_t)(X_t)}+\frac{(q_t/\psi_t)(y)}{(q_t/\psi_t)(X_t)}r_t(y,X_t)-r_t(y,X_t)\Big)
        \Big]\d t.
\end{align*}
The second term is $\KL(p^u\|p^r)$. To deal with the third term, we leverage \cref{lem:ctmc_jump_expectation}:
\begin{align*}
    \E_{p^u(X)}\log\frac{(\psi_1/q_1)(X_1)}{(\psi_0/q_0)(X_0)}&=\E_{p^u(X)}\Big[\int_0^1(\partial_t\log\psi_t(X_t)-\partial_t\log q_t(X_t))\d t+\sum_{t:X_{t_-}\ne X_t}\log\frac{(\psi_t/q_t)(X_t)}{(\psi_t/q_t)(X_{t_-})}\Big]\\
    &=\E_{p^u(X)}\int_0^1\sum_{y\ne X_t}\Big(\frac{(q_t/\psi_t)(y)}{(q_t/\psi_t)(X_t)}r_t(y,X_t)-r_t(y,X_t)\Big)\d t\\
    &+\E_{p^u(X)}\int_0^1\sum_{y\ne X_t}u_t(y,X_t)\log\frac{(\psi_t/q_t)(y)}{(\psi_t/q_t)(X_t)}\d t.
\end{align*}
Therefore, we conclude that
\begin{align*}
    \KL(p^u\|q)&=\KL(p^u\|p^r)+\E_{p^u(X)}\log\frac{(\psi_1/q_1)(X_1)}{(\psi_0/q_0)(X_0)}+\const\\
    &=\KL(p^u\|p^r)+\E_{p^u(X)}\log\frac{\psi_1}{q_1}(X_1)+\const\\
    &=\KL(p^u\|p^r)+\E_{p^u(X)}\log\frac{\varphih_1^{(k-1)}}{\nu}(X_1)+\const,
\end{align*}
where $\const$ does not depend on $u$. Therefore, this is an SOC problem with terminal cost $g\gets\log\frac{\varphih_1^{(k-1)}}{\nu}$. Let the optimal path measure to this SOC problem be $p^{(k)}$ and, by relating this SOC problem with an equivalent SB problem through \cref{thm:soc_to_sb}, let the SB potentials to this problem be $(\varphi^{(k)}_t,\varphih^{(k)}_t)$. Then, $\varphi^{(k)}_1=\e^{-g}=\frac{\nu}{\varphih^{(k-1)}_1}$ by \cref{thm:soc_to_sb}.
From \cref{app:theory_sb} and similar argument as \cref{eq:varphi_ratio_tm}, we can leverage the additive noise and obtain
\begin{align*}
    \frac{\varphi^{(k)}_t(y)}{\varphi^{(k)}_t(x)}&=\E_{p^{(k)}_{1|t}(x_1|x)}\frac{\varphi^{(k)}_1(x_1+y-x)}{\varphi^{(k)}_1(x_1)}=\E_{p^{(k)}_{1|t}(x_1|x)}\frac{(\nu/\varphih^{(k-1)}_1)(x_1+y-x)}{(\nu/\varphih^{(k-1)}_1)(x_1)}.
\end{align*}

On the other hand, from the property of Bregman divergence, the unique fixed-point of \cref{eq:loss_varphi_tsm}, $\varPhi^{(k)}$, satisfies
\begin{align*}
    \varPhi^{(k)}_t(x)_{d,n}&=\E_{p^{r\varPhi^{(k)}}_{1|t}(x_1|x)}\frac{(\nu/\varphih^{(k-1)}_1)(x^{d\gets x_1^d+n-x^d})}{(\nu/\varphih^{(k-1)}_1)(x)}.
\end{align*}

Thus, we conclude that $\varPhi^{(k)}_t(x)_{d,n}=\frac{\varphi^{(k)}_t(x^{d\gets n})}{\varphi^{(k)}_t(x)}$.
\end{proof}

\subsubsection{Proof of Part (2) of \cref{thm:cvg}}
\begin{proof}
By similar arguments using Bayes formula, we can write $p^{r\varPhi^{(k)}}=:p$ as a backward CTMC initialized at $p_1$ with backward transition rate 
$$u^{\gets(k)}_t(y,x)=\frac{(p^{(k)}_t/\varphi^{(k)}_t)(y)}{(p^{(k)}_t/\varphi^{(k)}_t)(x)}r_t(x,y)=\frac{\varphih^{(k)}_t(y)}{\varphih^{(k)}_t(x)}r_t(x,y),$$
where $\varphi^{(k)}_t$ and $\varphih^{(k)}_t$ are the SB potentials to the SOC problem in the proof of the first part. Then, for any backward CTMC $(Y_t)_{t\in[0,1]}\sim q$ initialized at $Y_1\sim\nu$ with backward transition rate $\ul_t$, we have
\begin{align*}
    \KL(p\|q)&=\KL(p_1\|\nu)+\E_{X\sim p}\int_0^1\sum_{y\ne X_t}\Big(u^{\gets(k)}_t\log\frac{u^{\gets(k)}_t}{\ul_t}+\ul_t-u^{\gets(k)}_t\Big)(y,X_t)\d t.
\end{align*}
Therefore, it is obvious that the optimal $\ul_t(y,x)$ is equal to $u^{\gets(k)}_t(y,x)=\frac{\varphih^{(k)}_t(y)}{\varphih^{(k)}_t(x)}r_t(x,y)$, i.e., the optimal $q$ is $q^{\varphih^{(k)}_1}$ (\cref{def:cvg_q_psi}).
By similar arguments as in the proof of \cref{eq:hatvarphi_kfe}, we have $\varphih_t(x)=\sum_yp^r_{t|0}(x|y)\varphih_0(y)$, and hence by definition the optimal $q$ to the backward half bridge problem is $q^{\varphih^{(k)}_1}$. On the other hand, by similar arguments as \cref{eq:hatvarphi_ratio_dm,eq:hatvarphi_ratio_tm},
\begin{align*}
    \frac{\varphih^{(k)}_1(z)}{\varphih^{(k)}_1(y)}=\E_{p^{(k)}_{t|1}(x|y)}\frac{p^r_{1|t}(z|x)}{p^r_{1|t}(y|x)}=\E_{p^{(k)}_{t|1}(x|y)}\frac{\varphih^{(k)}_0(x-y+z)}{\varphih^{(k)}_0(x)}=\frac{(\mu/\varphi^{(k)}_0)(x-y+z)}{(\mu/\varphi^{(k)}_0)(x)}.
\end{align*}
On the other hand, using the property of Bregman divergence, the unique fixed-point of \cref{eq:loss_hatvarphi_dsm} satisfies
\begin{align*}
    \frac{\varphih^{(k)}_1(x^{d\gets n})}{\varphih^{(k)}_1(x)}=\E_{p^{(k)}_{t|1}(x_0|x)}\frac{p^r_{1|t}(x_1^{d\gets n}|x)}{p^r_{1|t}(x_1|x)},
\end{align*}
while the unique fixed-point of \cref{eq:loss_hatvarphi_tsm} satisfies
\begin{align*}
    \frac{\varphih^{(k)}_1(x^{d\gets n})}{\varphih^{(k)}_1(x)}=\E_{p^{(k)}_{t|1}(x_0|x)}\frac{(\mu/\varphi^{(k)}_0)(x_0^{d\gets x_0^d+n-x_1^d})}{(\mu/\varphi^{(k)}_0)(x_0)}.
\end{align*}
By comparing the three equations above, the proof is complete.
\end{proof}

\subsection{Other Omitted Results and Proofs}
\label{app:theory_misc}

\begin{proposition}
    Assume $\mu=\punif$. Using the reference transition rate \cref{eq:ref_transition_rate}, initializing the controller to be one (i.e., let $\varPhi^{(0)}_t(x)_{d,n}=1$ for all $n\ne x^d$) and alternatively training the corrector and the controller is equivalent to initializing the corrector to be one (i.e., let $\varPhih^{(0)}(x)_{d,n}=1$ for all $n\ne x^d$) and alternatively training the controller and the corrector. 
    \label{prop:networks_init}
\end{proposition}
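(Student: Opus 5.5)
The plan is to show that the first \emph{informative} iterate is identical under either initialization. That reduces to one fact: under $\mu=\punif$ and the uniform rate \cref{eq:ref_transition_rate}, the all-one controller and the all-one corrector are \emph{paired}. Running one corrector step from the all-one controller must output the all-one corrector, and conversely.

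First, suppose the controller starts at $\varPhi^{(0)}\equiv1$. The induced rate $r\varPhi^{(0)}$ equals $r$, so $p^{\sg(r\varPhi^{(0)})}=p^r$. The corrector step then solves the backward half bridge of \cref{thm:cvg}(2) with the forward path measure taken to be $p^r$. Its fixed point satisfies
$\varPhih(x_1)_{d,n}=\E_{p^r_{0|1}(x_0|x_1)}\frac{\varphih_0(x_0^{d\gets x_0^d+n-x_1^d})}{\varphih_0(x_0)}$ by \cref{eq:loss_hatvarphi_tsm}. Equivalently, it satisfies the DM form with the identity \cref{eq:hatvarphi_ratio_dm} applied to $p^r$ in place of $p^\star$.

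For $p^r$, the SB potentials are $\varphi\equiv1$ and $\varphih_t=p^r_t$. Since $\mu=\punif$ and $r$ leaves $\punif$ invariant (detailed balance, \cref{app:theory_p_ref}), we get $\varphih_t=p^r_t=\punif$ for all $t$. Hence every ratio $\varphih_1(x^{d\gets n})/\varphih_1(x)=1$, and the DM target averages to $\sum_x p^r_{t|1}(x|y)\,p^r_{1|t}(z|x)/p^r_{1|t}(y|x)=p^r_1(z)/p^r_1(y)=1$. Either way, the output is $\varPhih^{(0')}\equiv1$.

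Second, the next controller step \cref{eq:loss_varphi_tsm} uses the target $\varphi_1$-ratio $=\nu$-ratio$/\varPhih$, which is exactly the first controller step of the scheme initialized with $\varPhih^{(0)}\equiv1$. Before that step is taken, though, the sampling measures in the two schemes differ: one has $p^{\sg(r\varPhi)}$ with the current $\varPhi$, the other has $p^r$. This is harmless because the argmin is characterized as the unique fixed point, namely the forward half bridge against $q^{\varphih^{(k-1)}_1}$ from \cref{thm:cvg}(1). That fixed point depends only on $\varphih^{(k-1)}_1$, not on the initial sampling measure. By induction, all subsequent iterates coincide: the sequence (controller-first from $\varPhi^{(0)}\equiv1$) equals (corrector-first from $\varPhih^{(0)}\equiv1$), shifted by one half step.

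I expect the main obstacle to be making precise that "$\varPhih\equiv1$" corresponds to $q^{\psi}$ with $\psi=\punif$ up to a constant. The ratios determine $\varphih_1$ only up to scaling, and one must check that the propagated $\psi_t=\sum_y p^r_{t|0}(\cdot|y)\psi_0(y)$ is consistent, i.e.\ that $\psi_1$ constant forces a constant $\psi_0$. With $\mu=\punif$ the backward half bridge solution has $\varphih_0=\mu/\varphi_0=\punif$, so this is consistent. I would verify this via \cref{thm:soc_to_sb} with $g=\log(\punif/\nu)+\const$. I would also need to note that \cref{eq:loss_hatvarphi_tsm} requires $\mu$ fully supported, which holds for $\punif$.
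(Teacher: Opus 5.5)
Your proposal is correct and follows the paper's proof. The paper makes the same two observations: the all-one controller means sampling from $p^r$, and the resulting DM corrector target is $\E_{p^r_{t|1}(x|x_1)}\frac{p^r_{1|t}(x_1^{d\gets n}|x)}{p^r_{1|t}(x_1|x)}=\frac{p^r_1(x_1^{d\gets n})}{p^r_1(x_1)}=1$ because $\punif$ is invariant. Your fixed-point argument for the next controller step only makes explicit what the paper leaves implicit. The ``main obstacle'' you flag is not a real one: the controller step sees $\varphih_1$ only through $\log(\varphih_1/\nu)$, up to an additive constant. Also, constant $\psi_1$ does not force constant $\psi_0$ in the memoryless case, but you never need that.
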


\begin{proof}
Suppose we let $\varPhi^{(0)}_t(x)_{d,n}=1$ for all $n\ne x^d$. Then the optimal corrector can be computed as follows:
\begin{align*}
    \varPhih^{(1)}(x_1)_{d,n}&=\E_{p^r_{t|1}(x|x_1)}\frac{p^r_{1|t}(x_1^{d\gets n}|x)}{p^r_{1|t}(x_1|x)}=\sum_x\frac{p^r_{t|1}(x|x_1)}{p^r_{1|t}(x_1|x)}p^r_{1|t}(x_1^{d\gets n}|x)\\
    &=\sum_x \frac{p^r_t(x)}{p^r_1(x_1)}p^r_{1|t}(x_1^{d\gets n}|x)=\frac{1}{p^r_1(x_1)}\sum_x p^r_t(x)p^r_{1|t}(x_1^{d\gets n}|x)=\frac{p^r_1(x_1^{d\gets n})}{p^r_1(x_1)}.
\end{align*}
Thus, under \cref{eq:ref_transition_rate}, when $p^r_0=\mu=\punif$, one has $p^r_1=\punif$, so $\varPhih^{(1)}(x_1)_{d,n}=1$ for all $n\ne x_1^d$.
\end{proof}

\begin{remark}
    We remark that this choice coincides the optimal corrector when assuming $p^r$ is memoryless, since under memoryless condition $\varphih_1\propto p^r_1$.
\end{remark}

\begin{lemma}
    For a CTMC $(X_t)_{t\in[0,1]}\sim p^u$ with transition rate $u_t$ and any function $g:[0,1]\times\cX\times\cX\to\R$, we have
    \begin{align*}
        \E_{p^u(X)}\sum_{t:X_{t_-}\ne X_t}g(t,X_{t_-},X_t)&=\E_{p^u(X)}\int_0^1\sum_{y\ne X_t}g(t,X_t,y)u_t(y,X_t)\d t.
    \end{align*}
    \label{lem:ctmc_jump_expectation}    
\end{lemma}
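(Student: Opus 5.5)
The plan is to prove the identity pathwise, via the standard compensator (Lévy system) of the jump process, using only the definition of the transition rate and a telescoping argument over a fine partition. Fix a partition $0=t_0<t_1<\dots<t_K=1$ with mesh $h\to0$. For a càdlàg path of a finite-state CTMC there are a.s.\ finitely many jumps, and each jump time lies in exactly one interval $(t_i,t_{i+1}]$. For fine enough partitions, each interval contains at most one jump. Therefore
\begin{align*}
    \sum_{t:X_{t_-}\ne X_t}g(t,X_{t_-},X_t)=\lim_{h\to0}\sum_{i=0}^{K-1}g(t_i,X_{t_i},X_{t_{i+1}})1_{X_{t_{i+1}}\ne X_{t_i}}
\end{align*}
almost surely. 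Here I assume $g$ is continuous in $t$, or at least that this Riemann-type approximation is valid, e.g.\ for piecewise continuous $g$.

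The next step is to take expectations term by term and condition on $X_{t_i}$ using the Markov property:
\begin{align*}
    \E\,g(t_i,X_{t_i},X_{t_{i+1}})1_{X_{t_{i+1}}\ne X_{t_i}}=\E\sum_{y\ne X_{t_i}}g(t_i,X_{t_i},y)\,p^u_{t_{i+1}|t_i}(y|X_{t_i}).
\end{align*}
By the definition of the transition rate, $p^u_{t_{i+1}|t_i}(y|x)=u_{t_i}(y,x)h+o(h)$ for $y\ne x$. Summing over $i$ then gives a Riemann sum for $\int_0^1\E\sum_{y\ne X_t}g(t,X_t,y)u_t(y,X_t)\,\d t$, and the $o(h)$ errors accumulate to $K\cdot o(h)=o(1)$. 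Convergence of the Riemann sum to the integral uses the fact that $t\mapsto p^u_t$ is continuous, which follows from the KFE in \cref{prop:kol_eq}, together with continuity of $u_t$ and $g$ in $t$.

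The main obstacle is justifying the interchange of the limit $h\to0$ with the expectation on the left-hand side. I would handle it by dominated convergence. Since $\cX$ is finite and the rates are bounded, say $\sup_{t,x}|u_t(x,x)|\le\Lambda$, the number of jumps $J$ is stochastically dominated by a Poisson random variable with mean $\Lambda$, so $\E J<\infty$. The partition sums are bounded by $\|g\|_\infty\,J$ pathwise: each interval contributes at most $\|g\|_\infty$, and only intervals containing at least one jump contribute. This gives an integrable dominating function.

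For the $o(h)$ terms to be uniform in $i$ and $x$, I would use the bound $|p^u_{t+h|t}(y|x)-u_t(y,x)h|\le Ch^2+h\,\omega(h)$. This follows from the KFE with bounded rates, where $\omega$ is the modulus of continuity of $u$ in $t$. Combining these pieces yields the claimed equality.
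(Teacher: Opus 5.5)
Your proposal is correct and is essentially the paper's argument: discretize $[0,1]$, replace the jump sum by the grid increments $g(t_i,X_{t_i},X_{t_{i+1}})1_{X_{t_{i+1}}\ne X_{t_i}}$, condition on $X_{t_i}$, use $p^u_{t+h|t}(y|x)=u_t(y,x)h+o(h)$, and pass to the integral as a Riemann sum (despite your opening mention of a compensator, you never use that machinery). What you add---domination of the partition sums by $\|g\|_\infty J$ with $\E J\le\Lambda$, and a uniform short-time error bound---is what the paper folds into its $O(\Delta t)$ terms, and the continuity of $g$ and $u$ in $t$ that you assume is implicitly needed by the paper's proof as well.
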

\begin{proof}
Consider the time-discretization:
$\Delta t=\frac{1}{N}$ and $t_n=n\Delta t$. Then,
\begin{align*}
    \E_{p^u(X)}\sum_{t:X_{t_-}\ne X_t}g(t,X_{t_-},X_t)&=\E_{p^u(X)}\sum_{n=0}^{N-1}1_{X_{t_n} \ne X_{t_{n+1}}} g(t_n, X_{t_n}, X_{t_{n+1}})+O(\Delta t)\\
    &=\sum_{n=0}^{N-1}\sum_{x,y}p^u_{t_n}(x)p^u_{t_{n+1}|t_n}(y|x)1_{x\ne y}g(t_n,x,y)+O(\Delta t)\\
    &=\sum_{n=0}^{N-1}\sum_{x}p^u_{t_n}(x)\sum_{y\ne x}u_{t_n}(y,x)g(t_n,x,y)\Delta t+O(\Delta t)\\
    &=\E_{p^u(X)}\int_0^1\sum_{y\ne X_t}g(t,X_t,y)u_t(y,X_t)\d t.
\end{align*}
\end{proof}

\section{Experimental Details and Additional Results}
\label{app:exp}

\subsection{Target Distributions}
\label{app:exp_target_dist}
We consider Ising and Potts models on a square lattice $\Lambda=[L]^2$ with $L$ sites per dimension. We write $i\sim j$ if $i,j\in\Lambda$ are adjacent on the lattice. For simplicity, we impose periodic boundary conditions in both the horizontal and vertical directions. Both target distributions can be written in the form of 
$$\nu(x)=\frac1Z\e^{-\beta E(x)},$$
where $E$ is the energy function (Hamiltonian), $\beta>0$ is the inverse temperature, and $Z=\sum_{x\in\cX}\e^{-\beta E(x)}$ is the partition function.

For Ising model with interaction parameter $J\in\R$ and external magnetic field $h\in\R$, the energy is
\begin{align*}
    E_\mathrm{Ising}(x)=-J\sum_{i\sim j}x^ix^j-h\sum_ix^i,~x\in\{\pm1\}^\Lambda.
\end{align*}
We keep $J=1$ and $h=0$ through out the experiments. For Potts model with $N$ states and interaction parameter $J\in\R$, the energy is
\begin{align*}
    H_\mathrm{Potts}(x)=-J\sum_{i\sim j}1_{x^i=x^j},~x\in[N]^\Lambda.
\end{align*}
We keep $J=1$ through out the experiments.
Finally, we can compute the discrete score of these two distributions as follows:
\begin{align*}
    \frac{\nu(x^{i\gets n})}{\nu(x)}=
    \begin{cases}
        \exp\ro{\beta(n-x^i)\ro{J\sum_{j:~j\sim i}x^j+h}},~\forall n\in\{\pm1\},&\text{Ising};\\
        \exp\ro{\beta J\ro{\sum_{j:~j\sim i}\ro{1_{x^j=n}-1_{x^j=x^i}}}},~\forall n\in[N],&\text{Potts}.
    \end{cases}
\end{align*}

\subsection{Implementation Details}
\label{app:exp_details}

\paragraph{Model Backbone}
We follow the implementation of MDNS \citep{zhu2025mdns} to use vision transformers (ViT, \citet{dosovitskiy2021an}) to serve as the backbone for the discrete diffusion model. In particular, we use the DeiT (Data-efficient image Transformers) framework \citep{touvron2021training} with 2-dimensional rotary position embedding \citep{heo2025rotary}, which better captures the 2-dimensional spatial structure of the Ising and Potts models. While MDNS's model only requires the position input $x\in[N]^D$, in our learning objectives, the controller also receives time input $t\in[0,1]$. Hence, we adopt the adaptive layer normalization (adaLN) mechanism in the DiT \citep{peebles2023scalable} and SiT \citep{ma2024sit} to deal with the time conditioning.
For learning $24\times24$ Ising model and $16\times16$ Potts model with $4$ states, we use a model with $6$ blocks, hidden dimensions $32$ and $4$ heads. The total number of parameters in the model is around $144\mathrm{k}$ (controller, with time conditioning) or $95\mathrm{k}$ (corrector, without time conditioning).

\paragraph{Training}
Among all the training tasks, we use the AdamW optimizer \citep{loshchilov2018decoupled} with a constant learning rate of $1\e-3$ ($\betah$) and $5\e-4$ ($\betac,\betal$). We always use exponential moving average (EMA) to stabilize the training, with a decay rate of $0.9999$. 
All experiments are trained on an NVIDIA RTX A6000 GPU.
For all distributions, we always use the generalized KL divergence\footref{fn:gen_kl} as the Bregman divergence, always use uniform time weight $w_t\equiv1$, and adpot the modified log-linear noise schedule with $\gamma=1$ and $\alpha=0.5$. We train for $5$ stages with $500$ steps for controller and $250$ steps for corrector. We keep a running buffer of size $512$ ($\betah$) or $4096$ ($\betac,\betal$), and resampling a batch of $128$ pairs of $(x_0,x_1)$ using the current controller every $20$ ($\betah$) or $10$ ($\betac,\betal$) gradient updates.
Finally, for $\betah$, we use $\mu=\punif$ as an initialization and adopt AM loss \cref{eq:loss_hatvarphi_tsm} for the corrector; for $\betac,\betal$, we observe that uniform initialization does not lead to good performance, and thus initialize $\mu$ as the zero-temperature distribution (i.e., $\unif\{\pm\one\}$ for Ising and $\unif\{\one,2\one,...,N\one\}$ for Potts, where $\one$ is the all-one vector). We use DM loss \cref{eq:loss_hatvarphi_dsm} for training the corrector as $\mu$ is not positive everywhere.

\paragraph{Generating Baseline and Ground Truth Samples}
We follow the implementation in existing literature \citep{guo2026proximal} for baselines.
For the learning-based baseline, we train LEAPS and MDNS on $24\times24$ Ising model and $16\times16$ Potts model for up to $150\mathrm{k}$ steps for each temperature. For MDNS, we apply the warm-up strategy \citep{zhu2025mdns} to initialize the training under $\betac$ from the pretrained checkpoint for $\betah$, and initialize the training under $\betal$ from the pretrained checkpoint for $\betac$.
The Metropolis-Hastings (MH) and Swendsen-Wang (SW) sampling exactly follows the implementation in \citet{guo2026proximal} that ensures sufficient mixing.

\paragraph{Evaluation}
We follow the procedure detailed in \citet[App. D.3, E.2]{zhu2025mdns} for the computation of \textbf{magnetization} and \textbf{2-point correlation error}.
To compute the empirical \textbf{energy Wasserstein-2 distance} for two set of samples $\{x_i\}$ and $\{y_j\}$, we obtain the energies of the datasets $\cE_1=\{E(x_i)\}$ and $\cE_2=\{E(y_j)\}$, and use the function $\mathtt{np.sqrt(ot.wasserstein\_1d(\cE_1,~\cE_2,~p=2))}$ from the POT package \citep{pot}.

\newcommand{\qh}{\widehat{q}}
\paragraph{Effective Sample Size (ESS)}
We follow the practice in existing literature on neural samples \citep{zhang2022path,holderrieth2025leaps,zhu2025mdns}.
For a batch of i.i.d. samples $\{x_i\}_{i\in[B]}$ from $p$, suppose we associate each sample $x_i$ with a weight $w_i=\frac{\qh(x_i)}{p(x_i)}$, where $\qh$ is the unnormalized probability density / mass function of a probability distribution $q$, then the (normalized) \textbf{effective sample size (ESS)} of the samples with respect to $q$ is defined as $\frac{(\frac1B\sum_iw_i)^2}{\frac1B\sum_iw_i^2}\in\sq{\frac1B,1}$.

\paragraph{Ablation Studies on TM v.s. DM (\cref{fig:abl_tm_dm})}
We use a model with $4$ blocks, hidden dimension $32$ and $4$ heads. We use a batch size of $256$ and in \cref{eq:loss_varphi_dsm,eq:loss_varphi_tsm}, we sample $32$ $t$'s following $\unif[0,1]$ for each pair of $(x_0,x_1)$. All three cases are trained under the same random seed for $5000$ steps, with a buffer size of $1024$ and resampling frequency $20$ with generation NFE $100$.

\paragraph{Ablation Studies on Noise Schedules (\cref{fig:abl_loglin,fig:abl_const})}
We use a model with $4$ blocks, hidden dimension $32$ and $4$ heads. We use a batch size of $64$ and in \cref{eq:loss_varphi_tsm}, we sample $8$ $t$'s following $\unif[0,1]$ for each pair of $(x_0,x_1)$. All runs are trained under the same random seed for $5$ stages with $200$ controller update steps and $200$ corrector update steps. The buffer size is $256$ and resampling frequency is $20$.

\subsection{Further Experimental Results}
\label{app:exp_further_res}

\paragraph{Ablation Study: Noise Schedule}
We provide further ablation study for the constant noise schedule $\gamma_t\equiv\gamma$ in \cref{fig:abl_const}. A similar trend happens as the modified log-linear schedule: the generated samples reach the best quality at around $\gamma\in[0.5,1]$.

\begin{figure}[ht]
    \centering
    \includegraphics[width=0.75\linewidth]{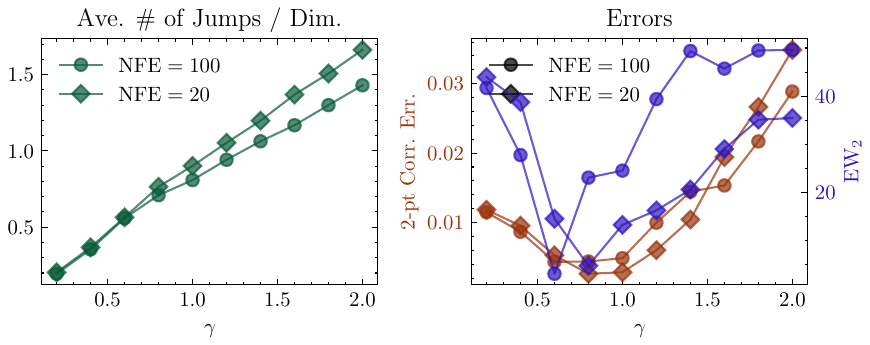}
    \caption{
    Ablation study of the hyperparameter $\gamma$ for the constant noise schedule $\gamma_t\equiv\gamma$ on Ising model with $L=24$ and $\betah=0.28$.
    NFE is the number of function evaluations during generation for both training and inference.
    \textit{Left}: average number of jumps for each dimension during generation.
    \textit{Right}: 2-point correlation error and energy Wasserstein-2 distance to ground-truth samples drawn from SW algorithm.
    }
    \label{fig:abl_const}
\end{figure}

\end{document}